\documentclass[11pt]{article}

\usepackage[T1]{fontenc}
\usepackage{lmodern}

\usepackage{amsmath, amssymb, amsthm}
\usepackage[margin=2.5cm]{geometry}
\usepackage{color, graphicx, float}
\usepackage{tikz}
\usepackage{booktabs}
\usepackage[justification=centering]{caption}
\usepackage[ruled]{algorithm2e}
\usepackage{cite}
\usepackage{bm}

\usepackage{etoolbox}
\makeatletter
\patchcmd{\@maketitle}{\LARGE \@title}{\LARGE\bfseries\@title}{}{}

\renewcommand{\@seccntformat}[1]{\csname the#1\endcsname.\quad}
\makeatother

\definecolor{darkblue}{rgb}{0,0,.5}

\usepackage{hyperref}
\hypersetup{
	colorlinks=true,		
	linkcolor=darkblue,		
	citecolor=darkblue,		
	urlcolor=darkblue 		
}

\makeatletter
\def\th@plain{%
	\thm@notefont{}
	\itshape 
}
\def\th@definition{%
	\thm@notefont{}
	\normalfont 
}

\renewenvironment{proof}[1][\proofname]{\par
	\normalfont
	\topsep0\p@\@plus3\p@ \trivlist
	\item[\hskip\labelsep\itshape
	#1\@addpunct{.}]\ignorespaces
}{%
	\qed\endtrivlist
}
\makeatother

\newtheorem{theorem}{Theorem}[section]
\newtheorem{lemma}{Lemma}[section]
\newtheorem{corollary}{Corollary}[section]

\newtheorem{assumption}{Assumption}[section]
\theoremstyle{definition}
\newtheorem{definition}{Definition}[section]

\newtheorem{remark}{Remark}[section]

\renewcommand\theenumi{(\roman{enumi})}

\renewcommand{\labelenumi}{\rm (\roman{enumi})}

\usepackage[shortlabels]{enumitem}

\AtBeginDocument{%
  \setlength{\abovedisplayskip}{4pt}
  \setlength{\belowdisplayskip}{4pt}
  \setlength{\abovedisplayshortskip}{2pt}
  \setlength{\belowdisplayshortskip}{2pt}
}

\allowdisplaybreaks

\newcommand*{\circled}[1]{\lower.7ex\hbox{\tikz\draw (0pt, 0pt)%
		circle (.5em) node {\makebox[1em][c]{\small #1}};}}

\begin{document}

\title{A Momentum-Based Variance-Reduced Algorithm for Federated Multiobjective Optimization}

\author{
Yong Zhao\thanks{College of Mathematics and Statistics, Chongqing Jiaotong University, Chongqing 400074, China.} 
\thanks{Key Laboratory of Complex Systems Optimization and Intelligent Control of Chongqing Municipal Education Commission,  Chongqing 400074, China. Email: zhaoyongty@126.com.},
~
Chunlin You\thanks{College of Mathematics and Statistics, Chongqing Jiaotong University, Chongqing 400074, China. E-mail:1332967558@qq.com.},
~
Minh N. Dao\thanks{Corresponding author. School of Science, RMIT University, Melbourne, VIC 3000, Australia. E-mail: minh.dao@rmit.edu.au}, 
~
and Zai-Yun Peng\thanks{School of Mathematics, Yunnan Normal University, 650092 Kunming, Yunnan, China.}
\thanks{Yunnan Key Laboratory of Modern Analytical Mathematics and Applications, Kunming, 650500, People’s Republic of China. E-mail: pengzaiyun@126.com.}
}
	
\date{August 24, 2026}	

\maketitle

\begin{abstract}
Federated learning has traditionally been formulated as a single-objective optimization problem, primarily focused on maximizing model utility. In real-world applications, however, machine learning models often need to optimize multiple and potentially conflicting objectives simultaneously. This motivates federated multiobjective optimization (FMOO), which provides a natural framework for jointly handling multiple task-specific objectives in federated learning. In this paper, we propose a momentum-based variance-reduced algorithm for federated multiobjective optimization. The method incorporates a momentum-driven gradient estimator into the local updates to reduce the variance of stochastic updates, leading to an improved convergence rate. We establish theoretical guarantees showing that the expected Pareto stationarity measure of a randomly selected output iterate decays at a rate of $\mathcal{O}(T^{-2/3})$, improving upon the $\mathcal{O}(T^{-1/2})$ rates established for existing methods such as FSMGDA and FedCMOO. Numerical experiments on federated multiobjective optimization benchmarks demonstrate the effectiveness and competitive performance of the proposed algorithm.
\end{abstract}

\noindent{\bfseries Keywords:} 
Federated multiobjective optimization,
nonconvex optimization,  
momentum-based variance reduction, 
communication complexity,
stochastic multi-gradient methods.

\noindent{\bf Mathematics Subject Classification (MSC 2020):}
90C29,	
90C15,	
90C26.	

\section{Introduction}

In recent years, multiobjective optimization (MOO) has emerged as a foundational paradigm for multi-agent and multi-task learning, with broad applications spanning multi-task neural network training \cite{Sener2018}, recommender systems \cite{Zhou2023}, learning-to-rank tasks \cite{Mahapatra2023, Momma2020}, and economic or transportation scheduling \cite{Alshabibi2025, Moghaddam2021, Yang2021}. Unlike single-objective optimization, MOO seeks to simultaneously optimize multiple interrelated and often conflicting objectives. Consequently, it is typically impossible to find a single solution that minimizes all objectives at once. Therefore, MOO algorithms instead aim to find a \emph{Pareto optimal}/\emph{stationary} solution, where any further improvement in one loss necessarily worsens another. Existing MOO algorithms are predominantly designed for centralized learning settings, where data from all participants must be aggregated on a central server for joint optimization. This centralized paradigm not only limits scalability in distributed multi-agent systems but also conflicts with increasingly stringent privacy and data localization requirements, particularly in sensitive domains such as healthcare and finance.

Federated learning (FL) has emerged as a transformative distributed machine learning paradigm that enables multiple participants to collaboratively train models without sharing raw data, thereby addressing critical challenges related to data privacy and data silos \cite{McMahan2017, Kairouz2021}. Since the seminal work by McMahan et al. \cite{McMahan2017}, which introduced FedAvg as the first representative FL algorithm, substantial research efforts have been devoted to understanding and extending this approach. FedAvg follows a client–server architecture: in each communication round, participating clients perform several steps of local stochastic gradient descent (SGD) on their private datasets, after which the server aggregates the uploaded local model parameters via weighted averaging to update the global model. Under standard assumptions, it has been shown that FedAvg achieves a sublinear convergence rate when decaying learning rates are employed \cite{Stich2019, Yu2018}. However, despite SGD's statistically optimal convergence rate, its performance often depends on careful tuning of the learning rate, and this sensitivity is also inherited by FedAvg.
To address these challenges, a wide range of methodological innovations have been proposed. Regularization-based methods \cite{Li2020a, Li2021, Dinh2020, Mansour2020, Sun2023} modify the local objective by introducing a proximal term that penalizes deviations from the global model, thereby promoting consistency across clients and mitigating client drift. Laplacian regularization is also employed in federated multitask learning to exploit relationships among client models \cite{Dinh2024}. Momentum-based techniques \cite{Liu2020, Huo2020, Xu2021, Sun2024} accumulate historical gradient information to correct update directions, accelerating convergence and reducing bias induced by non-IID data. In response to difficulties in learning rate tuning and unstable local optimization, recent studies \cite{Reddi2020} have integrated adaptive optimization algorithms—such as Yogi \cite{Zaheer2018}, Adagrad \cite{Ward2020} and  Adam —into the federated setting. These methods enable automatic adjustment of step sizes based on gradient statistics at both the local and global levels, providing convergence guarantees under suitable assumptions for both IID and non-IID settings. Variance reduction (VR) techniques \cite{Karimireddy2020, Liang2019, Murata2021, Gao2022, Cheng2024,Khanduri2021} further improve convergence stability by employing control variates or historical global gradient corrections to mitigate the client drift and variance caused by data heterogeneity. 
For more research progress on federated learning algorithms, please refer to \cite{Yang2025}.

Despite a rich body of literature on federated learning algorithms, most existing studies remain limited to single-objective optimization, where all clients jointly minimize one global loss function. However, in real-world applications, machine learning models often need to optimize multiple objectives simultaneously—such as improving model accuracy, and reducing training costs—which frequently conflict with one another. It is in this context that federated multiobjective optimization (FMOO) has emerged. By integrating the principles of federated learning and multiobjective optimization, it aims to achieve the synergistic optimization of multiple learning goals while preserving data privacy, thereby meeting the diverse demands of practical applications. Yang et al. \cite{Yang2024} propose two federated
multiobjective optimization (FMOO) algorithms called federated multi-gradient
descent averaging (FMGDA) and federated stochastic multi-gradient descent averaging
(FSMGDA) to solve FMOO, and establish the convergence of the two proposed algorithms under suitable conditions. Askin et al. \cite{Askin2024} introduce FedCMOO, a communication-efficient federated multiobjective optimization algorithm that improves convergence performance without scaling communication costs with the number of objectives. Furthermore, the effectiveness and advantages of the proposed method are empirically validated through extensive comparisons with FSMGDA. Hartmann et al. \cite{Hartmann2025} present the first comprehensive survey on the integration of multiobjective methods with federated learning, introducing a novel taxonomy to systematically classify existing works while offering insights into recent trends, open challenges, and future research directions.

Inspired by the work in \cite{Cheng2024,Yang2025,Yang2024,Askin2024,Khanduri2021,Fernando2024,Hartmann2025}, we propose FSMGDA-M-VR, a momentum-based variance-reduced algorithm for federated multiobjective optimization. The design of our method is motivated by two related but previously separate research directions. The first is momentum-based variance reduction in federated learning, where methods such as STEM \cite{Khanduri2021} improve the quality of stochastic descent directions and mitigate the effect of client-side sampling noise. The second is variance-reduced stochastic multiobjective optimization, where recent methods such as \cite{Fernando2024} show that tracking or momentum-type estimators can effectively reduce the bias and variance in multi-gradient construction. However, existing federated variance-reduction methods are mainly designed for single-objective optimization, while existing variance-reduced multiobjective methods are typically developed in centralized environments. Thus, they do not directly address the combined challenges of multiple conflicting objectives, heterogeneous clients, local stochastic updates, and server-client communication.

FSMGDA-M-VR addresses these challenges by reorganizing how objective-wise descent information and Pareto weights are computed in federated multiobjective training. Unlike FSMGDA \cite{Yang2024}, which aggregates accumulated stochastic gradients without explicit variance reduction, FSMGDA-M-VR constructs objective-wise momentum variance-reduced directions at each client. This reduces the stochastic error accumulated across local steps and provides more stable objective-wise information for server aggregation. It also differs from FedCMOO \cite{Askin2024}. In FedCMOO, the server first approximates the Gram matrix of the objective gradients, updates the task weights through projected gradient steps, and then sends these weights to clients for weighted local updates. In contrast, FSMGDA-M-VR does not compute the Pareto weights before local training. Instead, clients first maintain objective-wise variance-reduced directions, and the server computes the Pareto balancing weights only after aggregating these directions. Therefore, the common descent direction is constructed from variance-reduced multiobjective information rather than from a pre-local-update weight vector.

This design leads to a clear communication--convergence trade-off. FedCMOO enjoys the advantage that its per-round communication cost can be independent of the number of objectives, whereas FSMGDA-M-VR requires each client to transmit one direction per objective and hence its per-round communication cost scales linearly with the number of objectives. The benefit is that the server receives richer objective-wise and variance-reduced information, which helps construct a more stable Pareto common descent direction. In this sense, FSMGDA-M-VR can be viewed as a federated multiobjective adaptation of momentum-based variance reduction, where objective-wise local momentum tracking, federated aggregation, and adaptive Pareto direction construction are integrated into a single training protocol. The main technical contribution lies in the convergence analysis of this scheme under federated multiobjective stochastic updates, showing that the estimator error can be controlled across local steps and communication rounds. Furthermore, we establish a convergence rate of $\mathcal{O}(T^{-2/3})$, improving upon the $\mathcal{O}(T^{-1/2})$ rates of FSMGDA \cite{Yang2024} and FedCMOO \cite{Askin2024}. Extensive experiments further demonstrate the competitive performance of FSMGDA-M-VR over existing federated multiobjective optimization methods.

\section{Preliminaries}

FMOO aims at optimizing multiple objectives simultaneously:
\begin{equation}\label{1}
\min_{x\in \mathbb{R}^d}F(x):=\left(f_1(x),...,f_S(x)\right),
\end{equation}
where $x\in \mathbb{R}^d$ is the model parameter, and each $f_s$ represents the objective function of task $s\in[S]:=\{1,\dots,S\}$, defined by
\begin{equation*}
	f_s(x)=\frac{1}{M}\sum\limits_{i=1}^Mf_{s,i}(x),
\end{equation*}  
with 
\begin{equation*}
f_{s,i}(x)=\mathbb{E}_{\xi_i \sim \mathcal{D}_i}[F_s(x;\xi_i)],
\end{equation*}  
Here, $M$ denotes the number of clients, and the random variable $\xi_i$ represents a local data point available at client $i$. We consider a synchronous full-participation federated setting, where all clients participate in each communication round with equal weights. Client dropout, communication compression, and quantization are not considered in the present analysis.

Unlike single-objective optimization, in MOO there generally does not exist a single solution that simultaneously minimizes all objectives. Therefore, a more appropriate notion of optimality in MOO is Pareto optimality, which is formally defined as follows.

\begin{definition}[Pareto optimality] 
For any two solutions $x, y\in \mathbb{R}^{d}$, we say $x$ \emph{dominates} $y$ if and only if $f_s(x)\leq f_s(y), \forall s\in[S]$ and $f_s(x)<f_s(y), \exists s\in[S]$. A solution $x\in \mathbb{R}^{d}$ is \emph{Pareto optimal} if it is not dominated by any other solution. A solution $x\in \mathbb{R}^{d}$ is \emph{weakly Pareto optimal} if there does not exist a solution $y$ such that $f_s(y)< f_s(x), \forall s\in[S]$.
\end{definition}

Similar to single-objective non-convex optimization, finding a Pareto optimal solution in MOO can be computationally challenging, particularly in non-convex settings. Therefore, Pareto stationarity is commonly adopted as a practical first-order optimality criterion.
\begin{definition}[Pareto stationarity] 
\label{defi2.2}
A solution $x\in \mathbb{R}^{d}$ is said to be \emph{Pareto stationary} if there is no common descent direction $d\in \mathbb{R}^d$ such that $\nabla f_s(x)^\top d<0,\forall s\in[S]$.
\end{definition}

Gradient-based MOO algorithms typically seek a common 
descent direction $d\in\mathbb{R}^d$ such that 
$\nabla f_s(x)^{\top} d<0$ for all $s\in[S]$. If no such common descent direction 
exists at a given point, then according to Definition \ref{defi2.2}, $x$ is a Pareto stationary 
solution. To determine such a direction, the multi-gradient method constructs the
gradient matrix
$G(x):=[\nabla f_1(x),\dots,\nabla f_S(x)]\in\mathbb{R}^{d\times S}$
and identifies the optimal weight $\lambda^*$ by solving the quadratic optimization
problem
$\lambda^*(x)\in\arg\min_{\lambda\in\mathcal{C}}\|G(x)\lambda\|^2$,
where
$\mathcal{C}:=\{\lambda\in[0,1]^S:\sum_{s\in[S]}\lambda_s=1\}$.
It then defines the multi-gradient direction as
$g=G(x)\lambda^*=\sum_{s\in[S]}\lambda_s^*\nabla f_s(x)$.
If $\|g\|=0$, then $x$ is a Pareto stationary solution; otherwise,
$d=-g$ is a common descent direction, and the iteration is performed according to
$x\leftarrow x+\eta d$, or equivalently, $x\leftarrow x-\eta g$, where $\eta$ 
denotes the learning rate, until a Pareto stationary solution is reached. The SMGD method follows a similar procedure to MGD, with the main difference being that the exact gradient matrix $G(x)$ is replaced by its stochastic counterpart. Thus,
$\|g\|^2=\|G(x)\lambda^*\|^2$ can serve as a stationarity measure for assessing the convergence of non-convex MOO algorithms.

\begin{definition}[$\epsilon$-Pareto stationarity {\cite{Zhou2022}}] 
\label{defi2.3}
An iterate $x^t$ generated by a stochastic algorithm is said to be an \emph{$\epsilon$-Pareto stationary solution in expectation} if
\begin{equation*}
	\mathbb{E}\left[\mathcal{G}(x^t)\right] \leq \epsilon,
\end{equation*}
where
\begin{equation*}
	\mathcal{G}(x):=\min_{\lambda\in\mathcal{C}}
	\left\|\sum_{s=1}^{S}\lambda_s\nabla f_s(x)\right\|^2.
\end{equation*}
The expectation is taken with respect to all the randomness generated by the algorithm up to iteration $t$.
\end{definition}

To facilitate the subsequent convergence analysis, we introduce the following assumptions.
\begin{assumption}\label{A1}
\begin{enumerate}
\renewcommand\theenumi{(A\arabic{enumi})}
\renewcommand{\labelenumi}{\rm (A\arabic{enumi})}
\item
For each  $s \in [S]$,  $f_s$ is $L$-smooth, i.e., $\left\|\nabla f_s(x)-\nabla f_s(y)\right\| \leq L\|x-y\|$, for any $x, y \in \mathbb{R}^d$;
\item 
For each $s \in[S]$,  $i\in [M]$, and any realization $\xi_i$ drawn from $\mathcal{D}_i$, $F_s(\cdot; \xi_i)$ is $L$-smooth, i.e., $\|\nabla F_s(x ; \xi_{i})-\nabla F_s(y ; \xi_{i})\| \leq L\|x-y\|,\quad \forall\,x,y\in\mathbb{R}^d.$ 
\end{enumerate}
\end{assumption}

By Assumption \ref{A1} {\rm (A2)} and Jensen's inequality, each $f_{s,i}$ is also $L$-smooth. Indeed,
\begin{align*}
	\|\nabla f_{s,i}(x)-\nabla f_{s,i}(y)\|
	&=
	\left\|
	\mathbb{E}_{\xi_i}
	\left[
	\nabla F_s(x;\xi_i)-\nabla F_s(y;\xi_i)
	\right]
	\right\| \\
	&\leq
	\mathbb{E}_{\xi_i}
	\left[
	\|\nabla F_s(x;\xi_i)-\nabla F_s(y;\xi_i)\|
	\right] \\
	&\leq
	L\|x-y\|,
	\qquad \forall\,x,y.
\end{align*}

\begin{assumption}\label{A2}\rm There exists a constant $\sigma>0$ such that, for any
	$x\in\mathbb{R}^d$, $s\in[S]$, and $i\in[M]$,
	\[
	\mathbb{E}_{\xi_i\sim\mathcal{D}_i}
	\left[\nabla F_s(x;\xi_i)\right]
	=
	\nabla f_{s,i}(x),
	\]
	and
	\[
	\mathbb{E}_{\xi_i\sim\mathcal{D}_i}
	\left[
	\|\nabla F_s(x;\xi_i)-\nabla f_{s,i}(x)\|^2
	\right]
	\leq \sigma^2.
	\]
\end{assumption}

\section{FSMGDA-M-VR algorithm}

In what follows, we propose a momentum-based variance-reduced federated stochastic multi-gradient descent algorithm (FSMGDA-M-VR) for solving \eqref{1}, as outlined in Algorithm \ref{VSOMD}.

\begin{algorithm}[!ht]
	\caption{FSMGDA-M-VR}\label{VSOMD}
	\KwIn{Initial model $x^{-1}=x^0$, initialization batch size $B\in \mathbb{N}$, 
		initial gradient estimate 
		$g_s^{-1}=\frac{1}{BM}\sum_{i=1}^{M}\sum_{b=1}^{B}
		\nabla F_s(x^{-1};\xi_i^b)$ for $s\in[S]$ with mutually independent samples $\{\xi_i^b:i\in[M],\,$ $b\in[B]\}$
		and $\xi_i^b\sim\mathcal{D}_i$, 
		local learning rate $\eta>0$, global learning rate $\gamma>0$, and momentum parameter $\beta\in (0,1)$}
	\KwOut{$(x_a,\lambda_a)$, where $a$ is chosen uniformly at random from $\{0,1,\dots,T-1\}$.}
	\For{$t=0,1,\dots,T-1$}{
		\For{each client $i$ in parallel}{
		Initial local model $x_{s,i}^{t, 0}=x^{t}$
		\For{each $s$ in parallel}{
			\For{$k=0,1,\dots,K-1$}{
				Compute direction $g_{s,i}^{t,k}=\nabla F_{s}(x_{s,i}^{t,k};\xi_{i}^{t, k})+(1-\beta)\left(g_{s}^{t-1}-\nabla F_{s}(x^{t-1} ; \xi_{i}^{t, k})\right)$\\
				Update local model $x_{s,i}^{t, k+1}=x_{s,i}^{t, k}-\eta g_{s,i}^{t, k}$
				}
				Aggregate local updates $g_{s,i}^{t}=\frac{1}{K} \sum_{k=0}^{K-1}g_{s,i}^{t,k}$
		}
		}
		Compute $g_{s}^{t}=\frac{1}{M} \sum_{i=1}^{M}g_{s,i}^{t},\forall s\in[S]$\\
		Compute $\lambda^{t,*} \in[0,1]^{S}$ by solving\\
		\begin{equation*}
			\min_{\lambda_s^t\geq 0}\left\| \sum_{s\in[S]}\lambda_s^{t}g_s^t\right\|^2,\quad\text{s.t.}\sum_{s\in[S]}\lambda_s^t=1
		\end{equation*}\\
		Compute global multi-gradient direction $g^t = \sum_{s\in[S]} \lambda_s^{t,*}g_s^t$\\
		Update global model $x^{t+1} = x^t-\gamma g^t$
	}
\end{algorithm}

At communication round $t$ and local step $k$, each client $i$ independently
draws a sample $\xi_i^{t,k}\sim\mathcal{D}_i$. The samples are independent
across clients, local steps, and communication rounds, but are not necessarily
identically distributed across clients since the distributions
$\{\mathcal{D}_i\}_{i=1}^M$ may differ. For each fixed $(t,k,i)$, the same
sample $\xi_i^{t,k}$ is used for all objectives $s\in[S]$. In particular, the
variance-reduction mechanism evaluates
$\nabla F_s(x_{s,i}^{t,k};\xi_i^{t,k})$ and
$\nabla F_s(x^{t-1};\xi_i^{t,k})$ using the same realization
$\xi_i^{t,k}$.

As shown in Algorithm \ref{VSOMD}, after initializing the model parameters and historical gradients, the algorithm proceeds iteratively over multiple communication rounds. At each communication round $t$, all clients receive the current global model $x^t$ and, in parallel, perform $K$ local update steps for each objective function $s\in [S]$. In each local step $k$, client $i$ computes a variance-reduced descent direction for objective $s$ by combining the current stochastic gradient with a momentum-based correction term:
\begin{align*}
g_{s,i}^{t,k} = \nabla F_s(x_{s,i}^{t,k}; \xi_i^{t,k}) + (1 - \beta)\left(g_s^{t-1} - \nabla F_s(x^{t-1}; \xi_i^{t,k})\right).
\end{align*} 
The local model is then updated as $x_{s,i}^{t,k+1} = x_{s,i}^{t,k} - \eta g_{s,i}^{t,k}$. After completing all $K$ local steps, each client computes the average gradient
estimate for each objective $g_{s,i}^t = \frac{1}{K}\sum_{k=0}^{K-1} g_{s,i}^{t,k}$ and sends the directions to the server. On the server side, the uploaded estimates are aggregated across clients to form $g_s^t = \frac{1}{M}\sum_{i=1}^M g_{s,i}^t$. To determine a multi-gradient direction that balances multiple objectives, the server
solves the following convex quadratic optimization problem with linear constraints: $\min\limits_{\lambda_s^t\geq 0, \sum_{s\in[S]}\lambda_s^t=1}\left\| \sum_{s\in[S]}\lambda_s^{t}g_s^t\right\|^2$, which yields the optimal weight coefficients $\lambda_s^{t,*}$. These coefficients are used to construct the global direction $g^t = \sum_{s=1}^S \lambda_s^{t,*} g_s^t$. Finally, the global model is updated as $x^{t+1} = x^t - \gamma g^t$. 

\begin{remark}
 Relative to FSMGDA, the server-side Pareto aggregation mechanism remains unchanged. The main algorithmic modification is the introduction of an objective-wise momentum variance-reduced estimator into the local updates. The principal technical contribution is the analysis showing that this modification improves the stationarity rate from $\mathcal{O}(T^{-1/2})$ to $\mathcal{O}(T^{-2/3})$.
\end{remark}

\section{Convergence analysis}

In this section, we show that Algorithm~\ref{VSOMD} achieves a convergence rate of 
 $\mathcal{O}(T^{-\frac{2}{3}})$ in the nonconvex case. 
To facilitate the theoretical analysis, we first introduce the filtration used to characterize the evolution of the iterates and the randomness generated during the execution of the algorithm.

Let
\begin{equation*}
	\mathcal{F}^{0}
	=
	\sigma\left(
	\left\{\xi_i^b:
	1\leq i\leq M,\;1\leq b\leq B
	\right\}
	\right),
\end{equation*}
where $\{\xi_i^b:1\leq i\leq M,\;1\leq b\leq B\}$ are the samples used to initialize
$g_s^{-1}$, with $\xi_i^b\sim\mathcal{D}_i$. These initialization samples are mutually
independent and are independent of all samples generated in the subsequent communication
rounds. Hence, the initial gradient estimates $g_s^{-1}$, $s\in[S]$, are
$\mathcal{F}^{0}$-measurable. For each communication round $t$, $\mathcal{F}^{t}$ denotes the $\sigma$-algebra
containing all randomness revealed before the beginning of round $t$. Within round $t$,
we define the local-step filtration
\begin{equation*}
	\mathcal{F}^{t,k}
	=
	\sigma\left(
	\mathcal{F}^{t}
	\cup
	\left\{
	\xi_i^{t,j}:
	1\leq i\leq M,\;0\leq j<k
	\right\}
	\right),
	\qquad
	k=0,1,\dots,K.
\end{equation*}
Thus, $\mathcal{F}^{t,0}=\mathcal{F}^{t}$, and $\mathcal{F}^{t,k}$ contains all
randomness revealed before the $k$-th local update in round $t$. After completing the
$K$ local steps, we set
\begin{equation*}
	\mathcal{F}^{t+1}
	=
	\mathcal{F}^{t,K}
	=
	\sigma\left(
	\mathcal{F}^{t}
	\cup
	\left\{
	\xi_i^{t,j}:
	1\leq i\leq M,\;0\leq j<K
	\right\}
	\right).
\end{equation*}
Under this definition, the local variables $x_{s,i}^{t,k}$ and all global quantities
available before the $k$-th local update, including $x^t$, $x^{t-1}$, and
$g_s^{t-1}$, are $\mathcal{F}^{t,k}$-measurable, whereas the fresh sample
$\xi_i^{t,k}$ is independent of $\mathcal{F}^{t,k}$. Moreover, conditioned on
$\mathcal{F}^{t,k}$, the samples $\{\xi_i^{t,k}\}_{i=1}^{M}$ are independent across
clients and satisfy $\xi_i^{t,k}\sim\mathcal{D}_i.$
We use $\mathbb{E}[\cdot\mid\mathcal{F}^{t,k}]$ to denote the conditional expectation given the $\sigma$-algebra $\mathcal{F}^{t,k}$, which contains all randomness revealed before the $k$-th local update in round $t$, and $\mathbb{E}[\cdot]$ to denote the expectation over all randomness in the algorithm.
Throughout the proofs, we use $\sum_i$ to denote the summation over
$i\in\{1,\dots,M\}$ and $\sum_k$ to denote the summation over
$k\in\{0,\dots,K-1\}$. For all $t\ge0$, we define the auxiliary variables as follows:
\begin{align*}\mathcal{E}_{s}^t:=\mathbb{E}[\|\nabla f_s(x^t)-g_s^t\|^2],
\end{align*}
\begin{align*}U_{s}^t:=\frac{1}{KM}\sum_i\sum_k\mathbb{E}[\|x_{s,i}^{t,k}-x^t\|^2],
\end{align*}
\begin{align*}\zeta_{s,i}^{t,k}:=\mathbb{E}[x_{s,i}^{t,k+1}-x_{s,i}^{t,k}|\mathcal{F}^{t,k}],
\end{align*}
\begin{align*}\Xi_s^t:=\frac{1}{M}\sum_{i=1}^M\mathbb{E}[\|\zeta_{s,i}^{t,0}\|^2].
\end{align*}
Furthermore, let $x^{-1}:=x^0$ and
$\mathcal{E}_s^{-1}:=\mathbb{E}\left[\|\nabla f_s(x^0)-g_s^{-1}\|^2\right]$.
For the subsequent analysis, we also define the auxiliary initial aggregated
multi-gradient direction using the same minimum-norm aggregation rule as in
Algorithm \ref{VSOMD}. Specifically, let
\begin{equation*}
	\lambda^{-1,*}\in
	\arg\min_{\lambda\in\mathcal{C}}
	\left\|
	\sum_{s\in[S]}\lambda_s g_s^{-1}
	\right\|^2,
\end{equation*}
and define
\begin{equation*}
	g^{-1}:=
	\sum_{s\in[S]}\lambda_s^{-1,*}g_s^{-1}.
\end{equation*}
By construction, $g^{-1}$ is $\mathcal{F}^0$-measurable.

To establish the convergence properties of Algorithm \ref{VSOMD}, we further impose the following boundedness condition along the sequence of iterates generated by the algorithm.

\begin{assumption}\label{A3}\rm 
	Let $\{x^t\}_{t\geq 0}$ be the sequence generated by Algorithm \ref{VSOMD}. 
	There exists a constant $D>0$ such that
	\begin{equation*}
		\sup_{t\geq 0,\,s\in[S]}
		\frac{1}{M}\sum_{i=1}^{M}
		\mathbb{E}\left[\|\nabla f_{s,i}(x^t)\|^2\right]
		\leq D^2,
	\end{equation*}
	where the expectation is taken with respect to the randomness of the algorithm up to iteration $t$.
\end{assumption}

\begin{remark}
	Assumption \ref{A3} controls the local expected gradients along the sequence generated by Algorithm \ref{VSOMD}. Specifically, it requires the client-averaged squared norms of the local expected gradients, evaluated along the generated trajectory, to be uniformly bounded in expectation. Such boundedness conditions are commonly adopted in the analysis of federated learning and federated multiobjective learning
	\cite{Li2021,Reddi2020,Yang2024,Khanduri2021}, as well as in stochastic multiobjective optimization
	\cite{FernandoChen,XiaoHan,XuJu2025,Zhou2022}. Unlike a global bounded-gradient condition imposed for all $x$, Assumption \ref{A3} only requires boundedness along the sequence generated by the algorithm and is sufficient for our convergence analysis.
\end{remark}
Our subsequent analysis will be based on the following foundational lemmas.

\begin{lemma}\label{L1} 
Under Assumption \ref{A1}, if $\gamma L \leq \frac{1}{2}$, then, for all $t\ge 0 $,
\begin{align*}
	\mathbb{E}[f_s(x^{t+1})]\leq \mathbb{E}[f_s(x^t)]-\frac{\gamma}{4}\mathbb{E}[\|g^t\|^2]+\frac{\gamma}{2}\mathcal{E}_s^t, \quad s \in [S].
\end{align*}
and
\begin{align*}
     \frac{2\left(\mathbb{E}[f_s(x^T)]-f_s(x^0)\right)}{\gamma}+\frac{1}{2}\sum_{t=0}^{T-1}\mathbb{E}[\|g^t\|^2]\leq\sum_{t=0}^{T-1}\mathcal{E}_s^t, \quad s \in [S].
\end{align*}
\end{lemma}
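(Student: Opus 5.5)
The plan is to apply the descent lemma to a single objective $f_s$ along the global step $x^{t+1}=x^t-\gamma g^t$. Then I will use the minimum-norm optimality of $\lambda^{t,*}$ to relate $\langle g_s^t, g^t\rangle$ to $\|g^t\|^2$, and absorb the estimation error $\nabla f_s(x^t)-g_s^t$ by Young's inequality.

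By Assumption \ref{A1} (A1), $f_s$ is $L$-smooth, so pointwise
\begin{equation*}
f_s(x^{t+1})\le f_s(x^t)-\gamma\langle \nabla f_s(x^t),g^t\rangle+\frac{L\gamma^2}{2}\|g^t\|^2 .
\end{equation*}
The key structural fact comes from the first-order optimality condition of the convex problem $\min_{\lambda\in\mathcal{C}}\|\sum_s\lambda_s g_s^t\|^2$. It gives $\langle g^t, G^t(\lambda-\lambda^{t,*})\rangle\ge 0$ for all $\lambda\in\mathcal{C}$. Choosing $\lambda=e_s$ yields $\langle g_s^t,g^t\rangle\ge\|g^t\|^2$ for every $s\in[S]$.

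Next I split the inner product as
\begin{equation*}
\langle \nabla f_s(x^t),g^t\rangle=\langle g_s^t,g^t\rangle+\langle \nabla f_s(x^t)-g_s^t,g^t\rangle .
\end{equation*}
The first term is at least $\|g^t\|^2$. For the second I use $-\langle a,b\rangle\le \frac12\|a\|^2+\frac12\|b\|^2$. This gives
\begin{equation*}
-\gamma\langle \nabla f_s(x^t),g^t\rangle\le -\frac{\gamma}{2}\|g^t\|^2+\frac{\gamma}{2}\|\nabla f_s(x^t)-g_s^t\|^2 .
\end{equation*}
Combining with the smoothness term, the coefficient of $\|g^t\|^2$ becomes $-\frac{\gamma}{2}+\frac{L\gamma^2}{2}$. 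Since $\gamma L\le\frac12$, this is at most $-\frac{\gamma}{2}+\frac{\gamma}{4}=-\frac{\gamma}{4}$. Taking full expectations, and recalling the definition of $\mathcal{E}_s^t$, gives the first inequality.

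For the second inequality I sum the first over $t=0,\dots,T-1$, telescope $\mathbb{E}[f_s(x^t)]$, and multiply by $2/\gamma$. This gives
\begin{equation*}
\frac{2(\mathbb{E}[f_s(x^T)]-f_s(x^0))}{\gamma}+\frac12\sum_t\mathbb{E}\|g^t\|^2\le\sum_t\mathcal{E}_s^t .
\end{equation*}
Here $f_s(x^0)$ is deterministic since $x^0$ is.

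The only genuinely non-routine step is the inequality $\langle g_s^t,g^t\rangle\ge\|g^t\|^2$. I would justify it via the variational inequality for the minimizer over the simplex. Equivalently, for $\lambda=\lambda^{t,*}+\epsilon(e_s-\lambda^{t,*})\in\mathcal{C}$ with $\epsilon\in(0,1]$, expand $\|G^t\lambda\|^2\ge\|g^t\|^2$ and let $\epsilon\downarrow0$. Measurability and integrability are routine: $\mathbb{E}\|g^t\|^2$ and $\mathcal{E}_s^t$ may be assumed finite, or else the inequality is trivial.
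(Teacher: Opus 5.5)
Your proposal is correct and follows essentially the same route as the paper. Both arguments use the descent lemma for $f_s$ along $x^{t+1}=x^t-\gamma g^t$, the minimum-norm optimality condition $\langle g_s^t,g^t\rangle\ge\|g^t\|^2$, Young's inequality on the error term $\langle \nabla f_s(x^t)-g_s^t,g^t\rangle$, the bound $\frac{L\gamma^2}{2}\le\frac{\gamma}{4}$ from $\gamma L\le\frac12$, and then full expectation followed by telescoping; your explicit derivation of the optimality condition (taking $\lambda=e_s$, or the perturbation $\lambda^{t,*}+\epsilon(e_s-\lambda^{t,*})$) simply spells out what the paper cites as the first-order condition of the projection problem.
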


\begin{proof}
By the $L$-smoothness of $f_s$, we have
\begin{align*}
f_{s}(x^{t+1}) &\leq f_{s}(x^{t})+\left\langle\nabla f_s(x^{t}),-\gamma g^t\right\rangle+\frac{1}{2} L \|\gamma g^{t}\|^{2}\\
	&= f_{s}(x^{t})+\left\langle\nabla f_s(x^{t})-g_s^t,-\gamma g^t\right\rangle-\gamma \left\langle g_s^t,g^t\right\rangle +\frac{1}{2} L \|\gamma g^{t}\|^{2}\\
	&\leq f_{s}(x^{t})-\gamma\left\langle\nabla f_s(x^{t})-g_s^t, g^t\right\rangle-\gamma \|g^t\|^2+\frac{1}{2} L \|\gamma g^{t}\|^{2}\\
	&\leq f_{s}(x^{t})+\frac{\gamma}{2}\|\nabla f_s(x^{t})-g_s^t\|^2+\frac{\gamma}{2}\|g^t\|^2-\gamma \|g^t\|^2+\frac{1}{2} L\gamma^2 \|g^{t}\|^{2}\\
	&=f_{s}(x^{t})+\frac{\gamma}{2}\|\nabla f_s(x^{t})-g_s^t\|^2-\gamma\left(\frac{1}{2}-\frac{1}{2}L\gamma\right)\|g^{t}\|^{2},
\end{align*}
where the second inequality follows from the fact that $g^t$ is the minimum-norm element in $\operatorname{conv}\{g_s^t:s\in[S]\}$. By the first-order optimality condition of this projection problem,
\begin{equation*}
	\langle g_s^t-g^t,g^t\rangle\ge 0,\quad \forall s\in[S],
\end{equation*}
which implies that
\begin{equation*}
\langle g_s^t,g^t\rangle\ge \|g^t\|^2,\quad \forall s\in[S].
\end{equation*}
With $\gamma \leq \frac{1}{2L}$ and taking the global expectation, we obtain
\begin{align*}
	\mathbb{E}[f_s(x^{t+1})]\leq \mathbb{E}[f_s(x^t)]-\frac{\gamma}{4}\mathbb{E}[\|g^t\|^2]+\frac{\gamma}{2}\mathcal{E}_s^t.
\end{align*}
Then, summing the above inequality over $t=0,\dots,T-1$ yields
\begin{align*}
       \frac{2}{\gamma}\sum_{t=0}^{T-1}\left(\mathbb{E}[f_s(x^{t+1})-f_s(x^t)]\right)+\frac{1}{2}\sum_{t=0}^{T-1}\mathbb{E}[\|g^t\|^2]=\frac{2\left(\mathbb{E}[f_s(x^T)]-f_s(x^0)\right)}{\gamma}+\frac{1}{2}\sum_{t=0}^{T-1}\mathbb{E}[\|g^t\|^2]\leq\sum_{t=0}^{T-1}\mathcal{E}_s^t,
\end{align*}
which completes the proof.
\end{proof}

 To handle local updates and client sampling, we will also use the following technical lemmas.

\begin{lemma}[See {\cite[Lemma 4]{Karimireddy2020}}]
\label{L2} 
Let $\{X_1,\dots,X_\tau\}$ be a collection of $\mathbb{R}^d$-valued random variables that are potentially dependent. Suppose that $\mathbb{E}[X_i]=\mu_i$ and $\mathbb{E}[\|X_i-\mu_i\|^2]\leq\sigma^2$, $i=1,\dots,\tau$. Then
\begin{equation*}
\mathbb{E}\left[\left\|\sum_{i=1}^{\tau}X_i\right\|^2\right]\leq\left\|\sum_{i=1}^{\tau}\mu_i\right\|^2
+\tau^2\sigma^2.
\end{equation*}
Suppose that $\{\mathcal{G}_i\}_{i=0}^{\tau}$ is a filtration, i.e.,
\begin{equation*}
\mathcal{G}_0\subseteq\mathcal{G}_1\subseteq\cdots\subseteq\mathcal{G}_\tau
\end{equation*}
and that $X_i$ is $\mathcal{G}_i$-measurable for each $i=1,\dots,\tau$. Define
\begin{equation*}
\mu_i=\mathbb{E}\left[X_i\mid\mathcal{G}_{i-1}\right],
\end{equation*}
and suppose that
\begin{equation*}
\mathbb{E}\left[\|X_i-\mu_i\|^2\mid\mathcal{G}_{i-1}\right]\leq\sigma^2,\qquad i=1,\dots,\tau.
\end{equation*}
Then $\{X_i-\mu_i\}_{i=1}^{\tau}$ forms a martingale-difference sequence with respect to $\{\mathcal{G}_i\}_{i=0}^{\tau}$, and
\begin{equation*}
			\mathbb{E}\left[
			\left\|\sum_{i=1}^{\tau}X_i\right\|^2
			\right]
			\leq
			2\mathbb{E}\left[
			\left\|\sum_{i=1}^{\tau}\mu_i\right\|^2
			\right]
			+2\tau\sigma^2.
\end{equation*}
\end{lemma}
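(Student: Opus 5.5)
We prove the two parts of Lemma~\ref{L2} separately. Both start from the decomposition $X_i=\mu_i+(X_i-\mu_i)$ and the elementary inequality $\|a+b\|^2\le 2\|a\|^2+2\|b\|^2$, or its exact expansion when a cross term vanishes.

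\textbf{First part.} Here $\mu_i=\mathbb{E}[X_i]$ is deterministic. Set $Y_i:=X_i-\mu_i$, so that $\mathbb{E}[Y_i]=0$. Expanding $\|\sum_i\mu_i+\sum_iY_i\|^2$ and taking expectations, the cross term $2\langle\sum_i\mu_i,\mathbb{E}[\sum_iY_i]\rangle$ vanishes because the $\mu_i$ are constants. This gives the exact identity
\begin{equation*}
\mathbb{E}\Big[\Big\|\sum_i X_i\Big\|^2\Big]=\Big\|\sum_i\mu_i\Big\|^2+\mathbb{E}\Big[\Big\|\sum_iY_i\Big\|^2\Big].
\end{equation*}
The $Y_i$ may be dependent, so we do not try to kill their cross terms. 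Instead we use the Cauchy--Schwarz (Jensen) bound $\|\sum_{i=1}^\tau Y_i\|^2\le\tau\sum_i\|Y_i\|^2$. Taking expectations and using $\mathbb{E}\|Y_i\|^2\le\sigma^2$ yields the bound $\tau^2\sigma^2$, which is the first claim.

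\textbf{Second part.} Now $\mu_i=\mathbb{E}[X_i\mid\mathcal{G}_{i-1}]$ is random, so we apply $\|a+b\|^2\le2\|a\|^2+2\|b\|^2$ with $a=\sum_i\mu_i$ and $b=\sum_iY_i$. The martingale-difference property follows directly from the definitions. Since $X_i$ is $\mathcal{G}_i$-measurable and $\mu_i$ is $\mathcal{G}_{i-1}\subseteq\mathcal{G}_i$-measurable, $Y_i$ is $\mathcal{G}_i$-measurable, and $\mathbb{E}[Y_i\mid\mathcal{G}_{i-1}]=0$.

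The remaining step is to show $\mathbb{E}\|\sum_iY_i\|^2=\sum_i\mathbb{E}\|Y_i\|^2$. Take $j<i$. Then $Y_j$ is $\mathcal{G}_{j}\subseteq\mathcal{G}_{i-1}$-measurable, so by the tower property
\begin{equation*}
\mathbb{E}\langle Y_j,Y_i\rangle=\mathbb{E}\big\langle Y_j,\mathbb{E}[Y_i\mid\mathcal{G}_{i-1}]\big\rangle=0.
\end{equation*}
For the diagonal terms, the conditional variance bound gives $\mathbb{E}\|Y_i\|^2=\mathbb{E}\big[\mathbb{E}[\|Y_i\|^2\mid\mathcal{G}_{i-1}]\big]\le\sigma^2$. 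Summing over $i$ gives $\mathbb{E}\|\sum_iY_i\|^2\le\tau\sigma^2$, and hence the stated bound $2\mathbb{E}\|\sum_i\mu_i\|^2+2\tau\sigma^2$.

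\textbf{Main obstacle.} The only delicate point is the integrability needed to justify the tower-property step for the cross terms. Each $Y_i$ is square-integrable by the variance bound, so $\langle Y_j,Y_i\rangle$ is integrable by Cauchy--Schwarz. Pulling the $\mathcal{G}_{i-1}$-measurable factor $Y_j$ out of the conditional expectation is therefore legitimate. Everything else is routine algebra.
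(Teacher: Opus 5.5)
Your proof is correct and takes essentially the same route as the argument the paper relies on. The paper gives no proof of its own but cites Karimireddy et al., Lemma 4, whose proof is exactly your mean--deviation split: an exact identity plus the relaxed triangle (Cauchy--Schwarz) inequality for the first bound, and $\|a+b\|^2\le 2\|a\|^2+2\|b\|^2$ with vanishing martingale cross terms for the second. The same orthogonality step also appears in the paper's proof of Lemma \ref{L4}.
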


\begin{lemma}\label{L3} Suppose that Assumptions \ref{A1}--\ref{A2} hold. If $\gamma L\leq\sqrt{\frac{\beta KM}{54}}$, then, for all $t\ge1$,
	\begin{align*}
	\mathcal{E}_s^t\leq(1-\beta)\mathcal{E}_s^{t-1}+\frac{4}{\beta}L^2U_s^t+\frac{3\beta^2\sigma^2}{KM}+\frac{2\beta}{9}\mathbb{E}[\|g^{t-1}\|^2].
	\end{align*}
and for $t=0$, 
\begin{align*}
	\mathcal{E}_s^0\leq(1-\beta)\mathcal{E}_s^{-1}+\frac4\beta L^2U_s^0+\frac{3\beta^2\sigma^2}{KM}.
\end{align*}
\end{lemma}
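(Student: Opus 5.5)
Our plan is the standard STORM-type recursion argument, adapted to averaging over clients and local steps. First, write the error $g_s^t-\nabla f_s(x^t)$ using the definition of $g_s^t=\frac{1}{KM}\sum_i\sum_k g_{s,i}^{t,k}$:
\begin{align*}
g_s^t-\nabla f_s(x^t)
&=(1-\beta)\bigl(g_s^{t-1}-\nabla f_s(x^{t-1})\bigr)
+\frac{1}{KM}\sum_i\sum_k\Bigl[\nabla F_s(x_{s,i}^{t,k};\xi_i^{t,k})-(1-\beta)\nabla F_s(x^{t-1};\xi_i^{t,k})\\
&\qquad-\nabla f_{s,i}(x_{s,i}^{t,k})+(1-\beta)\nabla f_{s,i}(x^{t-1})\Bigr]
+\frac{1}{KM}\sum_i\sum_k\bigl(\nabla f_{s,i}(x_{s,i}^{t,k})-\nabla f_{s,i}(x^t)\bigr).
\end{align*}
Call these three terms $A$ (old error), $Z$ (zero conditional mean noise) and $B$ (client-drift bias). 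By construction $\mathbb{E}[Z_{i,k}\mid\mathcal{F}^{t,k}]=0$, and $A$ is $\mathcal{F}^{t}$-measurable. So $\langle A,Z\rangle$ has zero expectation, and the cross terms among the $Z_{i,k}$ across different $(i,k)$ vanish by the martingale-difference property in Lemma \ref{L2}.

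Second, bound $\mathbb{E}\|Z\|^2$. Split each summand as $\beta(\nabla F_s(x_{s,i}^{t,k};\xi)-\nabla f_{s,i}(x_{s,i}^{t,k}))$ plus $(1-\beta)$ times the difference of the centered stochastic gradients at $x_{s,i}^{t,k}$ and $x^{t-1}$. The first piece gives $\beta^2\sigma^2$ per summand. The second is bounded via (A2) by $L^2\|x_{s,i}^{t,k}-x^{t-1}\|^2\le 2L^2\|x_{s,i}^{t,k}-x^t\|^2+2L^2\gamma^2\|g^{t-1}\|^2$, using $x^t-x^{t-1}=-\gamma g^{t-1}$. Orthogonality of martingale differences gives the factor $\frac{1}{(KM)^2}\cdot KM$, i.e.\ $\mathbb{E}\|Z\|^2\le \frac{c\beta^2\sigma^2}{KM}+\frac{cL^2}{KM}U_s^t+\frac{cL^2\gamma^2}{KM}\mathbb{E}\|g^{t-1}\|^2$. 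For $B$, Jensen and $L$-smoothness of $f_{s,i}$ give $\mathbb{E}\|B\|^2\le L^2U_s^t$.

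Third, combine. Use $\|A+B\|^2\le(1+\beta)\|A\|^2+(1+1/\beta)\|B\|^2$ and $(1-\beta)^2(1+\beta)\le 1-\beta$, then add $\mathbb{E}\|Z\|^2$ (after the cross term with $A$ is killed). One remaining subtlety is the cross term $\langle B,Z\rangle$, which is not mean-zero since $B$ depends on later samples. We would handle it by simply using $\|A+B+Z\|^2$ expanded as $\|A+B\|^2+\|Z\|^2+2\langle A,Z\rangle+2\langle B,Z\rangle$ and bounding $2\langle B,Z\rangle\le \|B\|^2/\beta+\beta\|Z\|^2$. Alternatively, we would absorb $Z$ via a factor-$3$ Young split, which explains the constant $3$ in front of $\beta^2\sigma^2/(KM)$. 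Collecting, the $U_s^t$ coefficient becomes at most $(2/\beta+O(1/(KM)))L^2\le \frac{4}{\beta}L^2$ for $\beta\le1$. The $\|g^{t-1}\|^2$ coefficient is $O(L^2\gamma^2/(KM))$, and the hypothesis $\gamma^2L^2\le \beta KM/54$ turns this into at most $\frac{2\beta}{9}$ (e.g.\ $12\cdot\beta/54=2\beta/9$).

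For $t=0$, $x^{-1}=x^0$, so $x^t-x^{t-1}=0$ and the $\|g^{t-1}\|^2$ term disappears, giving the second inequality. The main obstacle we expect is the non-martingale cross term $\langle B,Z\rangle$ and keeping the constants to exactly $4/\beta$, $3$, and $2\beta/9$. We would first try the crude Young split above and then tune the weights to match.
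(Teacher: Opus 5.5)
Your decomposition $g_s^t-\nabla f_s(x^t)=A+Z+B$ is correct, and so are the orthogonality claims: the tower property handles different local steps, and conditional independence across clients handles the same step. The route closes. You share two steps with the paper, namely $\mathbb{E}\langle A,Z\rangle=0$ and $2\mathbb{E}\langle A,B\rangle\le\beta\,\mathbb{E}\|A\|^2+\beta^{-1}\mathbb{E}\|B\|^2$. Where you differ is in how $\mathbb{E}\|B+Z\|^2$ is handled.

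The paper never splits off the martingale part $Z$. Instead it re-centres all stochastic terms at the $\mathcal{F}^t$-measurable points $x^t,x^{t-1}$, writing $B+Z$ as the sum of three pieces:
\begin{itemize}
\item $\frac{1}{KM}\sum_{i,k}\bigl(\nabla F_s(x_{s,i}^{t,k};\xi_i^{t,k})-\nabla F_s(x^t;\xi_i^{t,k})\bigr)$;
\item $\beta$ times the sampling noise at $x^t$;
\item $(1-\beta)$ times the centred difference noise between $x^t$ and $x^{t-1}$.
\end{itemize}
It then applies a three-way Young inequality. The first piece gives $3L^2U_s^t$ by Jensen and per-sample smoothness. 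The other two are sums of $KM$ mean-zero terms that are independent given $\mathcal{F}^t$, giving $3\beta^2\sigma^2/(KM)$ and $12(1-\beta)^2L^2\gamma^2\,\mathbb{E}\|g^{t-1}\|^2/(KM)$.

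So in the paper the drift--noise cross term $\langle B,Z\rangle$ never arises. Bounding $\Lambda_2$ needs only independence of the fresh samples given $\mathcal{F}^t$, and the constant $3$ is just the three-way split. Your STORM-style bias/martingale separation is the more standard route and ends with essentially the same constants. It costs martingale bookkeeping over clients and local steps, plus an extra Young step for $\langle B,Z\rangle$.

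That extra step needs the right weight. Your choice is $2\langle B,Z\rangle\le\beta^{-1}\|B\|^2+\beta\|Z\|^2$, together with the natural bound $\mathbb{E}\|Z\|^2\le\frac{2\beta^2\sigma^2}{KM}+\frac{4(1-\beta)^2L^2}{KM}\bigl(U_s^t+\gamma^2\mathbb{E}\|g^{t-1}\|^2\bigr)$. The noise term then becomes $2(1+\beta)\beta^2\sigma^2/(KM)$. This exceeds the claimed $3\beta^2\sigma^2/(KM)$ whenever $\beta>1/2$, and the lemma allows any $\beta\in(0,1)$.

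Use weight $1/2$ instead: $2\langle B,Z\rangle\le 2\|B\|^2+\tfrac12\|Z\|^2$. Then:
\begin{itemize}
\item the noise term is exactly $3\beta^2\sigma^2/(KM)$;
\item the $U_s^t$ coefficient is $\bigl(3+\beta^{-1}+6(1-\beta)^2/(KM)\bigr)L^2\le 4L^2/\beta$, which reduces to $2\beta(1-\beta)\le KM$ and always holds;
\item the $\mathbb{E}\|g^{t-1}\|^2$ coefficient is $6(1-\beta)^2L^2\gamma^2/(KM)\le\beta/9\le 2\beta/9$ under $\gamma L\le\sqrt{\beta KM/54}$.
\end{itemize}
With this change every constant in the lemma is recovered. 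Your $t=0$ argument via $x^{-1}=x^0$ is fine.
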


\begin{proof} By the definition of \(\mathcal{E}_s^t\), we have
	\begin{align*}
		\mathcal{E}_s^t 
&=\mathbb{E}\left[\left\|\frac{1}{KM}\sum_{i, k}\nabla F_{s}(x_{s,i}^{t,k};\xi_i^{t,k})+(1-\beta)\left(g_s^{t-1}-\frac{1}{KM}\sum_{i, k}\nabla F_{s}(x^{t-1};\xi_i^{t,k})\right)-\nabla f_s(x^t)\right\|^2\right] \\
&=\mathbb{E}\Biggl[\Biggl\|(1-\beta)\bigl(g_s^{t-1}-\nabla f_s(x^{t-1})\bigr)+\frac{1}{KM}\sum_{i, k}\nabla F_{s}(x_{s,i}^{t,k};\xi_i^{t,k})-\nabla f_s(x^t) \\
&\quad+(1-\beta)\Bigl(\nabla f_s(x^{t-1})-\frac{1}{KM}\sum_{i, k}\nabla F_{s}(x^{t-1};\xi_{i}^{t,k})\Bigr)\Biggr\|^2\Biggr] \\
	=&(1-\beta)^2\mathcal{E}_s^{t-1}+\underbrace{2\mathbb{E}\left[\left\langle(1-\beta)(g_s^{t-1}-\nabla f_s(x^{t-1})),\frac{1}{KM}\sum_{i, k}\nabla f_{s,i}(x_{s,i}^{t,k})-\nabla f_s(x^t)\right\rangle\right]}_{\Lambda_1} \\
	&+\underbrace{\mathbb{E}\left[\left\|\frac{1}{KM}\sum_{i, k}\nabla F_{s}(x_{s,i}^{t,k};\xi_i^{t,k})-\nabla f_s(x^t)+(1-\beta)\left(\nabla f_s(x^{t-1})-\frac{1}{KM}\sum_{i, k}\nabla F_{s}(x^{t-1};\xi_i^{t,k})\right)\right\|^2\right]}_{\Lambda_2}.	
	\end{align*}
By the AM-GM inequality and Assumption \ref{A1},
    \begin{align*}
    	\Lambda_1&\leq\beta(1-\beta)^2\mathcal{E}_s^{t-1}+\frac1\beta\mathbb{E}\left[\left\|\frac{1}{KM}\sum_{i, k}\nabla f_{s,i}(x_{s,i}^{t,k})-\nabla f_s(x^t)\right\|^2\right]\\
        &\leq\beta(1-\beta)^2\mathcal{E}_s^{t-1}+\frac{1}{\beta}\frac{1}{K^2 M^2} \mathbb{E}\left[\left\|\sum_{i, k}\left(\nabla f_{s,i}(x_{s,i}^{t,k})-\nabla f_{s,i}(x^t)\right)\right\|^2\right]\\
        &\leq\beta(1-\beta)^2\mathcal{E}_s^{t-1}+\frac{1}{\beta}\frac{1}{K M} \mathbb{E}\left[\sum_{i, k}\left\|\nabla f_{s,i}(x_{s,i}^{t,k})-\nabla f_{s,i}(x^t)\right\|^2\right]\\
        &\leq\beta(1-\beta)^2\mathcal{E}_s^{t-1}+\frac{1}{\beta}\frac{1}{K M} \mathbb{E}\left[\sum_{i, k}L^2\|x_{s,i}^{t,k}-x^t\|^2\right]\\
        &\leq\beta(1-\beta)^2\mathcal{E}_s^{t-1}+\frac1\beta L^2U_s^t.
    \end{align*}
For $\Lambda_2$, we decompose it into three terms as
    \begin{align*}
    \Lambda_{2} =&\mathbb{E}\left[\left\|\frac1{KM}\sum_{i,k}\left(\nabla F_{s}(x_{s,i}^{t,k};\xi_i^{t,k})-\nabla F_{s}(x^t;\xi_i^{t,k})\right)+\beta\left(\frac1{KM}\sum_{i,k}\nabla F_{s}(x^t;\xi_i^{t,k})-\nabla f_s(x^t)\right)\right.\right. \\
    &+(1-\beta)\left(\frac1{KM}\sum_{i, k}\left(\nabla F_{s}(x^t;\xi_i^{t,k})-\nabla F_{s}(x^{t-1};\xi_i^{t,k})\right)-\nabla f_s(x^t)+\nabla f_s(x^{t-1})\right)\Bigg\Vert^2\Bigg] \\
    \leq&\underbrace{3\mathbb{E}\left[\left\|\frac1{KM}\sum_{i,k}\left(\nabla F_{s}(x_{s,i}^{t,k};\xi_i^{t,k})-\nabla F_{s}(x^t;\xi_i^{t,k})\right)\right\|^2\right]}_{\Phi_1}\\
    &\underbrace{+3\mathbb{E}\left[\left\|\beta\left(\frac1{KM}\sum_{i,k}\nabla F_{s}(x^t;\xi_i^{t,k})-\nabla f_s(x^t)\right)\right\|^2\right]}_{\Phi_2}\\
    &\underbrace{+3\mathbb{E}\left[\left\|(1-\beta)\left(\frac1{KM}\sum_{i, k}\left(\nabla F_{s}(x^t;\xi_i^{t,k})-\nabla F_{s}(x^{t-1};\xi_i^{t,k})\right)\right)-\nabla f_s(x^t)+\nabla f_s(x^{t-1})\right\|^2\right]}_{\Phi_3}.
    \end{align*}
By Assumption \ref{A1}, 
\begin{align*}
 \Phi_1 
    &=\frac{3}{K^2 M^2}\mathbb{E}\left[\left\|\sum_{i, k}\left(\nabla F_{s}(x_{s,i}^{t, k} ; \xi_i^{t, k})-\nabla F_{s}(x^t ; \xi_i^{t, k})\right)\right\|^2\right]\\
    &\leq\frac{3}{K M} \mathbb{E}\left[\sum_{i, k}\left\|\nabla F_{s}(x_{s,i}^{t, k} ; \xi_i^{t, k})-\nabla F_{s}(x^{t} ; \xi_i^{t,k})\right\|^2\right]\\
    &\leq\frac{3}{K M}\sum_{i, k} \mathbb{E}\left[ L^2\left\|x_{s,i}^{t, k}-x^{t}\right\|^2\right]\\
    &=3L^2 U_s^t.
\end{align*}
By Assumption \ref{A2}, 
\begin{align*}
\Phi_2 
    &=\frac{3\beta^2}{K^2M^2}\mathbb{E}\left[\left\|\sum_{i,k}\left(\nabla F_{s}(x^t;\xi_i^{t,k})-\nabla f_{s,i}(x^t)\right)\right\|^2\right]\\
    &=\frac{3\beta^2}{K^2M^2}\mathbb{E}\left[\sum_{i,k}\left\|\nabla F_{s}(x^t;\xi_i^{t,k})-\nabla f_{s,i}(x^t)\right\|^2\right.\\
    &\quad \left.+2\sum_{(i,k)\ne(j,l)}\left\langle\nabla F_{s}(x^t;\xi_i^{t,k})-\nabla f_{s,i}(x^t),\nabla F_{s}(x^t;\xi_j^{t,l})-\nabla f_{s,j}(x^t)\right\rangle\right]\\
    &\leq \frac{3\beta^2}{K^2M^2}\sum_{i,k}\sigma^2=\frac{3\beta^2\sigma^2}{KM}.
\end{align*}
Again using Assumption \ref{A1}, we have
\begin{align*}
\Phi_3 
    &=\frac{3(1-\beta)^2}{K^2M^2}\mathbb{E}\left[\left\|\sum_{i, k}\left(\nabla F_{s}(x^t;\xi_i^{t,k})-\nabla F_{s}(x^{t-1};\xi_i^{t,k})+\nabla f_{s,i}(x^{t-1})-\nabla f_{s,i}(x^t)\right)\right\|^2\right]\\
    &=\frac{3(1-\beta)^2}{K^2M^2}\mathbb{E}\left[\sum_{i, k}\left\|\nabla F_{s}(x^t;\xi_i^{t,k})-\nabla F_{s}(x^{t-1};\xi_i^{t,k})+\nabla f_{s,i}(x^{t-1})-\nabla f_{s,i}(x^t)\right\|^2 \right.\\
    &\quad +2\sum_{(i,k)\ne (j,l)}\left\langle\nabla F_{s}(x^t;\xi_i^{t,k})-\nabla F_{s}(x^{t-1};\xi_i^{t,k})+\nabla f_{s,i}(x^{t-1})-\nabla f_{s,i}(x^t),\right.\\
    &\qquad\qquad \left.\nabla F_{s}(x^t;\xi_j^{t,l})-\nabla F_{s}(x^{t-1};\xi_j^{t,l})+\nabla f_{s,j}(x^{t-1})-\nabla f_{s,j}(x^t)\right\rangle\Bigg]\\
    &=\frac{3(1-\beta)^2}{K^2M^2}\mathbb{E}\left[\sum_{i, k}\left\|\nabla F_{s}(x^t;\xi_i^{t,k})-\nabla F_{s}(x^{t-1};\xi_i^{t,k})+\nabla f_{s,i}(x^{t-1})-\nabla f_{s,i}(x^t)\right\|^2\right]\\
    &\leq\frac{6(1-\beta)^2}{K^2M^2}\mathbb{E}\left[\sum_{i, k}\left\|\nabla F_{s}(x^t;\xi_i^{t,k})-\nabla F_{s}(x^{t-1};\xi_i^{t,k})\right\|^2+\sum_{i, k}\left\|\nabla f_{s,i}(x^{t-1})-\nabla f_{s,i}(x^t)\right\|^2\right]\\
    &\leq\frac{6(1-\beta)^2}{K^2M^2}\mathbb{E}\left[\sum_{i, k} L^2 \|x^t-x^{t-1}\|^2+\sum_{i, k} L^2 \|x^{t-1}-x^{t}\|^2\right]\\
    &=\frac{12(1-\beta)^2L^2}{KM}\mathbb{E}\left[||x^t-x^{t-1}\|^2\right].
\end{align*}
Substituting these three bounds, we obtain
    \begin{align*}
\Lambda_{2}\leq3L^{2}U_s^{t}+\frac{3\beta^{2}\sigma^{2}}{KM}+12(1-\beta)^{2}\frac{L^{2}}{KM}\mathbb{E}[\|x^{t}-x^{t-1}\|^{2}].
    \end{align*}
Therefore, for $t\ge 1$,
\begin{align*}
	\mathcal{E}_s^{t}&\leq(1-\beta)\mathcal{E}_s^{t-1}+\frac{4}{\beta}L^2U_s^t+\frac{3\beta^2\sigma^2}{KM}+12(1-\beta)^2\frac{L^2}{KM}\mathbb{E}[\|x^t-x^{t-1}\|^2] \\
	&\leq(1-\beta)\mathcal{E}_s^{t-1}+\frac{4}{\beta}L^2U_s^t+\frac{3\beta^2\sigma^2}{KM}+\frac{2\beta}{9}\mathbb{E}[\|g^{t-1}\|^2],
\end{align*}
where the last inequality is derived by $x^{t}=x^{t-1}-\gamma g^{t-1}$ and $\gamma L\leq\sqrt{\frac{\beta KM}{54}}$. Similarly, for $t=0$, we can obtain
\begin{align*}
	\mathcal{E}_s^0\leq(1-\beta)\mathcal{E}_s^{-1}+\frac4\beta L^2U_s^0+\frac{3\beta^2\sigma^2}{KM},
\end{align*}
which completes the proof.
\end{proof}

\begin{lemma}
\label{L4} 
Suppose that Assumptions \ref{A1}--\ref{A2} hold. If $\eta L K \leq 1$ and $24e^2(\eta L)^4K^4+24\eta^2L^2K\leq \frac{1}{2}$, then
	\begin{align*}
		U_s^t \leq 4 e^2 K^2 \Xi_s^t+12(\eta K)^2\left(8(\eta K L)^2+K^{-1}\right)\left(\beta^2 \sigma^2+4(\gamma L)^2 \mathbb{E}[\|g^{t-1}\|^2]\right).
	\end{align*}
\end{lemma}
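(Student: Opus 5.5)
This follows the standard client-drift argument (as in SCAFFOLD/STEM), applied to each objective $s$ separately.

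\textbf{Drift recursion.} Fix $s$, a client $i$ and a round $t$. Write the local increment as
\[
x_{s,i}^{t,k+1}-x_{s,i}^{t,k}=\zeta_{s,i}^{t,k}+\big(x_{s,i}^{t,k+1}-x_{s,i}^{t,k}-\zeta_{s,i}^{t,k}\big),
\]
that is, a conditional mean plus a martingale difference with respect to $\{\mathcal{F}^{t,k}\}_k$. By Assumption \ref{A2}, the conditional mean is
\[
\zeta_{s,i}^{t,k}=-\eta\big(\nabla f_{s,i}(x_{s,i}^{t,k})+(1-\beta)(g_s^{t-1}-\nabla f_{s,i}(x^{t-1}))\big).
\]
The noise part equals $-\eta$ times
\[
\nabla F_s(x_{s,i}^{t,k};\xi)-\nabla f_{s,i}(x_{s,i}^{t,k})-(1-\beta)\big(\nabla F_s(x^{t-1};\xi)-\nabla f_{s,i}(x^{t-1})\big).
\]
Rewrite this as $\beta$ times the noise at $x^{t-1}$, plus the difference of noises between $x_{s,i}^{t,k}$ and $x^{t-1}$. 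Its conditional second moment is then at most
\[
2\eta^2\beta^2\sigma^2+8\eta^2L^2\big(\|x_{s,i}^{t,k}-x^t\|^2+\gamma^2\|g^{t-1}\|^2\big),
\]
using Assumption \ref{A1}(A2) and $x^t-x^{t-1}=-\gamma g^{t-1}$. This is where the terms $\beta^2\sigma^2+4(\gamma L)^2\|g^{t-1}\|^2$ in the statement come from.

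\textbf{Mean-path contraction.} For the mean part, $L$-smoothness of $f_{s,i}$ gives
\[
\|\zeta_{s,i}^{t,k}-\zeta_{s,i}^{t,0}\|\le \eta L\|x_{s,i}^{t,k}-x^t\|,
\]
because $\zeta^{t,k}$ and $\zeta^{t,0}$ differ only through $\nabla f_{s,i}(x_{s,i}^{t,k})-\nabla f_{s,i}(x^t)$. Unrolling,
\[
x_{s,i}^{t,k}-x^t=k\,\zeta_{s,i}^{t,0}+\sum_{j<k}(\zeta_{s,i}^{t,j}-\zeta_{s,i}^{t,0})+\sum_{j<k}\text{noise}_j .
\]

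\textbf{Putting it together.} Apply the martingale part of Lemma \ref{L2} to the noise sum, which gives a factor $2k\cdot(\text{variance bound})$. For the deterministic sum, use Cauchy--Schwarz to get $k\sum_{j<k}\eta^2L^2\|x^{t,j}-x^t\|^2$. The $k\zeta^{t,0}$ term can be split off with a $(1+1/K)$-type Young inequality, so that the growth factor $(1+1/K)^K\le e$ appears; squaring twice produces the $e^2$ in $4e^2K^2\Xi_s^t$.

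This yields a recursion of the form
\[
a_k\le 4e^2K^2\|\zeta^{t,0}\|^2+c_1\sum_{j<k}a_j+c_2,
\qquad a_k:=\mathbb{E}\|x_{s,i}^{t,k}-x^t\|^2 .
\]
Here $c_1\approx 24(\eta L)^4K^3+24\eta^2L^2$ collects the drift feedback from both the mean path, where $\eta LK\le1$, and the noise, and $c_2$ is the variance contribution. Averaging over $k$ and $i$, the feedback term is at most
\[
\big(24e^2(\eta L)^4K^4+24\eta^2L^2K\big)U_s^t\le \tfrac12 U_s^t ,
\]
so it can be absorbed into the left-hand side. That absorption costs the final factor of $2$ and leaves the constants $12(\eta K)^2(8(\eta KL)^2+K^{-1})$.

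\textbf{Main obstacle.} The delicate step is the bookkeeping of constants. The noise term contributes $(\eta K)^2K^{-1}$, while the noise propagated through the mean path contributes $(\eta K)^2(\eta KL)^2$. Both must combine with the $e^2$ growth factor so that the hypothesis $24e^2(\eta L)^4K^4+24\eta^2L^2K\le\frac12$ is exactly what permits absorption. If the constants do not match on the first attempt, I would first adjust the Young-inequality weights, using $(1+1/K)$ against $(1+K)$, before changing anything else.
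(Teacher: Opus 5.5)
Your anchored decomposition is a genuinely different route from the paper's, and it does prove the lemma, in fact in a sharper form. However, your ``Putting it together'' paragraph describes the paper's bookkeeping, not what your own argument produces.

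\textbf{How the paper gets its constants.} The paper does not compare $\zeta_{s,i}^{t,j}$ with $\zeta_{s,i}^{t,0}$ pathwise. It bounds successive differences $\mathbb{E}\|\zeta_{s,i}^{t,j}-\zeta_{s,i}^{t,j-1}\|^2$ by $\eta^2L^2$ times the second moment of the previous increment (conditional mean plus conditional variance). It then runs the Young recursion $\mathbb{E}\|\zeta_{s,i}^{t,j}\|^2\le(1+\frac{2}{k})\mathbb{E}\|\zeta_{s,i}^{t,j-1}\|^2+\cdots$ and unrolls it with $(1+\frac{2}{k})^k\le e^2$. Only after that does it insert $\|\sum_{j<k}\zeta_{s,i}^{t,j}\|^2\le k\sum_{j<k}\|\zeta_{s,i}^{t,j}\|^2$ into the martingale split. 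That recursion is the only source of three things:
\begin{enumerate}
\item the factor $e^2$;
\item the quartic feedback $24e^2(\eta L)^4K^4$, since the drift re-enters through the conditional variance inside the $\zeta$-differences;
\item the $96(\eta K)^2(\eta KL)^2$ variance term.
\end{enumerate}

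\textbf{Why your narrative does not fit your argument.} Your pathwise bound $\|\zeta_{s,i}^{t,j}-\zeta_{s,i}^{t,0}\|\le\eta L\|x_{s,i}^{t,j}-x^t\|$ bypasses that recursion entirely. So the $(1+1/K)^K$ growth, the ``squaring twice'' $e^2$, and $c_1\approx24(\eta L)^4K^3+24\eta^2L^2$ do not arise in your argument. Your mean-path feedback is of order $(\eta LK)^2$, not $(\eta LK)^4$. The hypothesis $\eta LK\le1$ that you invoke does not make it absorbable, since at $\eta LK=1$ the term $2(\eta LK)^2$ already exceeds $1$.

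\textbf{The correct assembly.} Write $B_k:=\sum_{j<k}(\zeta_{s,i}^{t,j}-\zeta_{s,i}^{t,0})$ and $C_k:=\sum_{j<k}\varepsilon_j$ for your noise sum. Set $a_k:=\mathbb{E}\|x_{s,i}^{t,k}-x^t\|^2$, $z:=\mathbb{E}\|\zeta_{s,i}^{t,0}\|^2$ and $V:=\beta^2\sigma^2+4(\gamma L)^2\mathbb{E}\|g^{t-1}\|^2$.

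Use the exact orthogonality $\mathbb{E}\|C_k\|^2=\sum_{j<k}\mathbb{E}\|\varepsilon_j\|^2$ instead of Lemma~\ref{L2}. That lemma's uniform conditional-variance hypothesis fails here, because your variance bound contains the random quantity $\|x_{s,i}^{t,j}-x^t\|^2$.

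Then $\|k\zeta_{s,i}^{t,0}+B_k+C_k\|^2\le4k^2\|\zeta_{s,i}^{t,0}\|^2+4\|B_k\|^2+2\|C_k\|^2$. Combining this with your noise bound and your Lipschitz bound gives
\begin{align*}
a_k&\le4k^2z+(4k+16)\eta^2L^2\sum_{j<k}a_j+4k\eta^2V,\\
\sum_{k}a_k&\le\frac43K^3z+\big(2(\eta LK)^2+16\eta^2L^2K\big)\sum_k a_k+2\eta^2K^2V.
\end{align*}

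The second hypothesis, multiplied by $\frac23$, gives $16\eta^2L^2K\le\frac13-16e^2(\eta LK)^4$. With $p:=(\eta LK)^2$, the feedback coefficient is therefore at most $\frac13+\max_{p\ge0}(2p-16e^2p^2)=\frac13+\frac{1}{16e^2}<\frac12$.

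Absorbing, dividing by $K$ and averaging over $i$ yields $U_s^t\le\frac83K^2\Xi_s^t+4\eta^2KV$. This implies the lemma, because $\frac83\le4e^2$ and $4\eta^2K\le12(\eta K)^2K^{-1}$.

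\textbf{What each route buys.} Do not try to reproduce the paper's constants. Your route is shorter, gives smaller constants, and drops the $(\eta KL)^2$ variance term. It only needs $(\eta LK)^2$ and $\eta^2L^2K$ to lie below absolute constants, which the paper's hypothesis implies. The paper's STEM-style recursion instead controls every conditional increment $\zeta_{s,i}^{t,j}$, at the price of the factor $e^2$ and a quartic step-size condition.
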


\begin{proof}
Note that 
\begin{align*}
	\zeta_{s,i}^{t,k}
	&=\mathbb{E}[x_{s,i}^{t,k+1}-x_{s,i}^{t,k}\mid\mathcal{F}^{t,k}]
	=\mathbb{E}[-\eta g_{s,i}^{t,k}\mid\mathcal{F}^{t,k}]\\
	&=-\eta\left(\nabla f_{s,i}(x_{s,i}^{t,k})
	+(1-\beta)\left(g_s^{t-1}-\nabla f_{s,i}(x^{t-1})\right)\right).
\end{align*}
Throughout this proof, for a square-integrable random vector $X$, we use the scalar conditional variance
\begin{align*}
	\operatorname{Var}(X\mid\mathcal F)
	:=\mathbb E\!\left[\left\|X-\mathbb E[X\mid\mathcal F]\right\|^2\mid\mathcal F\right].
\end{align*}
Thus, we have
\begin{align*}
	\mathbb{E}[\|\zeta_{s,i}^{t,j}-\zeta_{s,i}^{t,j-1}\|^{2}] =&\mathbb{E}\left[\left\|-\eta\left(\nabla f_{s, i} (x_{s, i}^{t, j})+(1-\beta)\left(g_s^{t-1}-\nabla f_{s, i}(x^{t-1})\right)\right) \right.\right.\\
	&\left.\left.+\eta\left(\nabla f_{s, i} (x_{s, i}^{t, j-1})+(1-\beta)\left(g_s^{t-1}-\nabla f_{s, i}(x^{t-1})\right)\right) \right\|^2\right]\\
	=&\mathbb{E}\left[\left\|-\eta\nabla f_{s, i} (x_{s, i}^{t, j})+\eta\nabla f_{s, i} (x_{s, i}^{t, j-1})\right\|^2\right]\\
	\leq& \eta^{2} L^{2} \mathbb{E}[\|x_{s,i}^{t,j}-x_{s,i}^{t,j-1}\|^{2}] \\
	=& \eta^{2} L^{2}\left(\mathbb{E}[\|\zeta_{s,i}^{t,j-1}\|^{2}]+\mathbb{E}\left[\operatorname{Var}(x_{s,i}^{t,j}-x_{s,i}^{t,j-1} \mid \mathcal{F}^{t,j-1})\right]\right),
\end{align*}
where the last identity is the conditional bias--variance decomposition. Since
\begin{align*}
	& \mathbb{E}\left[\operatorname{Var}(x_{s,i}^{t, j}-x_{s,i}^{t, j-1} \mid \mathcal{F}^{t, j-1})\right] \\
	= &\mathbb{E}\left[\operatorname{Var}(-\eta g_{s,i}^{t,j-1} \mid \mathcal{F}^{t, j-1})\right] \\
	= &\mathbb{E}\left[ \mathbb{E}\left[ \| -\eta g_{s,i}^{t,j-1} - \mathbb{E}[-\eta g_{s,i}^{t,j-1} \mid \mathcal{F}^{t, j-1}] \|^2 \mid \mathcal{F}^{t, j-1} \right] \right] \\
	= &\eta^{2} \mathbb{E}\left[\left\|\nabla F_{s}(x_{s,i}^{t,j-1};\xi_{i}^{t,j-1})-\nabla f_{s,i}(x_{s,i}^{t, j-1})-(1-\beta)\left(\nabla F_{s}(x^{t-1} ; \xi_{i}^{t, j-1})-\nabla f_{s,i}(x^{t-1})\right)\right\|^{2}\right] \\
	= &\eta^2 \mathbb{E}\left[ \left\| \beta\left(\nabla F_{s}(x_{s,i}^{t,j-1} ; \xi_i^{t,j-1})-\nabla f_{s,i}(x_{s ,i}^{t,j-1})\right)+(1-\beta)\left(\nabla F_{s}(x_{s,i}^{t,j-1} ; \xi_i^{t,j-1})-\nabla F_{s}(x^{t-1} ; \xi_i^{t,j-1})\right)\right.\right. \\
	&\left.\left.+(1-\beta)\left(\nabla f_{s,i}(x^{t-1})-\nabla f_{s,i}(x_{s,i}^{t,j-1})\right)\right\|^2\right]\\
	\leq & \eta^2 \mathbb{E}\left[3 \beta^2\left\|\nabla F_{s}(x_{s,i}^{t,j-1} ; \xi_i^{t,j-1})-\nabla f_{s,i}(x_{s ,i}^{t,j-1})\right\|^2\right.\\
	&\left.+3(1-\beta)^2\left\|\nabla F_{s}(x_{s,i}^{t,j-1} ; \xi_i^{t,j-1})-\nabla F_{s}(x^{t-1} ; \xi_i^{t,j-1})\right\|^2+3(1-\beta)^2\left\|\nabla f_{s,i}(x^{t-1})-\nabla f_{s,i}(x_{s,i}^{t,j-1})\right\|^2\right]\\
	\leq & \eta^2\left(3\beta^2\sigma^2+6(1-\beta)^2L^2\mathbb{E}[\|x^{t-1}-x_{s,i}^{t,j-1}\|^2]\right),
\end{align*}
where the last inequality follows from Assumptions \ref{A1} and \ref{A2}. Thus, we have
\begin{align*}
	& \mathbb{E}[\|\zeta_{s,i}^{t, j}-\zeta_{s,i}^{t, j-1}\|^{2}] \\
	\leq& \eta^{2} L^{2}\left(\mathbb{E}[\|\zeta_{s,i}^{t, j-1}\|^{2}]+3 \beta^{2} \eta^{2} \sigma^{2}+6 \eta^{2}(1-\beta)^{2} L^{2} \mathbb{E}[\|x^{t-1}-x_{s,i}^{t, j-1}\|^{2}]\right) \\
	\leq&  \eta^{2} L^{2}\left(\mathbb{E}[\|\zeta_{s,i}^{t, j-1}\|^{2}]+3 \beta^{2} \eta^{2} \sigma^{2}+12 \eta^{2} L^{2} \mathbb{E}[\|x^{t-1}-x^{t}\|^{2}+\|x^{t}-x_{s,i}^{t, j-1}\|^{2}]\right).
\end{align*}
Fix $k\in\{1,\dots,K-1\}$. For every $j\in\{1,\dots,k-1\}$, Young's inequality and
$\eta L\leq K^{-1}\leq(k+1)^{-1}$ give
\begin{align}\label{3.2}
	& \mathbb{E}[\|\zeta_{s,i}^{t, j}\|^{2}] \leq(1+\frac {1}{k})\mathbb{E}[\|\zeta_{s,i}^{t,j-1}\|^2]+(1+k) \mathbb{E}[\|\zeta_{s,i}^{t, j}-\zeta_{s,i}^{t, j-1}\|^{2}] \nonumber\\
	\leq & (1+\frac{2}{k}) \mathbb{E}[\|\zeta_{s,i}^{t, j-1}\|^{2}]+(1+k) \eta^{2} L^{2}\left(3 \beta^{2} \eta^{2} \sigma^{2}+12 \eta^{2} L^{2} \mathbb{E}[\|x^{t-1}-x^{t}\|^{2}+\|x^{t}-x_{s,i}^{t, j-1}\|^{2}]\right)\nonumber \\
	=& (1+\frac{2}{k}) \mathbb{E}[\|\zeta_{s,i}^{t, j-1}\|^{2}]+12(1+k)(\eta L)^4\mathbb{E}[\|x^{t}-x_{s,i}^{t, j-1}\|^{2}]+(1+k) \eta^{4} L^{2}\left(3\beta^{2}  \sigma^{2}+12  L^{2} \mathbb{E}[\|x^{t-1}-x^{t}\|^{2}]\right)\nonumber \\
	\leq &(1+\frac{2}{k})\Big((1+\frac{2}{k}) \mathbb{E}[\|\zeta_{s,i}^{t, j-2}\|^{2}]+12(1+k)(\eta L)^4\mathbb{E}[\|x^{t}-x_{s,i}^{t, j-2}\|^{2}]\nonumber\\
	&+(1+k) \eta^{4} L^{2}\left(3\beta^{2}  \sigma^{2}+12  L^{2} \mathbb{E}[\|x^{t-1}-x^{t}\|^{2}]\right)\Big)+12(1+k)(\eta L)^4\mathbb{E}[\|x^{t}-x_{s,i}^{t, j-1}\|^{2}]\nonumber\\
	&+(1+k) \eta^{4} L^{2}\left(3\beta^{2} \sigma^{2}+12 L^{2} \mathbb{E}[\|x^{t-1}-x^{t}\|^{2}]\right)\nonumber\\
	=& (1+\frac{2}{k})^2 \mathbb{E}[\|\zeta_{s,i}^{t, j-2}\|^{2}]+(1+\frac{2}{k})12(1+k)(\eta L)^4\mathbb{E}[\|x^{t}-x_{s,i}^{t, j-2}\|^{2}]+12(1+k)(\eta L)^4\mathbb{E}[\|x^{t}-x_{s,i}^{t, j-1}\|^{2}]\nonumber\\
	&+(1+\frac{2}{k})(1+k) \eta^{4} L^{2}\left(3\beta^{2} \sigma^{2}+12 L^{2} \mathbb{E}[\|x^{t-1}-x^{t}\|^{2}]\right)+(1+k) \eta^{4} L^{2}\left(3\beta^{2} \sigma^{2}+12 L^{2} \mathbb{E}[\|x^{t-1}-x^{t}\|^{2}]\right) \nonumber\\
	& \vdots \nonumber\\
	\leq&(1+\frac{2}{k})^j \mathbb{E}[\|\zeta_{s,i}^{t, 0}\|^{2}]+12(1+k)(\eta L)^4\left((1+\frac{2}{k})^{j-1}\mathbb{E}[\|x^{t}-x_{s,i}^{t,0}\|^{2}]+\dots +(1+\frac{2}{k})^{0}\mathbb{E}[\|x^{t}-x_{s,i}^{t,j-1}\|^{2}]\right)\nonumber\\
	&+(1+k) \eta^{4} L^{2}\left(3\beta^{2} \sigma^{2}+12 L^{2} \mathbb{E}[\|x^{t-1}-x^{t}\|^{2}]\right)\left((1+\frac{2}{k})^{0}+\dots+(1+\frac{2}{k})^{j-1}\right)\nonumber\\
	\leq&(1+\frac{2}{k})^{k-1} \mathbb{E}[\|\zeta_{s,i}^{t, 0}\|^{2}]+12(1+k)(\eta L)^4\sum_{j'=0}^{j-1}\left((1+\frac{2}{k})^{j-1-j'}\mathbb{E}[\|x^{t}-x_{s,i}^{t,j'}\|^{2}]\right)\nonumber\\
	&+\frac{1-\left(1+\frac{2}{k}\right)^j}{1-\left(1+\frac{2}{k}\right)}(1+k) \eta^{4} L^{2}\left(3\beta^{2} \sigma^{2}+12 L^{2} \mathbb{E}[\|x^{t-1}-x^{t}\|^{2}]\right) \nonumber\\
	\leq & e^{2}\mathbb{E}[\|\zeta_{s,i}^{t, 0}\|^{2}]+12 e^{2} k(\eta L)^{4} \sum_{j^{\prime}=0}^{j-1} \mathbb{E}[\|x_{s,i}^{t, j^{\prime}}-x^{t}\|^{2}]+8 k^{2} L^{2} \eta^{4} \left(3 \beta^{2} \sigma^{2}+12 L^{2} \mathbb{E}[\|x^{t}-x^{t-1}\|^{2}]\right),
\end{align}
where the recursion is unrolled only over the index $j$ (with $k$ fixed), and we use
$1+k\leq2k$, and
$\left(1+\frac{2}{k}\right)^k\leq e^2$. Define
\begin{align*}
	\Delta_j&:=x_{s,i}^{t,j+1}-x_{s,i}^{t,j},
	&\varepsilon_j&:=\Delta_j-\zeta_{s,i}^{t,j},\qquad 0\leq j\leq k-1.
\end{align*}
Then $\varepsilon_j$ is $\mathcal F^{t,j+1}$-measurable and
$\mathbb E[\varepsilon_j\mid\mathcal F^{t,j}]=0$. If $0\leq j<\ell\leq k-1$, then
$\varepsilon_j$ is $\mathcal F^{t,\ell}$-measurable, and hence the tower property yields the martingale orthogonality relation
\begin{align*}
	\mathbb E\langle\varepsilon_j,\varepsilon_\ell\rangle
	&=\mathbb E\!\left[\mathbb E\bigl[\langle\varepsilon_j,\varepsilon_\ell\rangle
	\mid\mathcal F^{t,\ell}\bigr]\right]
	=\mathbb E\!\left[\left\langle\varepsilon_j,
	\mathbb E[\varepsilon_\ell\mid\mathcal F^{t,\ell}]\right\rangle\right]=0.
\end{align*}
Consequently,
\begin{align}\label{3.3}
	\mathbb{E}[\|x_{s,i}^{t,k}-x^t\|^2]
	&=\mathbb E\left[\left\|\sum_{j=0}^{k-1}(\zeta_{s,i}^{t,j}+\varepsilon_j)\right\|^2\right]\nonumber\\
	&\leq2\mathbb E\left[\left\|\sum_{j=0}^{k-1}\zeta_{s,i}^{t,j}\right\|^2\right]
	+2\mathbb E\left[\left\|\sum_{j=0}^{k-1}\varepsilon_j\right\|^2\right]\nonumber\\
	&=2\mathbb E\left[\left\|\sum_{j=0}^{k-1}\zeta_{s,i}^{t,j}\right\|^2\right]
	+2\sum_{j=0}^{k-1}\mathbb E[\|\varepsilon_j\|^2]\nonumber\\
	&=2\mathbb E\left[\left\|\sum_{j=0}^{k-1}\zeta_{s,i}^{t,j}\right\|^2\right]
	+2\sum_{j=0}^{k-1}\mathbb E\left[\operatorname{Var}(\Delta_j\mid\mathcal F^{t,j})\right]\nonumber\\
	\leq & 2 k \sum_{j=0}^{k-1} \mathbb{E}[\|\zeta_{s,i}^{t, j}\|^{2}]+2 \sum_{j=0}^{k-1}\left(3 \beta^{2} \eta^{2} \sigma^{2}+12 \eta^{2} L^{2} \mathbb{E}[\|x^{t-1}-x^{t}\|^{2}+\|x^{t}-x_{s,i}^{t, j}\|^{2}]\right),
\end{align}
where the last line uses Cauchy--Schwarz and the conditional-variance bound proved above. By combining \eqref{3.2} and \eqref{3.3},
\begin{align}\label{3.4}
	&\mathbb{E}[\|x_{s,i}^{t, k}-x^{t}\|^{2}] \nonumber\\
	\leq& 2k \sum_{j=0}^{k-1}\left(e^{2}\mathbb{E}[\|\zeta_{s,i}^{t, 0}\|^{2}]+12 e^{2} k(\eta L)^{4} \sum_{j^{\prime}=0}^{j-1} \mathbb{E}[\|x_{s,i}^{t, j^{\prime}}-x^{t}\|^{2}]+8 k^{2} L^{2} \eta^{4} \left(3 \beta^{2} \sigma^{2}+12 L^{2} \mathbb{E}[\|x^{t}-x^{t-1}\|^{2}]\right)      \right)\nonumber\\
	&+2 \sum_{j=0}^{k-1}\left(3 \beta^{2} \eta^{2} \sigma^{2}+12 \eta^{2} L^{2} \mathbb{E}[\|x^{t-1}-x^{t}\|^{2}+\|x^{t}-x_{s,i}^{t, j}\|^{2}]\right).
\end{align}
Summing \eqref{3.4} over $k=0,1,\dots,K-1$, we have
\begin{align*}
	&\sum_{k=0}^{K-1} \mathbb{E}[\|x_{s,i}^{t, k}-x^{t}\|^{2}] \\
	\leq& \underbrace{\sum_{k=0}^{K-1}2k \sum_{j=0}^{k-1}e^{2}\mathbb{E}[\|\zeta_{s,i}^{t, 0}\|^{2}]}_{\circled{1}}+ \underbrace{\sum_{k=0}^{K-1}2k \sum_{j=0}^{k-1}12 e^{2} k(\eta L)^{4} \sum_{j^{\prime}=0}^{j-1} \mathbb{E}[\|x_{s,i}^{t, j^{\prime}}-x^{t}\|^{2}]}_{\circled{2}}\\
	&+\underbrace{\sum_{k=0}^{K-1}2k \sum_{j=0}^{k-1}8 k^{2} L^{2} \eta^{4} \left(3 \beta^{2} \sigma^{2}+12 L^{2} \mathbb{E}[\|x^{t}-x^{t-1}\|^{2}]\right)}_{\circled{3}} +\underbrace{\sum_{k=0}^{K-1}2 \sum_{j=0}^{k-1}\left(3 \beta^{2} \eta^{2} \sigma^{2}+12 \eta^{2} L^{2} \mathbb{E}[\|x^{t-1}-x^{t}\|^{2}]\right)}_{\circled{4}}\\
	&+\underbrace{\sum_{k=0}^{K-1}2 \sum_{j=0}^{k-1}12 \eta^{2} L^{2}\mathbb{E}[\|x^{t}-x_{s,i}^{t, j}\|^{2}]}_{\circled{5}}.
\end{align*}
For $\circled{1}$, we have
\begin{align*}
	\sum_{k=0}^{K-1}2k \sum_{j=0}^{k-1}e^{2}\mathbb{E}[\|\zeta_{s,i}^{t, 0}\|^{2}]= \sum_{k=0}^{K-1}2k^2e^{2}\mathbb{E}[\|\zeta_{s,i}^{t, 0}\|^{2}] \leq 2K^3e^2\mathbb{E}[\|\zeta_{s,i}^{t, 0}\|^{2}].
\end{align*}
For $\circled{2}$, we have
\begin{align*}
	&\sum_{k=0}^{K-1}2k \sum_{j=0}^{k-1}12 e^{2} k(\eta L)^{4} \sum_{j^{\prime}=0}^{j-1} \mathbb{E}[\|x_{s,i}^{t, j^{\prime}}-x^{t}\|^{2}]\\
	=&\sum_{k=0}^{K-1}24e^{2} k^2(\eta L)^{4} \sum_{j=0}^{k-1}\left( \mathbb{E}[\|x_{s,i}^{t, 0}-x^{t}\|^{2}]+\mathbb{E}[\|x_{s,i}^{t, 1}-x^{t}\|^{2}]+\dots+\mathbb{E}[\|x_{s,i}^{t,j-1}-x^{t}\|^{2}]\right)\\
	=&\sum_{k=0}^{K-1}24e^{2} k^2(\eta L)^{4}\left(\mathbb{E}[\|x_{s,i}^{t, 0}-x^{t}\|^{2}]+\left(\mathbb{E}[\|x_{s,i}^{t, 0}-x^{t}\|^{2}]+\mathbb{E}[\|x_{s,i}^{t,1}-x^{t}\|^{2}]\right) \right.\\
	&\left.+\dots+\left(\mathbb{E}[\|x_{s,i}^{t, 1}-x^{t}\|^{2}]+\dots+\mathbb{E}[\|x_{s,i}^{t, k-2}-x^{t}\|^{2}]\right)\right)\\
	=&\sum_{k=0}^{K-1}24e^{2} k^2(\eta L)^{4}\left((k-2)\mathbb{E}[\|x_{s,i}^{t, 1}-x^{t}\|^{2}]+(k-3)\mathbb{E}[\|x_{s,i}^{t, 2}-x^{t}\|^{2}]+\dots+\mathbb{E}[\|x_{s,i}^{t,k-2}-x^{t}\|^{2}]\right)\\
	\leq&\sum_{k=0}^{K-1}24e^{2} k^2(\eta L)^{4}\left(k\mathbb{E}[\|x_{s,i}^{t, 1}-x^{t}\|^{2}]+k\mathbb{E}[\|x_{s,i}^{t, 2}-x^{t}\|^{2}]+\dots+k\mathbb{E}[\|x_{s,i}^{t,k-2}-x^{t}\|^{2}]\right)\\
	=&24e^{2} (\eta L)^{4}\sum_{k=0}^{K-1}k^3\left(\mathbb{E}[\|x_{s,i}^{t, 1}-x^{t}\|^{2}]+\mathbb{E}[\|x_{s,i}^{t, 2}-x^{t}\|^{2}]+\dots+\mathbb{E}[\|x_{s,i}^{t,k-2}-x^{t}\|^{2}]\right)\\
	=&24e^{2} (\eta L)^{4}\left(3^3\mathbb{E}[\|x_{s,i}^{t, 1}-x^{t}\|^{2}]+4^3\left(\mathbb{E}[\|x_{s,i}^{t, 1}-x^{t}\|^{2}]+\mathbb{E}[\|x_{s,i}^{t,2}-x^{t}\|^{2}]\right)\right.\\
	&\left.+\dots+(K-1)^3\left(\mathbb{E}[\|x_{s,i}^{t, 1}-x^{t}\|^{2}]+\dots+\mathbb{E}[\|x_{s,i}^{t, K-3}-x^{t}\|^{2}]\right)\right)\\
	\leq&24e^{2} (\eta L)^{4}\left(K^3\mathbb{E}[\|x_{s,i}^{t, 1}-x^{t}\|^{2}]+K^3\left(\mathbb{E}[\|x_{s,i}^{t, 1}-x^{t}\|^{2}]+\mathbb{E}[\|x_{s,i}^{t,2}-x^{t}\|^{2}]\right)\right.\\
	&\left.+\dots+K^3\left(\mathbb{E}[\|x_{s,i}^{t, 1}-x^{t}\|^{2}]+\dots+\mathbb{E}[\|x_{s,i}^{t, K-3}-x^{t}\|^{2}]\right)\right)\\
	=&24e^{2} (\eta L)^{4}\left((K-3)K^3\mathbb{E}[\|x_{s,i}^{t, 1}-x^{t}\|^{2}]+\dots+K^3\mathbb{E}[\|x_{s,i}^{t,K-3}-x^{t}\|^{2}]\right)\\
	\leq&24e^{2} (\eta L)^{4}K^4\sum_{k=0}^{K-1}\mathbb{E}[\|x_{s,i}^{t,k}-x^{t}\|^{2}].
\end{align*}
For $\circled{3}$, we have
\begin{align*}
	&\sum_{k=0}^{K-1}2k \sum_{j=0}^{k-1}8 k^{2} L^{2} \eta^{4} \left(3 \beta^{2} \sigma^{2}+12 L^{2} \mathbb{E}[\|x^{t}-x^{t-1}\|^{2}]\right)\\
	=&\sum_{k=0}^{K-1}16k^{4} L^{2} \eta^{4}\left(3 \beta^{2} \sigma^{2}+12 L^{2} \mathbb{E}[\|x^{t}-x^{t-1}\|^{2}]\right)\\
	\leq& 16K^{5} L^{2} \eta^{4}\left(3 \beta^{2} \sigma^{2}+12 L^{2} \mathbb{E}[\|x^{t}-x^{t-1}\|^{2}]\right).
\end{align*}
For $\circled{4}$, we have
\begin{align*}
	&\sum_{k=0}^{K-1}2 \sum_{j=0}^{k-1}\left(3 \beta^{2} \eta^{2} \sigma^{2}+12 \eta^{2} L^{2} \mathbb{E}[\|x^{t-1}-x^{t}\|^{2}]\right)\\
	=&\sum_{k=0}^{K-1}2k\left(3 \beta^{2} \eta^{2} \sigma^{2}+12 \eta^{2} L^{2} \mathbb{E}[\|x^{t-1}-x^{t}\|^{2}]\right)\\
	\leq&2K^2\left(3 \beta^{2} \eta^{2} \sigma^{2}+12 \eta^{2} L^{2} \mathbb{E}[\|x^{t-1}-x^{t}\|^{2}]\right).
\end{align*}
For $\circled{5}$, we have
\begin{align*}
	&\sum_{k=0}^{K-1}2 \sum_{j=0}^{k-1}12 \eta^{2} L^{2}\mathbb{E}[\|x^{t}-x_{s,i}^{t, j}\|^{2}]\\
	=&\sum_{k=0}^{K-1}24 \eta^{2} L^{2}\left(\mathbb{E}[\|x^{t}-x_{s,i}^{t, 1}\|^{2}]+\dots+\mathbb{E}[\|x^{t}-x_{s,i}^{t, k-1}\|^{2}]\right)\\
	=&24 \eta^{2} L^{2}\left(\mathbb{E}[\|x^{t}-x_{s,i}^{t, 1}\|^{2}]+\left(\mathbb{E}[\|x^{t}-x_{s,i}^{t, 1}\|^{2}]+\mathbb{E}[\|x^{t}-x_{s,i}^{t, 2}\|^{2}]\right)\right.\\
	&\left.+\dots+\left(\mathbb{E}[\|x^{t}-x_{s,i}^{t, 1}\|^{2}]+\dots+\mathbb{E}[\|x^{t}-x_{s,i}^{t, K-2}\|^{2}]\right)\right)\\
	=&24 \eta^{2} L^{2}\left((K-2)\mathbb{E}[\|x^{t}-x_{s,i}^{t, 1}\|^{2}]+(K-3)\mathbb{E}[\|x^{t}-x_{s,i}^{t, 2}\|^{2}]+\dots+\mathbb{E}[\|x^{t}-x_{s,i}^{t, K-2}\|^{2}]\right)\\
	\leq&24 \eta^{2} L^{2}K\sum_{k=0}^{K-1}\mathbb{E}[\|x^{t}-x_{s,i}^{t, k}\|^{2}].
\end{align*}
Combining $\circled{1}$--$\circled{5}$ yields
\begin{align*}
	&\sum_{k=0}^{K-1}\mathbb{E}[\|x_{s,i}^{t, k}-x^{t}\|^{2}]\\
	\leq&2K^3e^2\mathbb{E}[\|\zeta_{s,i}^{t, 0}\|^{2}]+24e^{2} (\eta L)^{4}K^4\sum_{k=0}^{K-1}\mathbb{E}[\|x_{s,i}^{t,k}-x^{t}\|^{2}]+16K^{5} L^{2} \eta^{4}\left(3 \beta^{2} \sigma^{2}+12 L^{2} \mathbb{E}[\|x^{t}-x^{t-1}\|^{2}]\right)\\
	&+2K^2\left(3 \beta^{2} \eta^{2} \sigma^{2}+12 \eta^{2} L^{2} \mathbb{E}[\|x^{t-1}-x^{t}\|^{2}]\right)+24 \eta^{2} L^{2}K\sum_{k=0}^{K-1}\mathbb{E}[\|x^{t}-x_{s,i}^{t, k}\|^{2}]\\
	=&2K^3e^2\mathbb{E}[\|\zeta_{s,i}^{t, 0}\|^{2}]+(48K^5\eta^4L^2+6K^2\eta^2)\left(\beta^{2} \sigma^{2}+4 L^{2} \mathbb{E}[\|x^{t}-x^{t-1}\|^{2}]\right)\\
	&+\left(24e^2(\eta L)^4K^4+24\eta^2L^2K\right)\sum_{k=0}^{K-1}\mathbb{E}[\|x^t-x_{s,i}^{t,k}\|^2]\\
	\leq&2K^3e^2\mathbb{E}[\|\zeta_{s,i}^{t, 0}\|^{2}]+(48K^5\eta^4L^2+6K^2\eta^2)\left(\beta^{2} \sigma^{2}+4 L^{2} \mathbb{E}[\|x^{t}-x^{t-1}\|^{2}]\right)+\frac{1}{2}\sum_{k=0}^{K-1}\mathbb{E}[\|x^t-x_{s,i}^{t,k}\|^2],
\end{align*}
where the last inequality follows from $24e^2(\eta L)^4K^4+24\eta^2L^2K\leq \frac{1}{2}$. Thus, we obtain
\begin{align*}
	\sum_{k=0}^{K-1}\mathbb{E}[\|x_{s,i}^{t, k}-x^{t}\|^{2}]\leq 4K^3e^2\mathbb{E}[\|\zeta_{s,i}^{t, 0}\|^{2}]+12(\eta K)^2(8K^3\eta^2L^2+1)\left(\beta^{2} \sigma^{2}+4 L^{2} \mathbb{E}[\|x^{t}-x^{t-1}\|^{2}]\right),
\end{align*}
Finally, summing over $i=1,\dots,M$, dividing by $KM$, using the definitions of
$U_s^t$ and $\Xi_s^t$, and substituting $x^t-x^{t-1}=\gamma g^{t-1}$, the conclusion follows.
\end{proof}

\begin{lemma}\label{L5} Suppose that Assumptions \ref{A1} and \ref{A3} hold. If $\max\{1152e^2(1-\beta)^2 \left(\eta KL\right)^2, 1152e^2(1-\beta)^2 \left(\eta \gamma KL^2\right)^2\} \leq \beta^{2}$, then the following inequality holds:
\begin{equation*}
	\sum_{t=0}^{T-1} \Xi_{s}^t \leq \frac{\beta^{2}}{288 e^2 K^{2} L^{2}} \sum_{t=0}^{T-1}\left(\mathcal{E}_{s}^{t-1}+\mathbb{E}[\|g^{t-1}\|^{2}]\right)+4 \eta^{2}TD^2.
\end{equation*}
\end{lemma}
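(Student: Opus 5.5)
The plan is to bound each summand $\Xi_s^t$ pointwise in $t$ and then sum over $t=0,\dots,T-1$; no recursion is needed. First I will write $\zeta_{s,i}^{t,0}$ explicitly, exactly as at the start of the proof of Lemma \ref{L4}. Since $x_{s,i}^{t,0}=x^t$,
\begin{equation*}
\zeta_{s,i}^{t,0}=-\eta\Bigl(\nabla f_{s,i}(x^t)+(1-\beta)\bigl(g_s^{t-1}-\nabla f_{s,i}(x^{t-1})\bigr)\Bigr).
\end{equation*}
The key step is to split the bracket into four pieces, each controlled by one of the available quantities: the estimator error, the previous step $g^{t-1}$, or the bounded local gradients of Assumption \ref{A3}. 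Writing $\nabla f_{s,i}(x^t)-(1-\beta)\nabla f_{s,i}(x^{t-1})=\beta\nabla f_{s,i}(x^t)+(1-\beta)\bigl(\nabla f_{s,i}(x^t)-\nabla f_{s,i}(x^{t-1})\bigr)$ and $g_s^{t-1}=(g_s^{t-1}-\nabla f_s(x^{t-1}))+\nabla f_s(x^{t-1})$, I obtain
\begin{equation*}
-\tfrac{1}{\eta}\zeta_{s,i}^{t,0}=\beta\nabla f_{s,i}(x^t)+(1-\beta)\bigl(\nabla f_{s,i}(x^t)-\nabla f_{s,i}(x^{t-1})\bigr)+(1-\beta)\bigl(g_s^{t-1}-\nabla f_s(x^{t-1})\bigr)+(1-\beta)\nabla f_s(x^{t-1}).
\end{equation*}

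Next I apply $\|a_1+\dots+a_4\|^2\le 4\sum_j\|a_j\|^2$ and bound the four terms separately. For the second term, $L$-smoothness of $f_{s,i}$ (noted after Assumption \ref{A1}) together with $x^t-x^{t-1}=-\gamma g^{t-1}$ gives at most $L^2\gamma^2\|g^{t-1}\|^2$. The third term has expectation $\mathcal{E}_s^{t-1}$; for $t=0$ this is the defined $\mathcal{E}_s^{-1}$, and there $x^{-1}=x^0$. For the fourth term, Jensen gives $\|\nabla f_s(x^{t-1})\|^2\le\frac1M\sum_i\|\nabla f_{s,i}(x^{t-1})\|^2$. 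Averaging over $i$, taking expectations and invoking Assumption \ref{A3} at $x^t$ and at $x^{t-1}$ (using $x^{-1}=x^0$ when $t=0$), I get
\begin{equation*}
\Xi_s^t\le 4\eta^2\bigl(\beta^2+(1-\beta)^2\bigr)D^2+4\eta^2(1-\beta)^2\mathcal{E}_s^{t-1}+4\eta^2(1-\beta)^2\gamma^2L^2\,\mathbb{E}[\|g^{t-1}\|^2].
\end{equation*}

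Finally, I match the coefficients to the stated hypotheses. Since $\beta^2+(1-\beta)^2\le1$, the constant term is at most $4\eta^2D^2$. The condition $1152e^2(1-\beta)^2(\eta KL)^2\le\beta^2$ gives $4\eta^2(1-\beta)^2\le \beta^2/(288e^2K^2L^2)$. The condition $1152e^2(1-\beta)^2(\eta\gamma KL^2)^2\le\beta^2$ gives $4\eta^2(1-\beta)^2\gamma^2L^2\le\beta^2/(288e^2K^2L^2)$. Summing over $t=0,\dots,T-1$ then yields the claim.

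The main obstacle is choosing the decomposition so that the client-heterogeneity term $\nabla f_{s,i}-\nabla f_s$ never appears on its own and is absorbed by $D$ through Jensen's inequality. The remaining care point is the $t=0$ convention for $x^{-1}$ and $g^{-1}$.
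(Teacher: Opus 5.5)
Your proof is correct and is essentially the paper's argument. The paper reaches the same four pieces ($\beta\nabla f_{s,i}(x^t)$, the smoothness difference, the estimator error $g_s^{t-1}-\nabla f_s(x^{t-1})$, and $\nabla f_s(x^{t-1})$) through two nested rounds of $\|a+b\|^2\le 2\|a\|^2+2\|b\|^2$ rather than your single four-way split, and it ends with the same coefficients $4\eta^2(1-\beta)^2$, $4\eta^2(1-\beta)^2(\gamma L)^2$ and $4\eta^2(\beta^2+(1-\beta)^2)D^2$ before applying the two step-size conditions. Your explicit Jensen step for $\|\nabla f_s(x^{t-1})\|^2$ and your treatment of $t=0$ (where $x^0-x^{-1}=0$) fill in details the paper leaves implicit; like the paper, you also tacitly need the unbiasedness in Assumption~\ref{A2} to write $\zeta_{s,i}^{t,0}$ in closed form, although the lemma lists only Assumptions~\ref{A1} and~\ref{A3}.
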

\begin{proof}
Recall that $\zeta_{s,i}^{t, 0}=-\eta\left((1-\beta)\left(g_{s}^{t-1}-\nabla f_{s,i}(x^{t-1})\right)+\nabla f_{s,i}(x^{t})\right)$. Consequently,
\begin{align*}
	\|\zeta_{s,i}^{t, 0}\|^{2} & \leq 2 \eta^{2}\left((1-\beta)^{2}\|g_{s}^{t-1}\|^{2}+\|\nabla f_{s,i}(x^{t})-(1-\beta) \nabla f_{s,i}(x^{t-1})\|^{2}\right) \\
    &= 2 \eta^{2}(1-\beta)^{2}\|g_{s}^{t-1}\|^{2}+2 \eta^{2}\left\|(1-\beta)\left(\nabla f_{s,i}(x^{t})-\nabla f_{s,i}(x^{t-1})\right)+\beta\nabla f_{s,i}(x^{t})\right\|^2\\
    &\leq 2 \eta^{2}(1-\beta)^{2}\|g_{s}^{t-1}\|^{2}+4 \eta^{2}(1-\beta)^2\|\nabla f_{s,i}(x^{t})-\nabla f_{s,i}(x^{t-1})\|^2+4 (\eta\beta) ^2\|\nabla f_{s,i}(x^{t})\|^2\\
	& \leq 2 \eta^{2}(1-\beta)^2\|g_s^{t-1}\|^2+4(\eta L)^2(1-\beta)^2 \|x^t-x^{t-1}\|^2+4(\eta \beta)^2\|\nabla f_{s,i}(x^t)\|^2,
\end{align*}
where the last inequality follows from Assumption \ref{A1}. By Assumption \ref{A3} and the definition of $\Xi_{s}^t$, and summing over $t=0, \dots,T-1$, we have
\begin{align*}
	\sum_{t=0}^{T-1} \Xi_{s}^t = &\sum_{t=0}^{T-1}\frac{1}{M}\sum_{i=1}^{M}\mathbb{E}[\|\zeta_{s,i}^{t, 0}\|^{2}]\\
    \leq& \sum_{t=0}^{T-1}\mathbb{E}\left(2\eta^2(1-\beta)^2\|g_s^{t-1}\|^2+4(\eta L)^2(1-\beta)^2\|x^t-x^{t-1}\|^2+4(\eta \beta)^2 \frac{1}{M}\sum_{i=1}^{M}\|\nabla f_{s,i}(x^t)\|^2  \right) \\
	\leq& \sum_{t=0}^{T-1}\mathbb{E} \Bigg( 4\eta^2(1-\beta)^2\left(\|g_s^{t-1}-\nabla f_s(x^{t-1})\|^2+\|\nabla f_s(x^{t-1})\|^2\right)+4(\eta L)^2(1-\beta)^2\|x^t-x^{t-1}\|^2 \\
	&\left.+4(\eta \beta)^2 \frac{1}{M}\sum_{i=1}^{M}\|\nabla f_{s,i}(x^t)\|^2 \right) \\
	\leq& 4 \eta^2 (1-\beta)^2 \sum_{t=0}^{T-1}\mathcal{E}_s^{t-1}+4 \eta^2 (1-\beta)^2 TD^2+4 \eta^2 (1-\beta)^2(\gamma L)^2 \sum_{t=0}^{T-1}\mathbb{E}[\|g^{t-1}\|^2]+4 \eta^2 \beta^2 TD^2\\
    =&4 \eta^2 (1-\beta)^2 \sum_{t=0}^{T-1}\mathcal{E}_s^{t-1}+4 \eta^2 (1-\beta)^2(\gamma L)^2 \sum_{t=0}^{T-1}\mathbb{E}[\|g^{t-1}\|^2]+4 (1-2\beta+2\beta^2) \eta^2TD^2\\
	\leq& \frac{\beta^{2}}{288 e^2 K^{2} L^{2}} \sum_{t=0}^{T-1}\mathcal{E}_{s}^{t-1}+\frac{\beta^{2}}{288 e^2 K^{2} L^{2}}\sum_{t=0}^{T-1} \mathbb{E}[\|g^{t-1}\|^{2}]+4\eta^2 TD^2,
\end{align*}
where the last inequality follows from  $\max\{1152e^2(1-\beta)^2 \left(\eta KL\right)^2, 1152e^2(1-\beta)^2 \left(\eta \gamma KL^2\right)^2\} \leq \beta^{2}$.
\end{proof}

 In the following theorem, we establish the convergence rate of the proposed algorithm.

\begin{theorem}
\label{T1} 
Suppose that Assumptions \ref{A1}--\ref{A2} and \ref{A3} hold, and that $f_s(x)$ has a lower bound $f_s^{\rm min}$ for all $x\in \mathbb{R}^d$. Let $\Delta := \max_{s \in [S]} (f_s(x^0) - f_s^{\min})$. If we set
\begin{align}\label{3.4.1}
	& \beta =\left( \frac{M K L^2 \left(1+D^2\right)^2\Delta^2}{\sigma^4 T^2} \right)^{1/3}<1,  \quad \gamma=\min\left\{\frac{1}{2L},\frac{1}{L}\sqrt{\frac{\beta KM}{54}}\right\},  \nonumber \\
	&\quad \eta KL
	\le
	c_\eta
	\min\left\{
	\beta,\,
	\frac1{\sqrt K},\,
	\sqrt{\frac{\beta}{M}},\,
	\beta^{3/4}
	\left(
	\frac{L\Delta}{(KM)^{1/2}T}
	\right)^{1/2}
	\right\}, \nonumber
	\end{align}
where \(c_\eta>0\) is a sufficiently small absolute constant, and choose	$B=\left\lceil \frac{K}{T\beta^2}\right\rceil$, then Algorithm \ref{VSOMD} admits the convergence guarantee
\begin{align*}
\mathbb{E}\left[\mathcal{G}(x_a)\right]&
\lesssim\left(\frac{L\left(1+D^2\right)\sigma\Delta}{KMT}\right)^{\frac{2}{3}}(1+S^2)+\frac{LS^2\Delta}{T} \\
&\quad+
\frac{(1+S^2)G_0^2}{T} +
\frac{(1+S^2)\sigma^2}{KM}
\left(
\frac{MKL^2(1+D^2)^2\Delta^2}{\sigma^4T^2}
\right)^{2/3},
\end{align*}
where  $G_0^2:=\frac{1}{S}\sum_{s\in[S]}\left\|\nabla f_s(x^0)\right\|^2.$
\footnotetext[1]{The notation $\lesssim$ absorbs numerical constants.}
Alternatively, if $B=\Theta(KT)$ and $ \beta = \max\left\{\, \frac{1}{T},\; \left( \frac{M K L^2 \left(1+D^2\right)^2\Delta^2}{\sigma^4 T^2} \right)^{1/3} \right\}$, then Algorithm \ref{VSOMD} admits the convergence guarantee
\begin{align*}
  \mathbb{E}\left[\mathcal{G}(x_a)\right]&\lesssim \frac{\sigma^2(1+S^2)}{KMT}+\left(\frac{L\left(1+D^2\right)\sigma\Delta}{KMT}\right)^{\frac{2}{3}}(1+S^2)\\&\quad+ \frac{LS^2\Delta}{T}+\frac{(1+S^2)G_0^2}{T} + \frac{(1+S^2)\sigma^2}{KMT^2}.
\end{align*}

\end{theorem}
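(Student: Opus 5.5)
The proof will combine Lemmas \ref{L1}, \ref{L3}, \ref{L4} and \ref{L5} into a single telescoping inequality in the quantities $\sum_t\mathcal{E}_s^t$ and $\sum_t\mathbb{E}[\|g^t\|^2]$. The resulting bound on $\frac1T\sum_t\mathbb{E}[\|g^t\|^2]$ is then converted into a bound on $\mathbb{E}[\mathcal{G}(x_a)]$.

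\emph{Step 1: accumulated estimator error.} Sum Lemma \ref{L3} over $t=0,\dots,T-1$. The factor $(1-\beta)$ telescopes and gives
\[
\beta\sum_{t=0}^{T-1}\mathcal{E}_s^t\le \mathcal{E}_s^{-1}+\frac4\beta L^2\sum_t U_s^t+\frac{3\beta^2\sigma^2T}{KM}+\frac{2\beta}{9}\sum_{t}\mathbb{E}[\|g^{t-1}\|^2].
\]
Next insert Lemma \ref{L4} for $U_s^t$, and then Lemma \ref{L5} for $\sum_t\Xi_s^t$. The smallness conditions on $\eta KL$ (through $c_\eta$, $\eta KL\le c_\eta\beta$, $\le c_\eta/\sqrt K$) guarantee the following absorptions:
\begin{itemize}
\item the term $\frac4\beta L^2\cdot 4e^2K^2\cdot\frac{\beta^2}{288e^2K^2L^2}\sum(\mathcal{E}^{t-1}_s+\mathbb{E}\|g^{t-1}\|^2)=\frac{\beta}{18}\sum(\cdots)$ is absorbed into the left side and into the $\|g\|^2$ terms;
\item the remaining $U$ contributions are of order $\frac{L^2}{\beta}(\eta K)^2(\beta^2\sigma^2+\gamma^2L^2\|g\|^2)+\frac{L^2}{\beta}K^2\eta^2TD^2$.
\end{itemize}
Dividing by $\beta$ yields
\[
\sum_t\mathcal{E}_s^t\lesssim \frac{\mathcal{E}_s^{-1}}{\beta}+\frac{\beta\sigma^2T}{KM}+\frac{(\eta KL)^2D^2T}{\beta^2}+\Big(\tfrac{2}{9}+\tfrac1{18}+o(1)\Big)\sum_t\mathbb{E}\|g^{t}\|^2+\mathbb{E}\|g^{-1}\|^2 .
\]
Since $\mathcal{E}_s^{-1}\le\sigma^2/(BM)$, the choice of $B$ gives $\mathcal{E}_s^{-1}/\beta\le \sigma^2T\beta/(KM)$. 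The last constraint on $\eta KL$ makes $(\eta KL)^2D^2T/\beta^2\lesssim \beta^{-1/2}\cdot\frac{L\Delta D^2}{\sqrt{KM}}$, which is of the same order as the $\beta\sigma^2T/(KM)$ term under the stated $\beta$.

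\emph{Step 2: combining with the descent lemma.} Plug this into the second inequality of Lemma \ref{L1}. The coefficient of $\sum\mathbb{E}\|g^t\|^2$ on the right is strictly below $\frac12$, so it can be moved to the left, leaving
\[
\frac1T\sum_t\mathbb{E}\|g^t\|^2\lesssim\frac{\Delta}{\gamma T}+\frac{\beta\sigma^2}{KM}+\frac{\mathbb{E}\|g^{-1}\|^2}{T}+\frac{\sigma^2}{BMT\beta}.
\]
With $\gamma=\min\{1/(2L),\sqrt{\beta KM/54}/L\}$ we have $\Delta/(\gamma T)\lesssim L\Delta/T+L\Delta/(T\sqrt{\beta KM})$. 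Substituting the stated $\beta$ balances $L\Delta/(T\sqrt{\beta KM})$ against $\beta\sigma^2/(KM)$, which produces the $(L(1+D^2)\sigma\Delta/(KMT))^{2/3}$ rate. The term $\mathbb{E}\|g^{-1}\|^2$ is bounded by $2G_0^2S+2\sum_s\mathcal{E}_s^{-1}$, since $\|g^{-1}\|\le\|g^{-1}_s\|$ for every $s$.

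\emph{Step 3: passing to $\mathcal{G}$.} Let $\lambda^{t,*}$ be the computed weights. Then
\[
\mathcal{G}(x^t)\le\|\sum_s\lambda^{t,*}_s\nabla f_s(x^t)\|^2\le 2\|g^t\|^2+2S\sum_s\|\nabla f_s(x^t)-g_s^t\|^2 ,
\]
so $\mathbb{E}\mathcal{G}(x_a)\le\frac{2}{T}\sum_t\mathbb{E}\|g^t\|^2+\frac{2S}{T}\sum_s\sum_t\mathcal{E}_s^t$. Reusing the Step 1 bound on $\sum_t\mathcal{E}^t_s$ explains the $(1+S^2)$ factors. The second parameter regime ($B=\Theta(KT)$, $\beta\ge1/T$) follows by rerunning Step 2 with $\mathcal{E}^{-1}_s\lesssim\sigma^2/(KMT)$, which is what yields the extra $\sigma^2(1+S^2)/(KMT)$ term.

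The main obstacle is Step 1: verifying that every coupling coefficient (from Lemmas \ref{L4}--\ref{L5} into Lemma \ref{L3} and then into Lemma \ref{L1}) stays below the absorption thresholds under the stated $c_\eta$ conditions. This includes that $\gamma L\le\sqrt{\beta KM/54}$ keeps the $\|g^{t-1}\|^2$ coefficient at $2/9$, and tracking the $D^2$ drift term so that it matches the claimed rate. I would first fix explicit numeric constraints on $c_\eta$ that make each absorbed coefficient at most $1/36$, and then check the remaining terms against the final expression.
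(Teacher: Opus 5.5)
\paragraph{Same route as the paper, but the $S$-bookkeeping fails as written.}
Your architecture is the paper's own. You sum Lemma~\ref{L3} and insert Lemma~\ref{L4} and then Lemma~\ref{L5}; the $\Xi$-term contributes exactly $\frac{\beta}{18}\sum_t(\mathcal{E}_s^{t-1}+\mathbb{E}\|g^{t-1}\|^2)$. Absorbing gives the coefficient $\frac{6}{17}<\frac12$ on $\sum_t\mathbb{E}\|g^t\|^2$; you then combine with Lemma~\ref{L1}, use $\mathcal{E}_s^{-1}\le\sigma^2/(BM)$, and balance $\beta$.

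The problem is that, as written, your bookkeeping in $S$ does not give the stated $(1+S^2)$ dependence. You use both $\mathbb{E}\|g^{-1}\|^2\le 2SG_0^2+2\sum_s\mathcal{E}_s^{-1}$ and, in Step~3, $\|\sum_s\lambda_s^{t,*}v_s\|^2\le S\sum_s\|v_s\|^2$. Your bound on $\sum_t\mathcal{E}_s^t$ carries $\mathbb{E}\|g^{-1}\|^2$ for every $s$, both directly and through $\sum_t\mathbb{E}\|g^t\|^2$ via Step~2. Hence the term $\frac{2S}{T}\sum_s\sum_t\mathcal{E}_s^t$ produces
\[
\frac{S^2}{T}\mathbb{E}\|g^{-1}\|^2\lesssim\frac{S^3G_0^2}{T}+\frac{S^3\beta^2\sigma^2}{KM}.
\]
This exceeds the claimed $(1+S^2)G_0^2/T$ and $(1+S^2)\beta^2\sigma^2/(KM)$, because $\lesssim$ only absorbs numerical constants.

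\paragraph{How to fix it.}
Both losses are unnecessary.
\begin{itemize}
\item The fact you invoke, $\|g^{-1}\|\le\|g_s^{-1}\|$ for every $s$, gives $\mathbb{E}\|g^{-1}\|^2\le 2G_0^2+\frac{2}{S}\sum_s\mathcal{E}_s^{-1}$ once you average over $s$ instead of summing.
\item Since $\lambda^{t,*}\in\mathcal{C}$, convexity of $\|\cdot\|^2$ gives
\[
\Big\|\sum_s\lambda_s^{t,*}(\nabla f_s(x^t)-g_s^t)\Big\|^2\le\sum_s\lambda_s^{t,*}\|\nabla f_s(x^t)-g_s^t\|^2\le\sum_s\|\nabla f_s(x^t)-g_s^t\|^2,
\]
with no factor $S$.
\end{itemize}
Fixing either one restores $S^2$. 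The paper fixes both and actually obtains $(1+S)$ and $LS\Delta/T$, which implies the stated bound.

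\paragraph{A smaller slip in Step~1.}
You must keep the factor $8(\eta KL)^2+K^{-1}\le(8c_\eta^2+1)/K$ from Lemma~\ref{L4}. Your displayed order $\frac{L^2}{\beta}(\eta K)^2\beta^2\sigma^2$ yields only $\beta^2\sigma^2/M$ per round. The $\beta\sigma^2T/(KM)$ you then write needs that extra $1/K$, which is exactly where the condition $\eta KL\le c_\eta/\sqrt K$ enters.
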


\begin{proof} 
	With the above choice of parameters, the conditions required in
	Lemmas \ref{L3}--\ref{L5} are satisfied. Indeed, by the definition of $\gamma$, we have
	\[
	\gamma\leq \frac{1}{2L},
	\qquad
	\gamma L\leq \sqrt{\frac{\beta KM}{54}}.
	\]
	Moreover, since $\eta KL\leq c_\eta\beta$, $\beta\leq c<1$, and $K\geq 1$, 
	by choosing $c_\eta>0$ sufficiently small, we obtain
	\[
	\eta KL\leq 1,
	\]
	and
	\begin{align*}
		24e^2(\eta L)^4K^4+24\eta^2L^2K
		&=
		24e^2(\eta KL)^4+24\frac{(\eta KL)^2}{K}  \\
		&\leq
		24e^2c_\eta^4+24c_\eta^2
		\leq \frac12 .
	\end{align*}
	Similarly,
	\[
	1152e^2(1-\beta)^2(\eta KL)^2
	\leq
	1152e^2c_\eta^2\beta^2
	\leq \beta^2.
	\]
	Since $\gamma L\leq 1/2$, we also have
	\begin{align*}
		1152e^2(1-\beta)^2(\eta\gamma KL^2)^2
		&=
		1152e^2(1-\beta)^2(\eta KL)^2(\gamma L)^2 \\
		&\leq
		1152e^2(1-\beta)^2(\eta KL)^2
		\leq \beta^2 .
	\end{align*}
	Therefore, all the conditions in Lemmas \ref{L3}--\ref{L5} hold.

It follows from Lemmas \ref{L3} and \ref{L4} that
\begin{align*}
	\mathcal{E}_{s}^t \leq &(1-\beta) \mathcal{E}_s^{t-1}+\frac{2 \beta }{9} \mathbb{E}[\|g^{t-1}\|^{2}]+\frac{3 \beta^{2} \sigma^{2}}{KM}+\frac{4}{\beta} L^{2}U_s^t \\
    \leq&(1-\beta) \mathcal{E}_s^{t-1}+\frac{2 \beta }{9} \mathbb{E}[\|g^{t-1}\|^{2}]+\frac{3 \beta^{2} \sigma^{2}}{KM} \\
	& +\frac{4}{\beta} L^{2}\left(4 e^2 K^2 \Xi_s^t+12(\eta K)^2\left(8(\eta K L)^2+K^{-1}\right)\left(\beta^2 \sigma^2+4(\gamma L)^2 \mathbb{E}[\|g^{t-1}\|^2]\right)\right) \\
    =&(1-\beta) \mathcal{E}_s^{t-1}+\frac{2 \beta }{9} \mathbb{E}[\|g^{t-1}\|^{2}]+\frac{3 \beta^{2} \sigma^{2}}{KM}+\frac{16( e L K)^2}{\beta} \Xi_{s}^t+48 \beta(\eta K L)^2\left(8(\eta K L)^2+K^{-1}\right) \sigma^2\\
    &+\frac{192}{\beta}(\eta K L)^2\left(8(\eta K L)^2+K^{-1}\right)(\gamma L)^2 \mathbb{E}[\|g^{t-1}\|^2]\\
	\leq & (1-\beta)\mathcal{E}_s^{t-1}+\frac{4\beta^2\sigma^2}{KM}+\frac{5 \beta }{18} \mathbb{E}[\|g^{t-1}\|^{2}]+\frac{16(eLK)^2}{\beta}\Xi_{s}^t,
\end{align*}
where the third inequality follows from
\begin{align*}
	\begin{cases}
		48\beta(\eta KL)^2\left(8(\eta KL)^2+K^{-1}\right)\leq\dfrac{\beta^2}{KM},\\[10pt]
		\dfrac{192(\eta KL)^2}{\beta}\left(8(\eta KL)^2+K^{-1}\right)(\gamma L)^2\leq\dfrac{\beta}{18}.
	\end{cases}
\end{align*}
Indeed, let $q:=\eta KL$. The stated stepsize condition gives
\begin{equation*}
q^2\leq c_\eta^2\frac{\beta}{M},\qquad
q^2\leq \frac{c_\eta^2}{K},\qquad
8q^2+K^{-1}\leq \frac{8c_\eta^2+1}{K}.
\end{equation*}
Consequently,
\begin{equation*}
	48\beta q^2(8q^2+K^{-1})
	\leq48c_\eta^2(8c_\eta^2+1)\frac{\beta^2}{KM}
	\leq\frac{\beta^2}{KM},
\end{equation*}
provided that $48c_\eta^2(8c_\eta^2+1)\leq1$. Moreover, the definition of
$\gamma$ gives $(\gamma L)^2\leq\beta KM/54$, so
\begin{align*}
\frac{192q^2}{\beta}(8q^2+K^{-1})(\gamma L)^2
&\leq\frac{32}{9}KMq^2(8q^2+K^{-1})\\
&\leq\frac{32}{9}c_\eta^2(8c_\eta^2+1)\beta
\leq\frac{\beta}{18},
\end{align*}
where the last inequality holds if
$c_\eta^2(8c_\eta^2+1)\leq1/64$. Thus, by choosing the absolute constant $c_\eta>0$ sufficiently small, both required inequalities hold.

Summing over $t$ from 0 to $T-1$ and applying Lemma \ref{L5}, we get
\begin{align*}
	\sum_{t=0}^{T-1} \mathcal{E}_s^{t}	\leq & (1-\beta)\sum_{t=0}^{T-1}\mathcal{E}_s^{t-1}+\frac{4\beta^2\sigma^2}{KM}T+\frac{5 \beta }{18}\sum_{t=0}^{T-1} \mathbb{E}[\|g^{t-1}\|^{2}]+\frac{16(eLK)^2}{\beta}\sum_{t=0}^{T-1}\Xi_{s}^t\\
	\leq & (1-\beta)\sum_{t=0}^{T-1}\mathcal{E}_s^{t-1}+\frac{4\beta^2\sigma^2}{KM}T+\frac{5 \beta }{18}\sum_{t=0}^{T-1} \mathbb{E}[\|g^{t-1}\|^{2}]\\
	&+\frac{16(eLK)^2}{\beta}\left(\frac{\beta^{2}}{288 e^2 K^{2} L^{2}} \sum_{t=0}^{T-1}\left(\mathcal{E}_{s}^{t-1}+\mathbb{E}[\|g^{t-1}\|^{2}]\right)+4 \eta^{2}TD^2\right)\\
	= & \left(1-\frac{17\beta}{18}\right)\sum_{t=-1}^{T-2}\mathcal{E}_s^{t}+\frac{\beta }{3}\sum_{t=0}^{T-1} \mathbb{E}[\|g^{t-1}\|^{2}]+\frac{4\beta^2\sigma^2}{KM}T+\frac{64}{\beta}(e\eta KL)^2TD^2.
\end{align*}
Thus,
\begin{align*}
    \frac{17\beta}{18}\sum_{t=-1}^{T-2}\mathcal{E}_s^{t} \leq \mathcal{E}_s^{-1}-\mathcal{E}_s^{T-1}+\frac{\beta }{3}\sum_{t=0}^{T-1} \mathbb{E}[\|g^{t-1}\|^{2}]+\frac{4\beta^2\sigma^2}{KM}T+\frac{64}{\beta}(e\eta KL)^2TD^2,
\end{align*}
which implies that
\begin{align*}
    \sum_{t=-1}^{T-2}\mathcal{E}_s^{t} \leq \frac{18}{17\beta}\mathcal{E}_s^{-1}-\frac{18}{17\beta}\mathcal{E}_s^{T-1}+\frac{6 }{17}\sum_{t=0}^{T-1} \mathbb{E}[\|g^{t-1}\|^{2}]+\frac{72\beta\sigma^2}{17KM}T+\frac{1152}{17\beta^2}(e\eta KL)^2TD^2.
\end{align*}
Therefore, 
\begin{align}\label{3.5}
	\sum_{t=-1}^{T-1}\mathcal{E}_s^{t}
	\leq&
	\frac{18}{17\beta}\mathcal{E}_s^{-1}
	+\frac{6}{17}\sum_{t=0}^{T-1} \mathbb{E}\!\left[\|g^{t-1}\|^{2}\right]
	+\frac{72\beta\sigma^2}{17KM}T
	+\frac{1152}{17\beta^2}(e\eta KL)^2TD^2 .
\end{align}
Notice that
\begin{equation*}
\sum_{t=0}^{T-1}\mathbb{E}\!\left[\|g^{t-1}\|^2\right]
=
\sum_{t=0}^{T-1}\mathbb{E}\!\left[\|g^t\|^2\right]
+
\mathbb{E}\!\left[\|g^{-1}\|^2\right]
-
\mathbb{E}\!\left[\|g^{T-1}\|^2\right].
\end{equation*}
Hence,
\begin{equation*}
\sum_{t=0}^{T-1}\mathbb{E}\!\left[\|g^{t-1}\|^2\right]
\leq
\sum_{t=0}^{T-1}\mathbb{E}\!\left[\|g^t\|^2\right]
+
\mathbb{E}\!\left[\|g^{-1}\|^2\right].
\end{equation*}
By Lemma~\ref{L1},
\begin{equation*}
\frac{2\left(\mathbb{E}[f_s(x^T)]-f_s(x^0)\right)}{\gamma}
+
\frac{1}{2}\sum_{t=0}^{T-1}\mathbb{E}\!\left[\|g^t\|^2\right]
\leq
\sum_{t=0}^{T-1}\mathcal{E}_s^t
\leq
\sum_{t=-1}^{T-1}\mathcal{E}_s^t .
\end{equation*}
Combining the above inequality with \eqref{3.5}, we get
\begin{align*}
	\frac{1}{2}\sum_{t=0}^{T-1}\mathbb{E}\!\left[\|g^t\|^2\right]
	\leq&
	\frac{2\left(f_s(x^0)-\mathbb{E}[f_s(x^T)]\right)}{\gamma}
	+\frac{18}{17\beta}\mathcal{E}_s^{-1}
	+\frac{6}{17}\sum_{t=0}^{T-1}\mathbb{E}\!\left[\|g^t\|^2\right]  \\
	&+\frac{6}{17}\mathbb{E}\!\left[\|g^{-1}\|^2\right]
	+\frac{72\beta\sigma^2}{17KM}T
	+\frac{1152}{17\beta^2}(e\eta KL)^2TD^2,
\end{align*}
which implies that 
\begin{align}\label{3.6}
	\sum_{t=0}^{T-1} \mathbb{E}\!\left[\|g^{t}\|^{2}\right]
	\leq&
	\frac{68\left(f_s(x^0)-\mathbb{E}[f_s(x^T)]\right)}{5\gamma}
	+\frac{36}{5\beta}\mathcal{E}_s^{-1}
	+\frac{144\beta\sigma^2}{5KM}T \nonumber\\
	&+\frac{2304}{5\beta^2}(e\eta KL)^2TD^2
	+\frac{12}{5}\mathbb{E}\!\left[\|g^{-1}\|^2\right] \nonumber\\
	\leq&
\frac{68\Delta}{5\gamma}
	+\frac{36}{5\beta}\mathcal{E}_s^{-1}
	+\frac{144\beta\sigma^2}{5KM}T \nonumber\\
	&+\frac{2304}{5\beta^2}(e\eta KL)^2TD^2
	+\frac{12}{5}\mathbb{E}\!\left[\|g^{-1}\|^2\right].
\end{align}
Averaging \eqref{3.6} over $s\in[S]$, we obtain
\begin{align}\label{ave:3.6}
	\sum_{t=0}^{T-1}\mathbb{E}\left[\|g^t\|^2\right]
	\leq&
	\frac{68\Delta}{5\gamma}
	+\frac{36}{5\beta S}\sum_{s\in[S]}\mathcal{E}_s^{-1}
	+\frac{144\beta\sigma^2}{5KM}T
	\nonumber\\
	&+\frac{2304}{5\beta^2}(e\eta KL)^2TD^2
	+\frac{12}{5}\mathbb{E}\left[\|g^{-1}\|^2\right].
\end{align}
Moreover, from \eqref{3.5}, we have
\begin{align*}
	\sum_{t=0}^{T-1} \mathcal{E}_s^{t}
	\leq&
	\frac{18}{17 \beta} \mathcal{E}_s^{-1}
	+\frac{6}{17}\sum_{t=0}^{T-1}\mathbb{E}\!\left[\|g^{t-1}\|^2\right]
	+\frac{72 \beta \sigma^{2}}{17KM} T
	+\frac{1152}{17\beta^2}(e \eta K L)^{2}TD^2  \\
	\leq&
	\frac{18}{17 \beta} \mathcal{E}_s^{-1}
	+\frac{6}{17}\sum_{t=0}^{T-1}\mathbb{E}\!\left[\|g^{t}\|^2\right]
	+\frac{6}{17}\mathbb{E}\!\left[\|g^{-1}\|^2\right]  \\
	&+\frac{72 \beta \sigma^{2}}{17KM} T
	+\frac{1152}{17\beta^2}(e \eta K L)^{2}TD^2 .
\end{align*}
Substituting \eqref{3.6} into the above inequality gives
\begin{align}\label{3.7}
	\sum_{t=0}^{T-1} \mathcal{E}_s^{t}
	\leq&
	\frac{306}{85\beta}\mathcal{E}_s^{-1}
	+\frac{24\Delta}{5\gamma}
	+\frac{72\beta\sigma^2}{5KM}T
	+\frac{1152}{5\beta^2}(e \eta K L)^{2}TD^2 \nonumber\\
	&+\frac{102}{85}\mathbb{E}\!\left[\|g^{-1}\|^2\right].
\end{align}
Summing \eqref{3.7} over $s\in[S]$ gives
\begin{align}\label{ave:3.7}
	\sum_{s\in[S]}\sum_{t=0}^{T-1}\mathcal{E}_s^t
	\leq&
	\frac{306}{85\beta}
	\sum_{s\in[S]}\mathcal{E}_s^{-1}
	+\frac{24S\Delta}{5\gamma}
	+\frac{72S\beta\sigma^2}{5KM}T
		\nonumber\\
	&+
	\frac{1152S}{5\beta^2}(e\eta KL)^2TD^2
	+\frac{102S}{85}
	\mathbb{E}\left[\|g^{-1}\|^2\right].
\end{align}
Hence,
\begin{align*} 
	\frac{1}{T} \sum_{t=0}^{T-1} 
	\mathbb{E}\left[
	\left\|\sum_{s \in[S]} \lambda_s^{t,*}\nabla f_{s}(x^{t})\right\|^{2}
	\right]
	\leq&
	\frac{2}{T} \sum_{t=0}^{T-1}
	\mathbb{E}\left[
	\left\|\sum_{s \in[S]}\lambda_s^{t,*} \nabla f_{s}(x^{t})-g^t\right\|^{2}
	\right]
	+\frac{2}{T} \sum_{t=0}^{T-1} \mathbb{E}\left[\|g^t\|^{2}\right]  
	\\
	=&
	\frac{2}{T} \sum_{t=0}^{T-1}
	\mathbb{E}\left[
	\left\|
	\sum_{s \in[S]} \lambda_s^{t,*}
	\left(\nabla f_{s}(x^{t})-g_s^t\right)
	\right\|^{2}
	\right]
	+\frac{2}{T} \sum_{t=0}^{T-1} \mathbb{E}\left[\|g^t\|^{2}\right]
	\\
	\leq&
	\frac{2}{T} \sum_{t=0}^{T-1}
	\mathbb{E}\left[
	\sum_{s \in[S]} \lambda_s^{t,*}
	\left\|\nabla f_s(x^t)-g_s^t\right\|^2
	\right]
	+\frac{2}{T} \sum_{t=0}^{T-1} \mathbb{E}\left[\|g^t\|^{2}\right]
	\\
	\leq&
	\frac{2}{T} \sum_{t=0}^{T-1}
	\sum_{s \in[S]}
	\mathbb{E}\left[
	\left\|\nabla f_s(x^t)-g_s^t\right\|^2
	\right]
	+\frac{2}{T} \sum_{t=0}^{T-1} \mathbb{E}\left[\|g^t\|^{2}\right]\\
		= & \frac{2}{T}  \sum_{s \in[S]} \sum_{t=0}^{T-1} \mathcal{E}_s^t+\frac{2}{T} \sum_{t=0}^{T-1} \mathbb{E}[\|g^t\|^{2}],
\end{align*}
where the second inequality follows from the convexity of $\|\cdot\|^2$. Plugging \eqref{ave:3.6} and \eqref{ave:3.7} into the above inequality, we have
\begin{align*}
	&\frac{1}{T} \sum_{t=0}^{T-1} \mathbb{E}\left[\left\|\sum_{s \in[S]} \lambda_s^{t,*} \nabla f_{s}(x^{t})\right\|^{2}\right]  \\
	\leq  &\frac{612}{85\beta T}\sum_{s \in[S]}\mathcal{E}_s^{-1}
	+\frac{48S\Delta}{5\gamma T}
	+\frac{144S\beta\sigma^2}{5KM} \\
	&+\frac{2304S}{5\beta^2}(e\eta KLD)^2
	+\frac{204S}{85T}\mathbb{E}[\|g^{-1}\|^2]
	+\frac{72}{5\beta S T}\sum_{s \in[S]}\mathcal{E}_s^{-1}  \\
	&+\frac{136\Delta}{5\gamma T}
	+\frac{288\beta\sigma^2}{5KM}
	+\frac{4608}{5\beta^2}(e\eta KLD)^2
	+\frac{24}{5T}\mathbb{E}[\|g^{-1}\|^2].
\end{align*}
Finally, for $B=\left\lceil\frac{K}{T\beta^2}\right\rceil$, noticing that
\begin{equation*}
	g_s^{-1}=\frac{1}{BM} \sum_{i=1}^{M} \sum_{b=1}^{B} \nabla F_{s,i}(x^{-1} ; \xi_{i}^{b}),
\end{equation*}
we have
\begin{align*}
	\mathcal{E}_s^{-1}
	=&\mathbb{E}[\|\nabla f_s(x^{-1})-g_s^{-1}\|^2]\\
	=&\mathbb{E}\left[\left\|\frac{1}{BM}\sum_{i,b} \nabla f_{s,i}(x^{-1})-\frac{1}{BM}\sum_{i,b} \nabla F_{s,i}(x^{-1};\xi_i^b)\right\|^2\right]\\
	=&\frac{1}{B^2M^2}\mathbb{E}\left[\left\|\sum_{i,b} \left(\nabla f_{s,i}(x^{-1})-\nabla F_{s,i}(x^{-1};\xi_i^b)\right)\right\|^2\right]\\
	=&\frac{1}{B^2M^2}\mathbb{E}\left[\sum_{i,b}\left\| \nabla f_{s,i}(x^{-1})-\nabla F_{s,i}(x^{-1};\xi_i^b)\right\|^2 \right.\\
	& \left.+2\sum_{(i,b)\ne(j,d)} \left\langle\nabla f_{s,i}(x^{-1})-\nabla F_{s,i}(x^{-1};\xi_i^b),\nabla f_{s,j}(x^{-1})-\nabla F_{s,j}(x^{-1};\xi_j^d)\right\rangle \right]\\
	\leq&\frac{1}{B^2M^2}\mathbb{E}\left[\sum_{i,b}\sigma^2\right]
	=\frac{\sigma^2}{BM}
	\leq \frac{\beta^2\sigma^2T}{KM}.
\end{align*}
Moreover, since $g^{-1}$ is obtained by applying the same Pareto aggregation rule to $\{g_s^{-1}\}_{s\in[S]}$, it holds that
\begin{equation*}
g^{-1}=\sum_{s\in[S]}\lambda_s^{-1,*}g_s^{-1},
\end{equation*}
where $\lambda^{-1,*}\in[0,1]^S$ solves
\begin{equation*}
\min_{\substack{\lambda_s\geq 0,\\ \sum_{s\in[S]}\lambda_s=1}}
\left\lVert
\sum_{s\in[S]}\lambda_s d_s^{-1}
\right\rVert^2.
\end{equation*}
By the optimality of $\lambda^{-1,*}$, choosing the uniform weight $\lambda_s=1/S$ gives
\begin{align*}
	\mathbb{E}\|g^{-1}\|^2
	&\leq
	\mathbb{E}\left[
	\left\|
	\frac{1}{S}\sum_{s\in[S]}g_s^{-1}
	\right\|^2
	\right]  \\
	&\leq
	\frac{1}{S}\sum_{s\in[S]}
	\mathbb{E}\left[\|g_s^{-1}\|^2\right].
\end{align*}
For each $s\in[S]$, we have
\begin{align*}
	\mathbb{E}\left[\|g_s^{-1}\|^2\right]
	&=
	\mathbb{E}\left[
	\left\|
	\nabla f_s(x^{-1})+
	g_s^{-1}-\nabla f_s(x^{-1})
	\right\|^2
	\right]  \\
	&\leq
	2\left\|\nabla f_s(x^{-1})\right\|^2
	+
	2\mathbb{E}\left[
	\left\|
	g_s^{-1}-\nabla f_s(x^{-1})
	\right\|^2
	\right]  \\
	&=
	2\left\|\nabla f_s(x^{-1})\right\|^2
	+
	2\mathcal{E}_s^{-1}.
\end{align*}
Since $x^{-1}=x^0$, let
\begin{equation*}
G_0^2:=\frac{1}{S}\sum_{s\in[S]}\left\|\nabla f_s(x^0)\right\|^2.
\end{equation*}
Then
\begin{align*}
	\mathbb{E}\|g^{-1}\|^2\leq
	2G_0^2+\frac{2}{S}\sum_{s\in[S]}\mathcal{E}_s^{-1}\leq
	2G_0^2+\frac{2\beta^2\sigma^2T}{KM}.
\end{align*}
Thus,
\begin{align}\label{equ:4.7}
	\frac{1}{T} \sum_{t=0}^{T-1} \mathbb{E}\left[\left\|\sum_{s \in[S]} \lambda_s^{t,*} \nabla f_{s}(x^{t})\right\|^{2}\right] 
	\lesssim  &\frac{1}{\beta T}\sum_{s \in[S]}\mathcal{E}_s^{-1}+\frac{S\Delta}{\gamma T} +\frac{S\beta\sigma^2}{KM}+\frac{S}{\beta^2}(\eta KLD)^2\nonumber\\
	&+\frac{1}{\beta TS}\sum_{s \in[S]}\mathcal{E}_s^{-1}+\frac{\Delta}{\gamma T} +\frac{\beta\sigma^2}{KM}+\frac{1}{\beta^2}(\eta KLD)^2  \nonumber\\
	&+\frac{1+S}{T}\mathbb{E}[\|g^{-1}\|^2]\nonumber\\
	\lesssim &\frac{\beta\sigma^2}{KM}(1+S)+\frac{(\eta KL)^2}{\beta^2}D^2(1+S)+\frac{S\Delta}{\gamma T} \nonumber\\
	&+\frac{(1+S)G_0^2}{T}
	+\frac{(1+S)\beta^2\sigma^2}{KM}.
\end{align}
By the choice of $\eta$,
\begin{equation*}
	\frac{(\eta KL)^2}{\beta^2}
	\lesssim
	\frac{L\Delta}{\beta^{1/2}(KM)^{1/2}T}.
\end{equation*}
Moreover,
\begin{equation*}
	\frac1\gamma
	\le
	2L+\sqrt{54}\frac{L}{\sqrt{\beta KM}},
\end{equation*}
which implies that
\begin{equation*}
	\frac{S\Delta}{\gamma T}
	\lesssim
	\frac{LS\Delta}{T}
	+
	\frac{LS\Delta}{\beta^{1/2}(KM)^{1/2}T}.
\end{equation*}
Therefore, from \eqref{equ:4.7}, we have
\begin{equation*}
	\frac1T\sum_{t=0}^{T-1}
	\mathbb E\left[
	\left\|
	\sum_{s\in[S]}\lambda_s^{t,*}\nabla f_s(x^t)
	\right\|^2
	\right]
	\lesssim
	\frac{\beta\sigma^2}{KM}(1+S)
	+
	\frac{L(1+D^2)\Delta}{\beta^{1/2}(KM)^{1/2}T}(1+S)
	+
	\frac{LS\Delta}{T}
	+
	\frac{(1+S)G_0^2}{T}
	+
	\frac{(1+S)\beta^2\sigma^2}{KM}.
\end{equation*}
Taking
\begin{equation*}
	\beta=
	\left(
	\frac{MKL^2(1+D^2)^2\Delta^2}{\sigma^4T^2}
	\right)^{1/3}
\end{equation*}
gives
\begin{align*}
	&\frac1T\sum_{t=0}^{T-1}
	\mathbb E\left[
	\left\|
	\sum_{s\in[S]}\lambda_s^{t,*}\nabla f_s(x^t)
	\right\|^2
	\right] \\
	&\lesssim
	\left(
	\frac{L(1+D^2)\sigma\Delta}{KMT}
	\right)^{2/3}(1+S)
	+
	\frac{LS\Delta}{T}
	+
	\frac{(1+S)G_0^2}{T}  \\
	&\quad+
	\frac{(1+S)\sigma^2}{KM}
	\left(
\frac{MKL^2(1+D^2)^2\Delta^2}{\sigma^4T^2}
	\right)^{2/3}.
\end{align*}
Since $\lambda^{t,*}\in\mathcal{C}$, by the definition of $\mathcal{G}(x^t)$,
\begin{equation*}
	\mathcal{G}(x^t)
	=
	\min_{\lambda\in\mathcal{C}}
	\left\|
	\sum_{s\in[S]}\lambda_s\nabla f_s(x^t)
	\right\|^2
	\leq
	\left\|
	\sum_{s\in[S]}\lambda_s^{t,*}\nabla f_s(x^t)
	\right\|^2.
\end{equation*}
Taking expectations and averaging over $t=0,\dots,T-1$, we obtain
\begin{align*}
	\frac{1}{T}\sum_{t=0}^{T-1}
	\mathbb{E}\left[\mathcal{G}(x^t)\right]
	\leq&
	\frac{1}{T}\sum_{t=0}^{T-1}
	\mathbb{E}\left[
	\left\|
	\sum_{s\in[S]}\lambda_s^{t,*}\nabla f_s(x^t)
	\right\|^2
	\right].
\end{align*}
Since $a$ is chosen uniformly at random from $\{0,1,\dots,T-1\}$,
\begin{align*}
	\mathbb{E}\left[\mathcal{G}(x_a)\right]
	=&\frac{1}{T}\sum_{t=0}^{T-1}
	\mathbb{E}\left[\mathcal{G}(x^t)\right] \\
	\leq&
	\frac{1}{T}\sum_{t=0}^{T-1}
	\mathbb{E}\left[
	\left\|
	\sum_{s\in[S]}\lambda_s^{t,*}\nabla f_s(x^t)
	\right\|^2
	\right]\\
	\lesssim&
	\left(
	\frac{L(1+D^2)\sigma\Delta}{KMT}
	\right)^{\frac{2}{3}}(1+S)
	+\frac{LS\Delta}{T}
	+\frac{(1+S)G_0^2}{T}
	\\
	&+
	\frac{(1+S)\sigma^2}{KM}
	\left(
	\frac{MKL^2(1+D^2)^2\Delta^2}
	{\sigma^4T^2}
	\right)^{\frac{2}{3}}.	
\end{align*}

Similarly, if $B=\Theta(KT)$, then
\begin{equation*}
	\mathcal{E}_s^{-1}\leq \frac{\sigma^2}{BM}\lesssim \frac{\sigma^2}{KMT}.
\end{equation*}
Moreover, by the same argument as above,
\begin{equation*}
	\mathbb{E}\|d^{-1}\|^2
	\lesssim
	G_0^2+\frac{\sigma^2}{KMT}.
\end{equation*}
Using the preceding estimate, we have
\begin{align*}
	&\frac{1}{T}\sum_{t=0}^{T-1}
	\mathbb{E}\left[
	\left\|
	\sum_{s\in[S]}\lambda_s^{t,*}\nabla f_s(x^t)
	\right\|^2
	\right] \\
	&\lesssim
	\frac{\sigma^2}{KM\beta T^2}(1+S)
	+
	\frac{\beta\sigma^2}{KM}(1+S)
	+
	\frac{(\eta KL)^2}{\beta^2}D^2(1+S)\\
	&\quad+
\frac{S\Delta}{\gamma T}
	+
	\frac{(1+S)G_0^2}{T}
	+
	\frac{(1+S)\sigma^2}{KMT^2}.
\end{align*}
By the choice of $\eta$,
\begin{equation*}
	\frac{(\eta KL)^2}{\beta^2}
	\lesssim
\frac{L\Delta}{\beta^{1/2}(KM)^{1/2}T}.
\end{equation*}
Moreover, since
\begin{equation*}
	\gamma=\min\left\{
	\frac1{2L},
	\frac1L\sqrt{\frac{\beta KM}{54}}
	\right\},
\end{equation*}
it follows that
\begin{equation*}
	\frac{1}{\gamma}
	\le
	2L+\sqrt{54}\frac{L}{\sqrt{\beta KM}}.
\end{equation*}
Thus,
\begin{equation*}
	\frac{S\Delta}{\gamma T}
	\lesssim
	\frac{LS\Delta}{T}
	+
\frac{LS\Delta}{\beta^{1/2}(KM)^{1/2}T}.
\end{equation*}
Combining the above estimates gives
\begin{align*}
\frac{1}{T}\sum_{t=0}^{T-1}
\mathbb{E}\left[
\left\|
\sum_{s\in[S]}\lambda_s^{t,*}\nabla f_s(x^t)
\right\|^2
\right] 
	&\lesssim
	\frac{\sigma^2}{KM\beta T^2}(1+S)
	+
	\frac{\beta\sigma^2}{KM}(1+S) \\
	&\quad+
	\frac{L(1+D^2)\Delta}{\beta^{1/2}(KM)^{1/2}T}(1+S)
	+
\frac{LS\Delta}{T}
	+
	\frac{(1+S)G_0^2}{T}.
\end{align*}
Now take
\begin{equation*}
	\beta=
	\max\left\{
\frac{1}{T},
	\left(
	\frac{MKL^2(1+D^2)^2\Delta^2}{\sigma^4T^2}
	\right)^{1/3}
	\right\}.
\end{equation*}
Since $\beta\geq 1/T$, we have
\begin{equation*}
	\frac{\sigma^2}{KM\beta T^2}
	\leq
	\frac{\sigma^2}{KMT}.
\end{equation*}
Furthermore, by the definition of $\beta$,
\begin{equation*}
	\frac{\beta\sigma^2}{KM}
	+
	\frac{L(1+D^2)\Delta}{\beta^{1/2}(KM)^{1/2}T}
	\lesssim
	\frac{\sigma^2}{KMT}
	+
	\left(
	\frac{L(1+D^2)\sigma\Delta}{KMT}
	\right)^{2/3}.
\end{equation*}
Therefore,
\begin{align*}
\frac{1}{T}\sum_{t=0}^{T-1}
\mathbb{E}\left[
\left\|
\sum_{s\in[S]}\lambda_s^{t,*}\nabla f_s(x^t)
\right\|^2
\right] 
	&\lesssim
	\frac{\sigma^2(1+S)}{KMT}
	+
	\left(
	\frac{L(1+D^2)\sigma\Delta}{KMT}
	\right)^{2/3}
	(1+S) \\
	&\quad+
	\frac{LS\Delta}{T}
	+
	\frac{(1+S)G_0^2}{T}
	+
	\frac{(1+S)\sigma^2}{KMT^2},
\end{align*}
which implies that 
\begin{align*}
\mathbb{E}\left[\mathcal{G}(x_a)\right]
	&\lesssim
	\frac{\sigma^2(1+S)}{KMT}
	+
	\left(
	\frac{L(1+D^2)\sigma\Delta}{KMT}
	\right)^{2/3}
	(1+S) \\
	&\quad+
	\frac{LS\Delta}{T}
	+
	\frac{(1+S)G_0^2}{T}
	+
	\frac{(1+S)\sigma^2}{KMT^2}.
\end{align*}
The proof is complete.
\end{proof}

\begin{remark} Theorem \ref{T1} establishes a random-iterate Pareto stationarity guarantee for Algorithm \ref{VSOMD}. Specifically, the algorithm returns an iterate $x_a$, where $a$ is chosen uniformly at random from $\{0,\dots,T-1\}$, and the corresponding expected Pareto stationarity measure $\mathbb{E}[\mathcal{G}(x_a)]$ decreases at a rate of $\mathcal{O}(T^{-2/3})$. This improves upon the $\mathcal{O}(T^{-1/2})$ rate established in \cite{Yang2024,Askin2024}. 
\end{remark}

\begin{corollary} Under the conditions of
	Theorem~\ref{T1}, FSMGDA-M-VR attains an $\epsilon$-Pareto stationary
	solution with communication complexity
\begin{equation*}
	\mathcal{O}\left(
	\max\left\{
	\frac{L(1+D^2)\sigma\Delta(1+S)^{\frac{3}{2}}}
	{KM\epsilon^{\frac{3}{2}}},
	\frac{LS\Delta}{\epsilon},
	\frac{(1+S)G_0^2}{\epsilon},
	\left(
	\frac{C_1}{\epsilon}
	\right)^{\frac{3}{4}},
	\frac{\sigma^2(1+S)}{KM\epsilon},
	\left(
	\frac{(1+S)\sigma^2}{KM\epsilon}
	\right)^{\frac{1}{2}}
	\right\}
	\right),
\end{equation*}
	where
\begin{equation*}
	C_1:=
	\frac{(1+S)\sigma^2}{KM}
	\left(
	\frac{MKL^2(1+D^2)^2\Delta^2}{\sigma^4}
	\right)^{2/3}.
\end{equation*}
Moreover, the corresponding sample complexity is
\begin{equation*}
	\mathcal{O}\left(SMK\max\left\{
	\frac{L(1+D^2)\sigma\Delta(1+S)^{\frac{3}{2}}}
	{KM\epsilon^{\frac{3}{2}}},
	\frac{LS\Delta}{\epsilon},
	\frac{(1+S)G_0^2}{\epsilon},
	\left(
	\frac{C_1}{\epsilon}
	\right)^{\frac{3}{4}},
	\frac{\sigma^2(1+S)}{KM\epsilon},
	\left(
	\frac{(1+S)\sigma^2}{KM\epsilon}
	\right)^{\frac{1}{2}}
	\right\}
	\right).
\end{equation*}
\end{corollary}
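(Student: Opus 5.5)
The plan is to invert the first bound of Theorem~\ref{T1} term by term. By Theorem~\ref{T1} (first parameter choice, $B=\lceil K/(T\beta^2)\rceil$), $\mathbb{E}[\mathcal{G}(x_a)]$ is bounded, up to absolute constants, by a sum of four nonnegative terms:
\begin{equation*}
\frac{A_1}{T^{2/3}},\qquad \frac{LS^2\Delta}{T},\qquad \frac{(1+S^2)G_0^2}{T},\qquad \frac{C_1}{T^{4/3}},
\end{equation*}
with $A_1=\bigl(L(1+D^2)\sigma\Delta/(KM)\bigr)^{2/3}(1+S)$. Here the factor $(1+S^2)$ in the theorem is replaced by $(1+S)$, matching the form actually derived in the proof at \eqref{equ:4.7}. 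The factor $T^{-4/3}$ on the last term comes from $(T^{-2})^{2/3}$.

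To guarantee $\mathbb{E}[\mathcal{G}(x_a)]\le\epsilon$, it suffices to make each of the (at most six) terms at most $\epsilon/c$ for a suitable absolute constant $c$. Each such requirement is a lower bound on $T$, and $T$ is taken to be the maximum of these lower bounds.

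\textbf{Step 1: invert each term.}
\begin{itemize}
\item $A_1T^{-2/3}\le\epsilon$ gives $T\gtrsim A_1^{3/2}\epsilon^{-3/2}=L(1+D^2)\sigma\Delta(1+S)^{3/2}/(KM\epsilon^{3/2})$. This is the dominant $\epsilon^{-3/2}$ term.
\item $LS\Delta/T\le\epsilon$ gives $T\gtrsim LS\Delta/\epsilon$.
\item The $G_0^2$ term gives $T\gtrsim(1+S)G_0^2/\epsilon$.
\item $C_1T^{-4/3}\le\epsilon$ gives $T\gtrsim(C_1/\epsilon)^{3/4}$.
\end{itemize}

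\textbf{Step 2: the remaining two entries.} These come from the alternative choice $B=\Theta(KT)$ in Theorem~\ref{T1}, or equivalently from the $\sigma^2/(KMT)$ and $\sigma^2/(KMT^2)$ contributions of the initialization error $\mathcal{E}_s^{-1}$:
\begin{itemize}
\item $\sigma^2(1+S)/(KMT)\le\epsilon$ gives $T\gtrsim\sigma^2(1+S)/(KM\epsilon)$.
\item $(1+S)\sigma^2/(KMT^2)\le\epsilon$ gives $T\gtrsim\bigl((1+S)\sigma^2/(KM\epsilon)\bigr)^{1/2}$.
\end{itemize}
Taking the maximum over both regimes is harmless. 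Each listed term bounds the corresponding requirement, so the stated maximum suffices.

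\textbf{Step 3: sample complexity.} Each communication round costs the same number of stochastic gradients. Each of the $M$ clients runs $K$ local steps for each of the $S$ objectives, and at each step evaluates two stochastic gradients, $\nabla F_s(x_{s,i}^{t,k};\xi_i^{t,k})$ and $\nabla F_s(x^{t-1};\xi_i^{t,k})$. This gives $2SMK$ evaluations per round, and multiplying the round count by this factor yields the sample bound. The initialization batch costs $SMB$ evaluations. With $B=\lceil K/(T\beta^2)\rceil$ or $B=\Theta(KT)$ this is $\mathcal{O}(SMKT)$ as long as $\beta\gtrsim T^{-1}$, so it is absorbed.

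\textbf{Main obstacle.} The inversion itself is routine. The delicate point is consistency: one must check that the choice of $T$ keeps $\beta<1$, which is required by Theorem~\ref{T1}, and, for the first regime, that $B$ stays $\mathcal{O}(KT)$. I would handle $\beta<1$ by noting that $\beta\to0$ as $T$ grows, and enlarging $T$ if necessary only increases the bound by absolute constants. The condition $B=\mathcal{O}(KT)$ holds whenever $\beta\ge T^{-1}$, which is exactly the lower cap in the alternative parameter choice.
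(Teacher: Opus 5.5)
Your proposal is correct and follows the paper's proof essentially step by step. Like the paper, you invert each term of the two bounds in Theorem~\ref{T1} (the paper requires each term to be at most $\epsilon/4$, resp.\ $\epsilon/5$), take the maximum over both parameter regimes, and bound the sample cost by $SMB+2SMKT=\mathcal{O}(SMKT)$. You also match the paper in using the sharper $(1+S)$, $LS\Delta/T$ form derived in the theorem's proof rather than the $(1+S^2)$, $LS^2\Delta/T$ form in its statement, and your $\beta\gtrsim T^{-1}$ condition for absorbing $SMB$ in the first regime is the same as the paper's observation that $1/(T^2\beta^2)\propto T^{-2/3}\to 0$.
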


\begin{proof}
	For the first case $B=\left\lceil \frac{K}{T\beta^2}\right\rceil$,  by Theorem~\ref{T1}, 
\begin{align*}
	\mathbb{E}\left[\mathcal{G}(x_a)\right]	\lesssim&\left(\frac{L\left(1+D^2\right)\sigma\Delta}{KMT}\right)^{\frac{2}{3}}(1+S)+\frac{LS\Delta}{T} \\
	+&
	\frac{(1+S)G_0^2}{T} +
	\frac{(1+S)\sigma^2}{KM}
	\left(
	\frac{MKL^2(1+D^2)^2\Delta^2}{\sigma^4T^2}
	\right)^{2/3}.
\end{align*}
To obtain an $\epsilon$-Pareto stationary solution, it is sufficient, up to absolute
constant factors hidden in $\lesssim$, to require 
	\begin{equation*}
		\left(
		\frac{L(1+D^2)\sigma\Delta}{KMT}
		\right)^{\frac{2}{3}}(1+S)
		\leq \frac{\epsilon}{4},
	\end{equation*}
	\begin{equation*}
		\frac{LS\Delta}{T}\leq \frac{\epsilon}{4},
		\qquad
		\frac{(1+S)G_0^2}{T}\leq \frac{\epsilon}{4},
	\end{equation*}
	and
	\begin{equation*}
		\frac{(1+S)\sigma^2}{KM}
		\left(
		\frac{MKL^2(1+D^2)^2\Delta^2}{\sigma^4T^2}
		\right)^{\frac{2}{3}}
		\leq \frac{\epsilon}{4}.
	\end{equation*}
	The first inequality yields
	\begin{equation*}
		T\geq\frac{L(1+D^2)\sigma\Delta}{KM}
		\left(\frac{4(1+S)}{\epsilon}
		\right)^{\frac{3}{2}},
	\end{equation*}
	and hence
\begin{equation*}
T=\mathcal{O}\left(
		\frac{L(1+D^2)\sigma\Delta(1+S)^{\frac{3}{2}}}
		{KM\epsilon^{\frac{3}{2}}}
		\right).
\end{equation*}
The second and third inequalities give
\begin{equation*}
		T=\mathcal{O}\left(
		\frac{LS\Delta}{\epsilon}
		\right),
		\qquad
		T
		=
		\mathcal{O}\left(
		\frac{(1+S)G_0^2}{\epsilon}
		\right).
\end{equation*}
For the last inequality, let $C_1:=
		\frac{(1+S)\sigma^2}{KM}
		\left(
		\frac{MKL^2(1+D^2)^2\Delta^2}{\sigma^4}
		\right)^{\frac{2}{3}}$. Thus, it is sufficient to require
	\begin{equation*}
		\frac{C_1}{T^{\frac{4}{3}}}
		\leq
		\frac{\epsilon}{4},
	\end{equation*}
	which yields
	\begin{equation*}
		T=
		\mathcal{O}\left(
		\left(
		\frac{C_1}{\epsilon}
		\right)^{\frac{3}{4}}
		\right).
	\end{equation*}
Therefore, for the first choice of $B=\left\lceil \frac{K}{T\beta^2}\right\rceil$, it suffices to take
	\begin{equation*}
		T
		=
		\mathcal{O}\left(
		\max\left\{
		\frac{L(1+D^2)\sigma\Delta(1+S)^{\frac{3}{2}}}
		{KM\epsilon^{\frac{3}{2}}},
		\frac{LS\Delta}{\epsilon},
		\frac{(1+S)G_0^2}{\epsilon},
		\left(
		\frac{C_1}{\epsilon}
		\right)^{\frac{3}{4}}
		\right\}
		\right).
	\end{equation*}
	
	Next, for the second case $B=\Theta(KT)$, Theorem~\ref{T1} gives
\begin{equation*}
\mathbb{E}\left[\mathcal{G}(x_a)\right]
		\lesssim
		\frac{\sigma^2(1+S)}{KMT}
		+
		\left(
		\frac{L(1+D^2)\sigma\Delta}{KMT}
		\right)^{\frac{2}{3}}(1+S)
		+\frac{LS\Delta}{T}
		+\frac{(1+S)G_0^2}{T}
		+\frac{(1+S)\sigma^2}{KMT^2}.
\end{equation*}
To obtain an $\epsilon$-Pareto stationary solution, it is sufficient, up to absolute
constant factors hidden in $\lesssim$, to require
\begin{equation*}
		\frac{\sigma^2(1+S)}{KMT}
		\leq
		\frac{\epsilon}{5},
\end{equation*}
\begin{equation*}
		\left(
		\frac{L(1+D^2)\sigma\Delta}{KMT}
		\right)^{\frac{2}{3}}(1+S)
		\leq
		\frac{\epsilon}{5},
\end{equation*}
\begin{equation*}
		\frac{LS\Delta}{T}
		\leq
		\frac{\epsilon}{5},
		\qquad
		\frac{(1+S)G_0^2}{T}
		\leq
		\frac{\epsilon}{5},
\end{equation*}
	and
\begin{equation*}
		\frac{(1+S)\sigma^2}{KMT^2}
		\leq
		\frac{\epsilon}{5}.
\end{equation*}
These inequalities respectively yield
\begin{equation*}
		T=\mathcal{O}\left(
		\frac{\sigma^2(1+S)}{KM\epsilon}
		\right),
\end{equation*}
\begin{equation*}
		T=\mathcal{O}\left(
		\frac{L(1+D^2)\sigma\Delta(1+S)^{\frac{3}{2}}}
		{KM\epsilon^{\frac{3}{2}}}
		\right),
\end{equation*}
\begin{equation*}
		T
		=
		\mathcal{O}\left(
		\frac{LS\Delta}{\epsilon}
		\right),
		\qquad
		T
		=
		\mathcal{O}\left(
		\frac{(1+S)G_0^2}{\epsilon}
		\right),
\end{equation*}
and
\begin{equation*}
		T
		=
		\mathcal{O}\left(
		\left(
		\frac{(1+S)\sigma^2}{KM\epsilon}
		\right)^{\frac{1}{2}}
		\right).
	\end{equation*}
Therefore, for the second choice $B=\Theta(KT)$, it suffices to take
\begin{equation*}
		T
		=
		\mathcal{O}\left(
		\max\left\{
		\frac{L(1+D^2)\sigma\Delta(1+S)^{\frac{3}{2}}}
		{KM\epsilon^{\frac{3}{2}}},
		\frac{LS\Delta}{\epsilon},
		\frac{(1+S)G_0^2}{\epsilon},
		\frac{\sigma^2(1+S)}{KM\epsilon},
		\left(
		\frac{(1+S)\sigma^2}{KM\epsilon}
		\right)^{\frac{1}{2}}
		\right\}
		\right).
\end{equation*}
	
	We next derive the communication complexity and the sample complexity.
	
\item[$\bullet$] \textbf{Communication complexity:} In FSMGDA-M-VR, each global iteration corresponds to one round of communication between the clients and the server. Therefore, the total communication complexity is equal to the total number of global rounds $T$. Combining the above two cases, the communication complexity can be expressed as
\begin{equation*}
\mathcal{O}\left(
		\max\left\{
		\frac{L(1+D^2)\sigma\Delta(1+S)^{\frac{3}{2}}}
		{KM\epsilon^{\frac{3}{2}}},
		\frac{LS\Delta}{\epsilon},
		\frac{(1+S)G_0^2}{\epsilon},
		\left(
		\frac{C_1}{\epsilon}
		\right)^{\frac{3}{4}},
		\frac{\sigma^2(1+S)}{KM\epsilon},
		\left(
		\frac{(1+S)\sigma^2}{KM\epsilon}
		\right)^{\frac{1}{2}}
		\right\}
		\right).
\end{equation*}
	
\item[$\bullet$] \textbf{Sample Complexity:} Next, we estimate the total number of stochastic gradient computations. At the initialization stage, the algorithm requires $SMB$ stochastic gradient computations. At each local update, the algorithm computes stochastic gradients for all $S$ objectives, and both the current stochastic gradient and the gradient at the previous global model are involved in the momentum estimator. Therefore, each communication round requires $2SMK$ stochastic gradient computations. Hence, the total number of stochastic gradient computations is
	\begin{equation*}
		SMB+2SMKT.
	\end{equation*}
It remains to compare the initialization cost $SMB$ with the training-stage cost $2SMKT$. We consider the following two cases.
\item[(1)] $\bm{B=\left\lceil \frac{K}{T\beta^2}\right\rceil}$. In this case,
\begin{equation*}
		\frac{SMB}{2SMKT}=\frac{B}{2KT}.
\end{equation*}	
Since
\begin{equation*}
B=\left\lceil \frac{K}{T\beta^2}\right\rceil
\le\frac{K}{T\beta^2}+1,
\end{equation*}	
we have
\begin{equation*}
\frac{B}{2KT}\le \frac{1}{2T^2\beta^2}+\frac{1}{2KT}.
\end{equation*}	
Recall that
\begin{equation*}
\beta=\left(
\frac{MKL^2(1+D^2)^2\Delta^2}{\sigma^4T^2}\right)^{1/3},
\end{equation*}	
then
\begin{equation*}
\frac{1}{T^2\beta^2}=\frac{\sigma^{8/3}}
{M^{2/3}K^{2/3}L^{4/3}(1+D^2)^{4/3}\Delta^{4/3}}\cdot	\frac{1}{T^{2/3}}\to 0\qquad (T\to\infty).
\end{equation*}	
Therefore,
\begin{equation*}
		\frac{SMB}{2SMKT}\to 0.
\end{equation*}	
Thus, the initialization cost $SMB$ is of lower order than the training-stage cost $2SMKT$, and hence
\begin{equation*}
		SMB+2SMKT=\mathcal{O}(SMKT).
\end{equation*}	
	
\item[(2)] $\bm{B=\Theta(KT)}$. In this case, we have $SMB=\Theta(SMKT)$,
and hence
\begin{equation*}
SMB+2SMKT=\mathcal{O}(SMKT),
\end{equation*}
which implies that the total number of stochastic gradient computations is also of order $\mathcal{O}(SMKT)$.

Combining the above two cases, for either choice of $B$, the total number of stochastic gradient computations is of order $\mathcal{O}(SMKT)$. Multiplying the above communication complexity by $SMK$, we obtain the sample complexity
\begin{equation*}
\mathcal{O}\left(SMK\max\left\{
		\frac{L(1+D^2)\sigma\Delta(1+S)^{\frac{3}{2}}}
		{KM\epsilon^{\frac{3}{2}}},
		\frac{LS\Delta}{\epsilon},
		\frac{(1+S)G_0^2}{\epsilon},
		\left(
		\frac{C_1}{\epsilon}
		\right)^{\frac{3}{4}},
		\frac{\sigma^2(1+S)}{KM\epsilon},
		\left(
		\frac{(1+S)\sigma^2}{KM\epsilon}
		\right)^{\frac{1}{2}}
		\right\}
		\right),
\end{equation*}
which completes the proof.
\end{proof}

\begin{remark}
Compared with FSMGDA and FedCMOO, the proposed FSMGDA-M-VR improves the dependence on the target accuracy in terms of both communication-round complexity and the total number of stochastic gradient computations. Specifically, to obtain an $\epsilon$-Pareto stationary solution in expectation, FSMGDA-M-VR requires
\begin{equation*}
T=\mathcal{O}\left(
	\max\left\{
	\frac{L(1+D^2)\sigma\Delta(1+S)^{\frac{3}{2}}}
	{KM\epsilon^{\frac{3}{2}}},
	\frac{LS\Delta}{\epsilon},
	\frac{(1+S)G_0^2}{\epsilon},
	\left(
	\frac{C_1}{\epsilon}
	\right)^{\frac{3}{4}},
	\frac{\sigma^2(1+S)}{KM\epsilon},
	\left(
	\frac{(1+S)\sigma^2}{KM\epsilon}
	\right)^{\frac{1}{2}}
	\right\}
	\right)
\end{equation*}
communication rounds. Hence, for fixed problem-dependent constants and in the high-accuracy regime where the first term dominates, its communication-round complexity is of order $\mathcal{O}(\epsilon^{-3/2})$, improving upon the $\mathcal{O}(\epsilon^{-2})$ communication-round complexity of FSMGDA and FedCMOO.
\end{remark}

\section{Numerical experiments}

In this section, we evaluate Algorithm \ref{VSOMD} on three federated multiobjective learning benchmarks. We first describe the experimental setup in Section~\ref{ss:setup} and then present the results in Sections~\ref{ss:comparative analysis}--\ref{ss:hyperparameter effects}.

\subsection{Datasets and implementation}
\label{ss:setup}

We consider the following experimental settings:

\noindent\textbf{1) MultiMNIST:} We construct a two-objective dataset from MNIST by placing two randomly selected digits in each image—one in the top-left corner and the other in the bottom-right corner—while preserving the original image size.\\
\noindent\textbf{2) MNIST+FMNIST:} We generate a new dataset by randomly combining one image from MNIST and one from FashionMNIST, following a construction method similar to MultiMNIST.\\
\noindent\textbf{3) CIFAR10+MNIST:} We synthesize images by superimposing a randomly selected MNIST digit at a fixed location onto all three channels of CIFAR10 images.

In our experiments, we employ convolutional neural networks of varying scales, with model sizes spanning from 34.6k parameters to 1.73m parameters in the largest model, ResNet-18. All architectures are based on a shared encoder for all the objective functions, coupled with task-specific decoders comprising the final one or two linear layers.

Unless the number of clients is varied in Section 5.5, the main experiments use 100 clients. To induce data heterogeneity across clients, we follow the implementation in \cite{Acar2021} and use a Dirichlet distribution with concentration parameter $\alpha=0.3$ to determine the class proportions assigned to each client, while keeping the total number of data points equal across clients. To ensure a fair comparison, the same Dirichlet client partition, training/validation split, model initialization, random seeds, and minibatch sequences are used for all methods. For the MultiMNIST and MNIST+FMNIST datasets, we employ a LeNet-like CNN encoder \cite{Sener2018}, with two linear layers per task as decoders, resulting in a total of 34.6k parameters. For the CIFAR10+MNIST dataset, we use a CNN encoder adapted from \cite{Acar2021}, along with a single linear layer decoder, totaling 1.73m parameters. In terms of training configuration, we apply the negative log-likelihood loss across all three datasets and incorporate random image rotation for data augmentation during training. The batch size is set to 128, and unless otherwise noted, each training round comprises 10 local steps. Each experiment is repeated with 10 different random seeds. We report the mean and standard deviation across the runs, with the shaded regions in the figures indicating one standard deviation around the mean.

All datasets come with predefined training and test splits, which we adopt together with the default validation splits when available. In cases where a validation split is not provided, we randomly partition the training set into 80\% for training and 20\% for validation. Throughout all experiments, constant learning rates are used without any schedule or decay. The learning rates are selected by running each method under the corresponding experimental setting and choosing the value that yields the best performance on the validation set. 

For the MultiMNIST experiments, different learning-rate configurations are adopted across algorithms. Specifically, FSMGDA \cite{Yang2024} uses a global learning rate of 2 and a local learning rate of 0.1; FedCMOO \cite{Askin2024} employs a global learning rate of 1 and a local learning rate of 0.25; and FSMGDA-M-VR adopts a global learning rate of 1 with a local learning rate of 0.5. For the MNIST+FMNIST experiments, a similar configuration strategy is followed. FSMGDA uses global and local learning rates of 2 and 0.1, respectively; FedCMOO uses 1.2 and 0.2; and FSMGDA-M-VR uses 1.2 and 0.5. For the CIFAR10+MNIST experiments, FSMGDA and FedCMOO share the same learning-rate settings, with a global learning rate of 1.6 and a local learning rate of 0.3, whereas FSMGDA-M-VR uses a global learning rate of 1.6 and a local learning rate of 0.5.

\subsection{Comparative analysis of algorithm performance}
\label{ss:comparative analysis}

As shown in Figures \ref{fig:a} and \ref{fig:b}, FSMGDA-M-VR demonstrates clear advantages over FedCMOO and FSMGDA in terms  test accuracy and test loss on the MultiMNIST, MNIST+FMNIST, and CIFAR10+MNIST datasets. In terms of accuracy, FSMGDA-M-VR consistently achieves faster convergence in the early training rounds, reaching higher performance levels more rapidly than the other methods, particularly on the MultiMNIST and MNIST+FMNIST datasets, while matching or slightly surpassing the final accuracy of FedCMOO. Regarding test loss, FSMGDA-M-VR exhibits a steeper and more stable decline, indicating more efficient optimization and better robustness against gradient fluctuations. This smoother loss reduction, coupled with its rapid accuracy improvement, suggests that the momentum-based variance reduction mechanism effectively enhances training stability and convergence speed. Overall, FSMGDA-M-VR maintains communication efficiency while delivering superior or comparable optimization performance relative to FedCMOO and significantly outperforming FSMGDA in both convergence rate and stability.

\begin{figure}[!ht]
\centering
\begin{minipage}{0.33\textwidth}
\centering
\includegraphics[width=\linewidth]{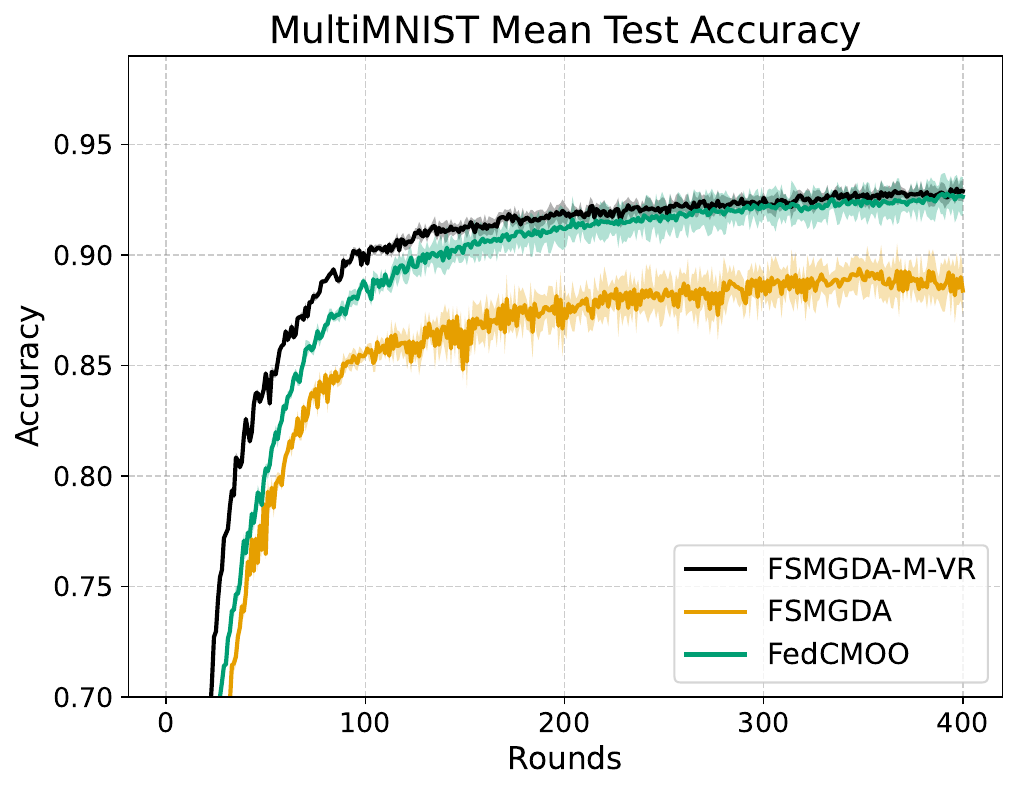}
\end{minipage}\hfill
\begin{minipage}{0.33\textwidth}
\centering
\includegraphics[width=\linewidth]{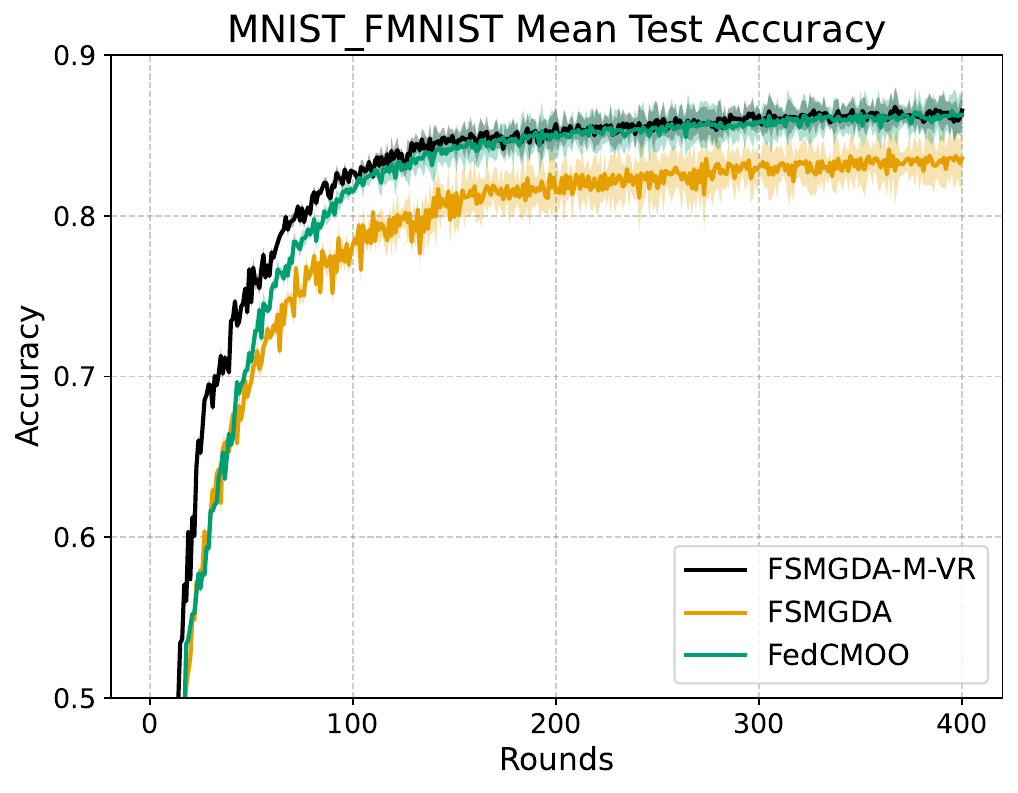}
\end{minipage}\hfill
\begin{minipage}{0.33\textwidth}
\centering
\includegraphics[width=\linewidth]{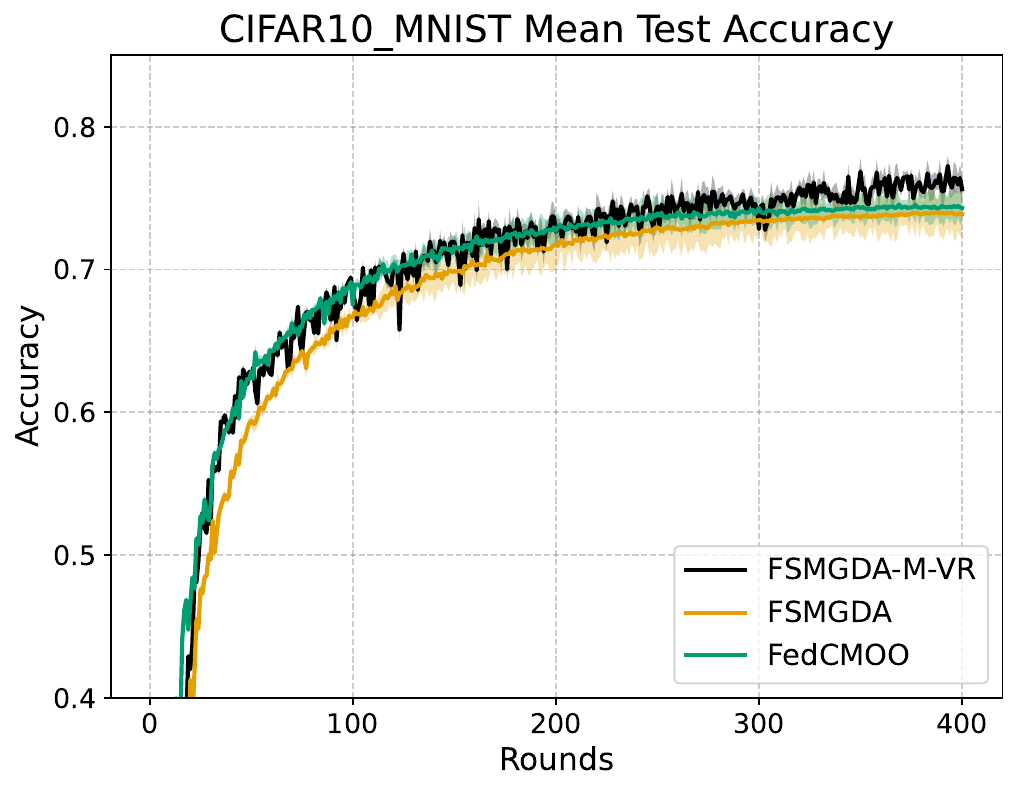}
\end{minipage}
\caption{Mean test accuracy with MultiMNIST, MNIST+FMNIST and CIFAR10+MNIST datasets.}\label{fig:a}
\end{figure}

\begin{figure}[!ht]
\centering
\begin{minipage}{0.33\textwidth}
\centering
\includegraphics[width=\linewidth]{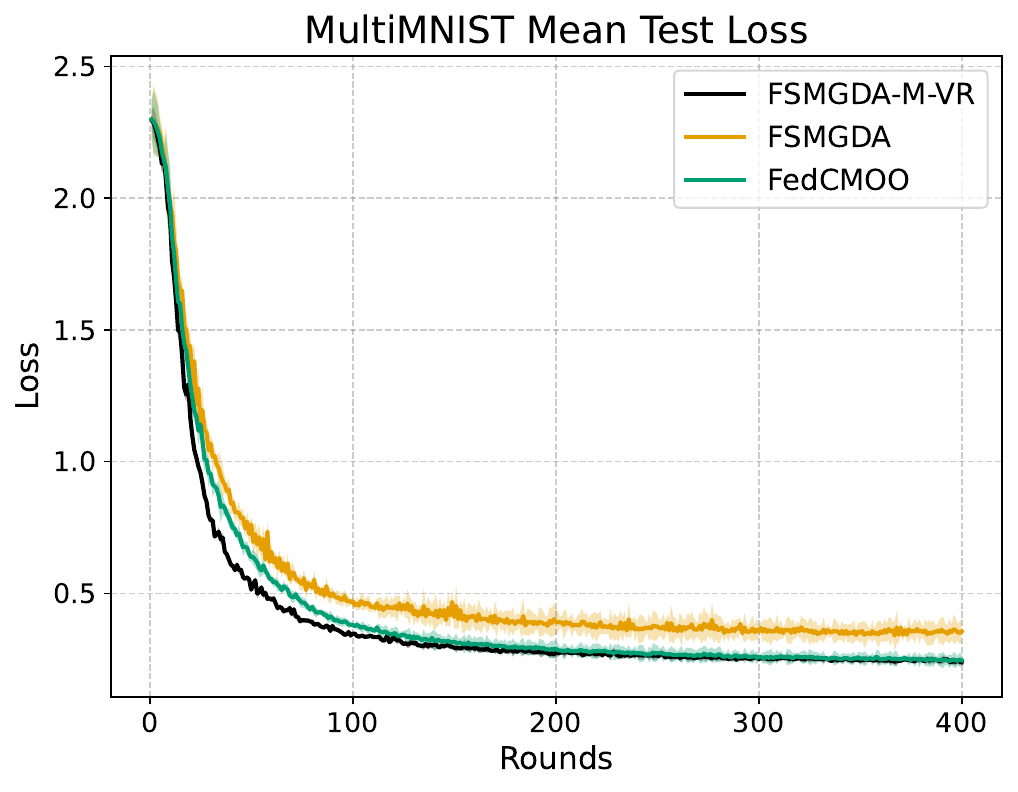}
\end{minipage}\hfill
\begin{minipage}{0.33\textwidth}
\centering
\includegraphics[width=\linewidth]{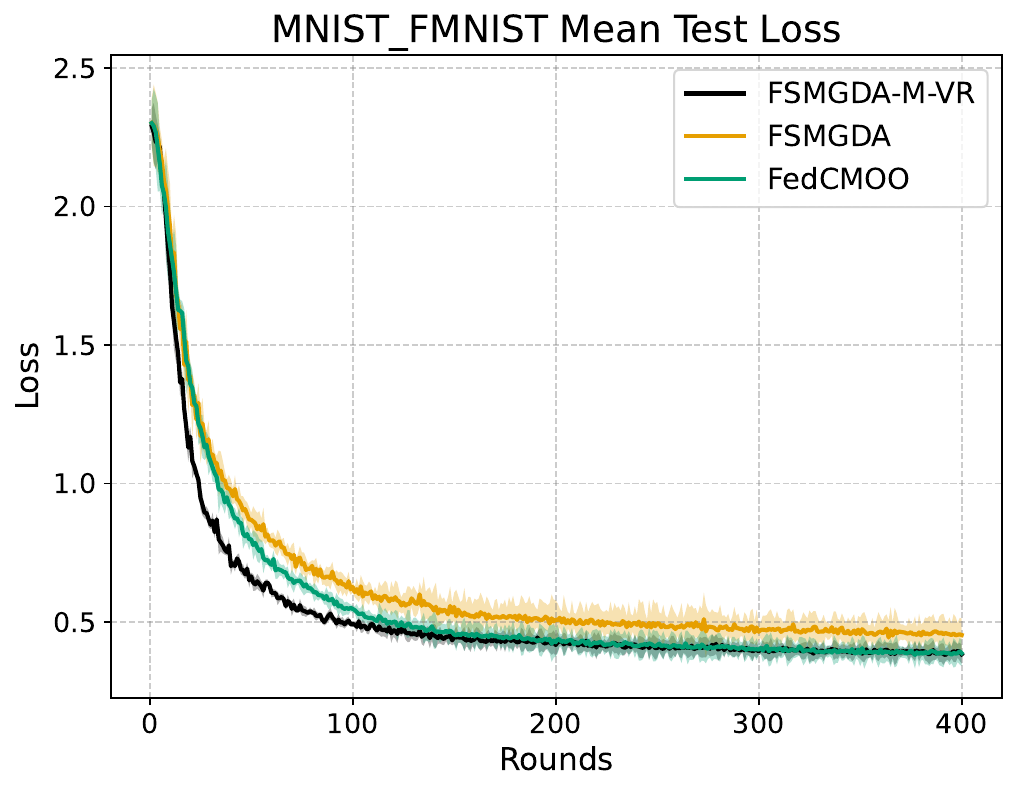}
\end{minipage}\hfill
\begin{minipage}{0.33\textwidth}
\centering
\includegraphics[width=\linewidth]{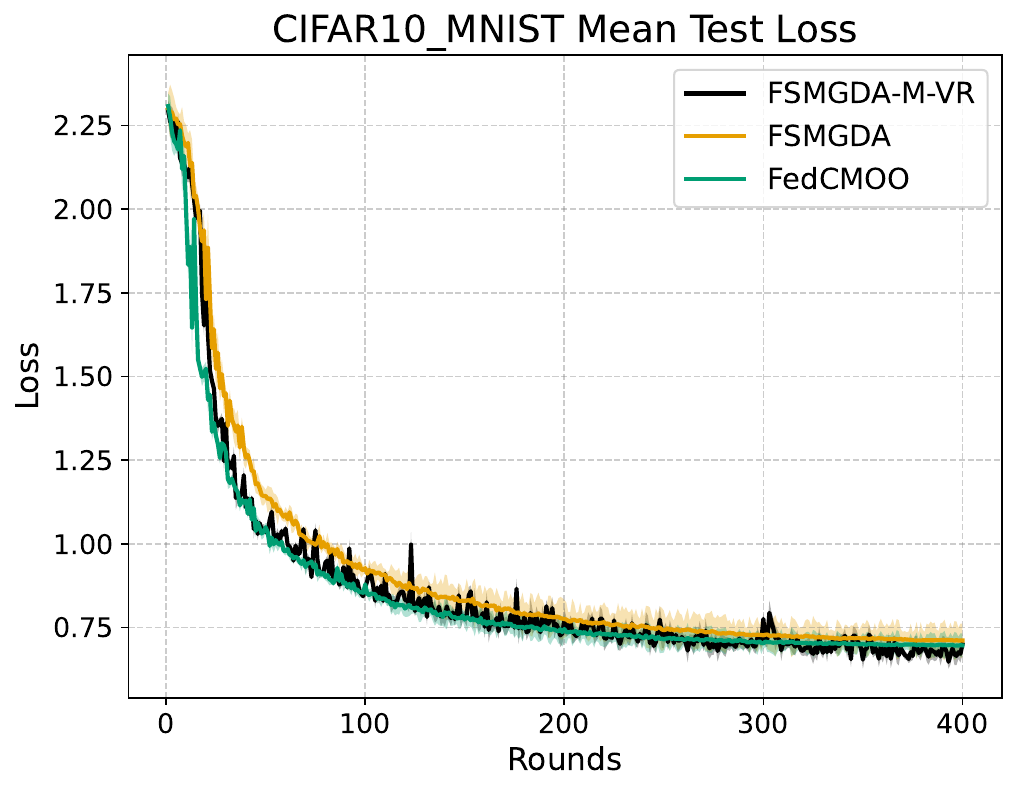}
\end{minipage}
\caption{Mean test loss in MultiMNIST, MNIST+FMNIST and CIFAR10+MNIST datasets.}\label{fig:b}
\end{figure}

Table \ref{tab:comparison_results} reports the final test accuracies of the three algorithms on the three bi-objective tasks, reported as mean $\pm$ standard deviation over 10 different random seeds. FSMGDA-M-VR performs particularly well on MultiMNIST and CIFAR10+MNIST: it achieves the highest mean accuracy on the first objective and ties with FedCMOO in mean accuracy on the second objective of MultiMNIST, while obtaining the highest mean accuracies on both objectives of CIFAR10+MNIST. On MNIST+FMNIST, FSMGDA-M-VR achieves the highest mean accuracy on the first objective, whereas FedCMOO performs better on the second objective.

\begin{table}[!ht]
\centering
\begin{tabular}{lccc}
\toprule
\textbf{Experiment} & \textbf{MNIST+FMNIST} & \textbf{MultiMNIST} & \textbf{CIFAR10+MNIST} \\
\midrule
FSMGDA & 93.2$\pm$1.1 / 73.4$\pm$0.9 & 91.1$\pm$0.6 / 87.1$\pm$0.9 & 54.2$\pm$0.7 / 93.3$\pm$1.1 \\
FedCMOO & 94.9$\pm$1.0 / \textbf{77.5$\pm$0.8} & 94.3$\pm$0.7 / 91.1 $\pm$ 0.8 &  55.2$\pm$0.6 / 93.7$\pm$1.0 \\
FSMGDA-M-VR & \textbf{96.4$\pm$0.5} / 76.0$\pm$0.4 & \textbf{94.5$\pm$0.3} / \textbf{91.1$\pm$0.3} & \textbf{57.2$\pm$0.3} / \textbf{96.0$\pm$0.5} \\
\bottomrule
\end{tabular}
\caption{The final accuracies (\%) of the first/second objectives in 2-objective settings. The bold values indicate the best accuracy for each objective.}
\label{tab:comparison_results}
\end{table}

\subsection{Comparison in terms of sample access cost}

Figures~\ref{fig:accuracy_subplots} and~\ref{fig:loss_subplots} report the mean test accuracy and mean test loss of FSMGDA, FedCMOO, and FSMGDA-M-VR against the cumulative number of samples accessed per client. On MultiMNIST, FSMGDA-M-VR achieves higher test accuracy than the other two methods over most of the considered range of sample accesses and yields the lowest final test loss. On MNIST+FMNIST, FSMGDA-M-VR generally achieves higher test accuracy over a substantial portion of the sample-access range and exhibits lower test loss than the other two methods. On CIFAR10+MNIST, FedCMOO exhibits faster initial accuracy improvement, while FSMGDA-M-VR catches up in later stages and attains a comparable final accuracy; its final test loss is also close to the lowest among the three methods.

\begin{figure}[!ht]
	\centering
	\begin{minipage}{0.33\textwidth}
		\centering
		\includegraphics[width=\linewidth]{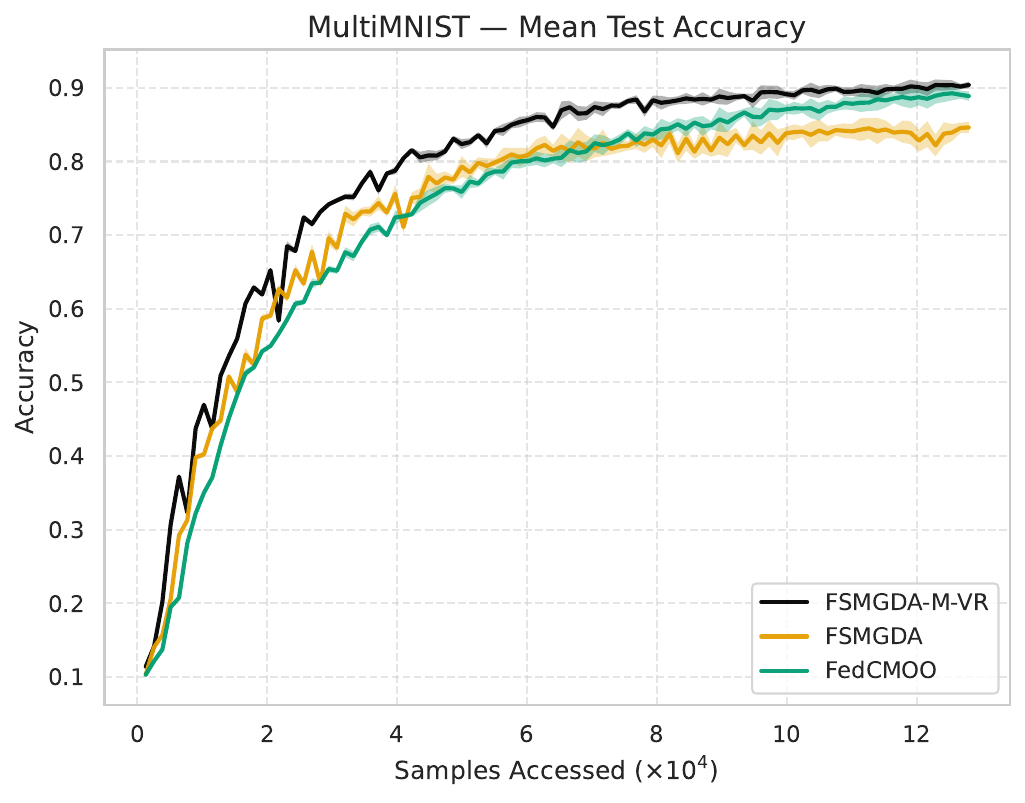}
	\end{minipage}\hfill
	\begin{minipage}{0.33\textwidth}
		\centering
		\includegraphics[width=\linewidth]{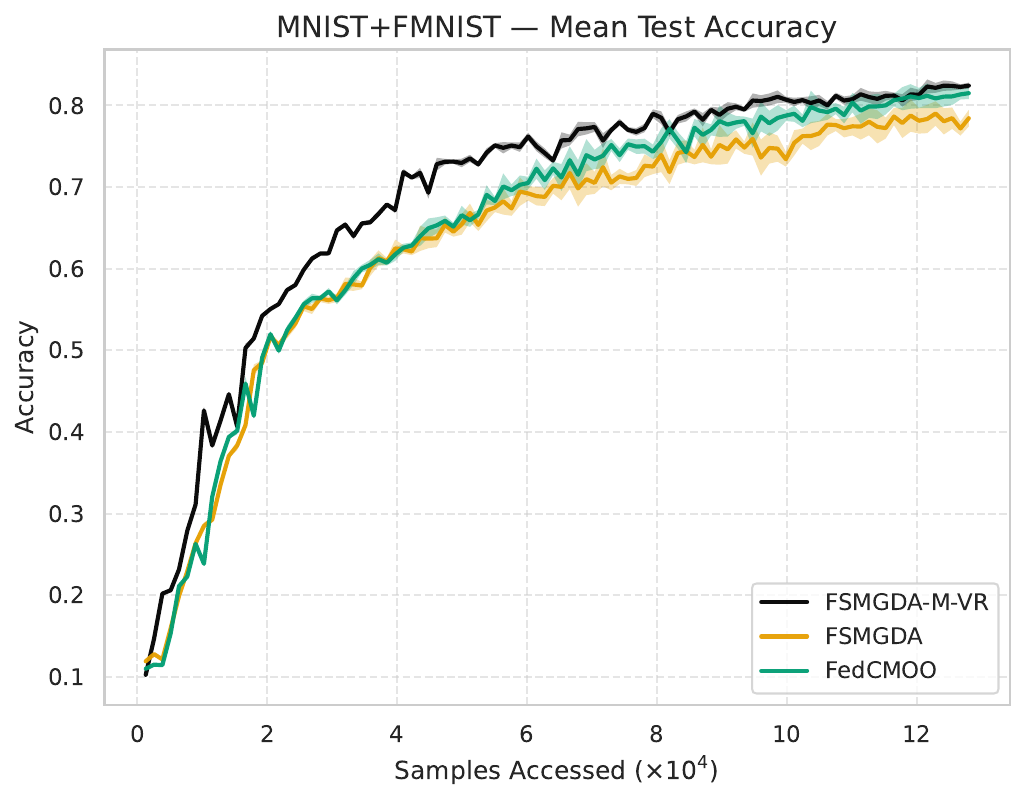}
	\end{minipage}\hfill
	\begin{minipage}{0.33\textwidth}
		\centering
		\includegraphics[width=\linewidth]{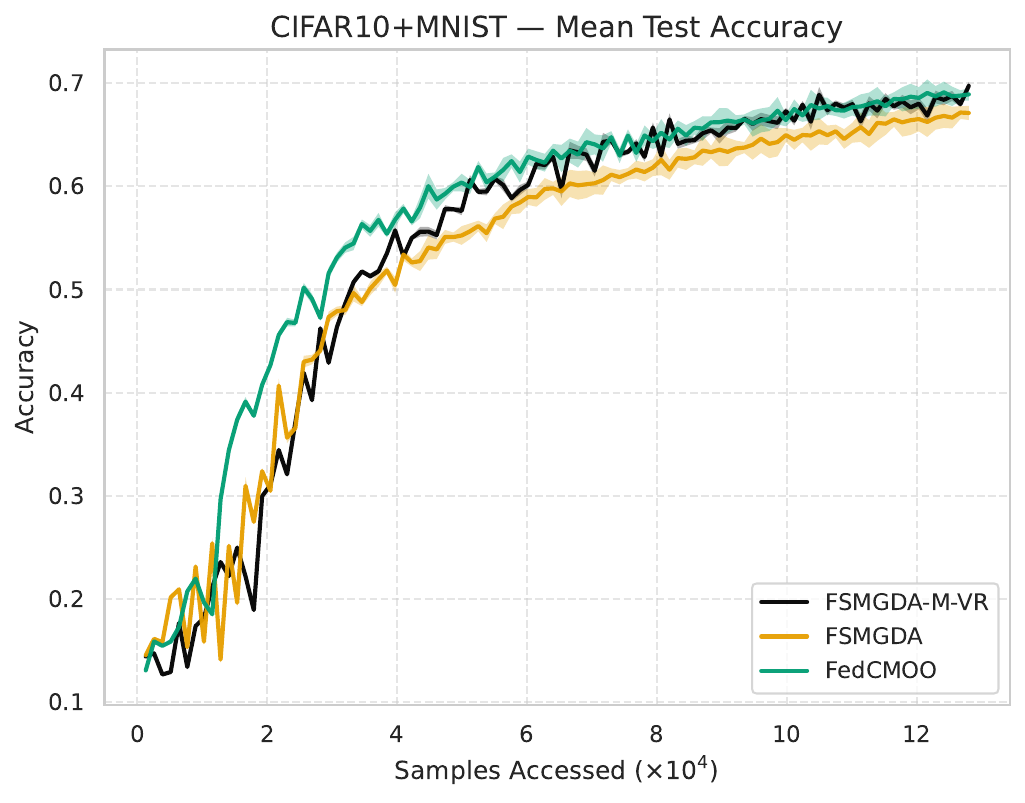}
	\end{minipage}
	\caption{Test accuracy versus cumulative samples accessed per client on the MultiMNIST, MNIST+FMNIST, and CIFAR10+MNIST datasets.}
	\label{fig:accuracy_subplots}
\end{figure}

\begin{figure}[!ht]
	\centering
	\begin{minipage}{0.33\textwidth}
		\centering
		\includegraphics[width=\linewidth]{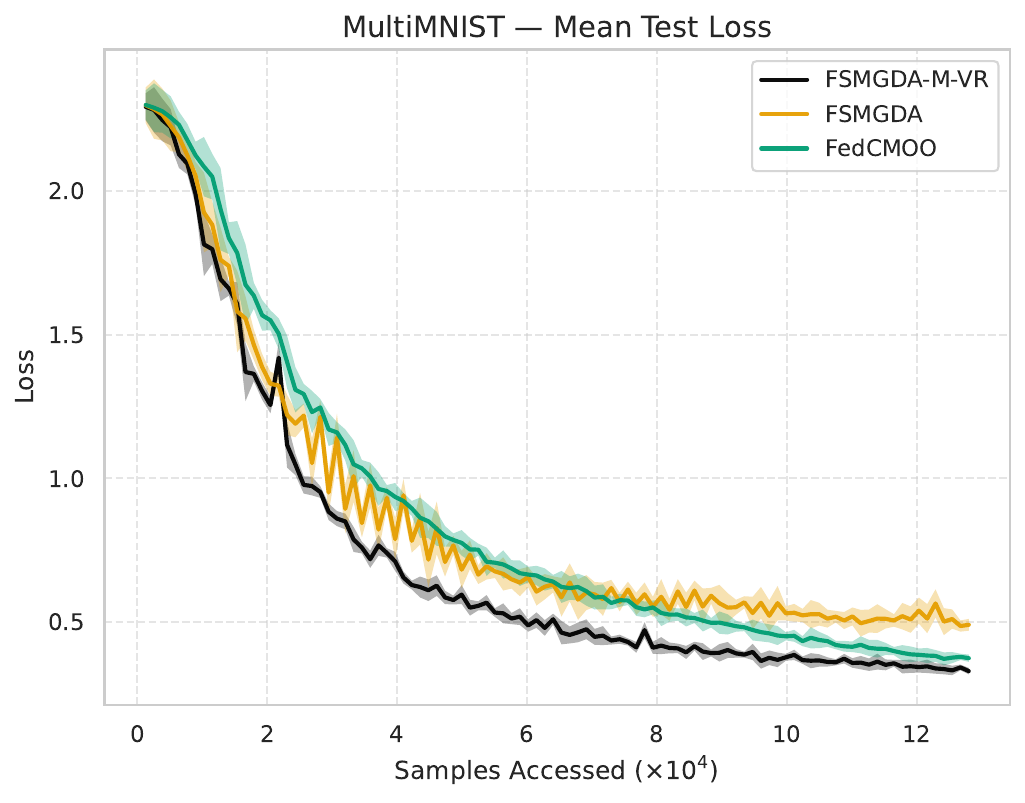}
	\end{minipage}\hfill
	\begin{minipage}{0.33\textwidth}
		\centering
		\includegraphics[width=\linewidth]{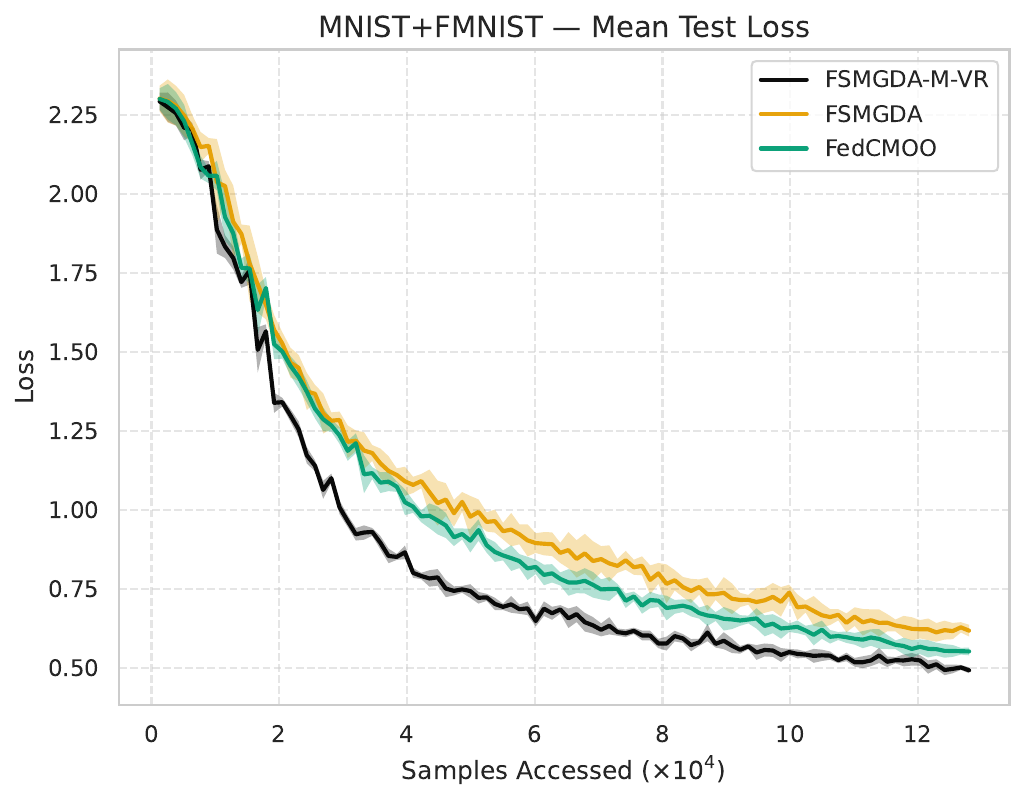}
	\end{minipage}\hfill
	\begin{minipage}{0.33\textwidth}
		\centering
		\includegraphics[width=\linewidth]{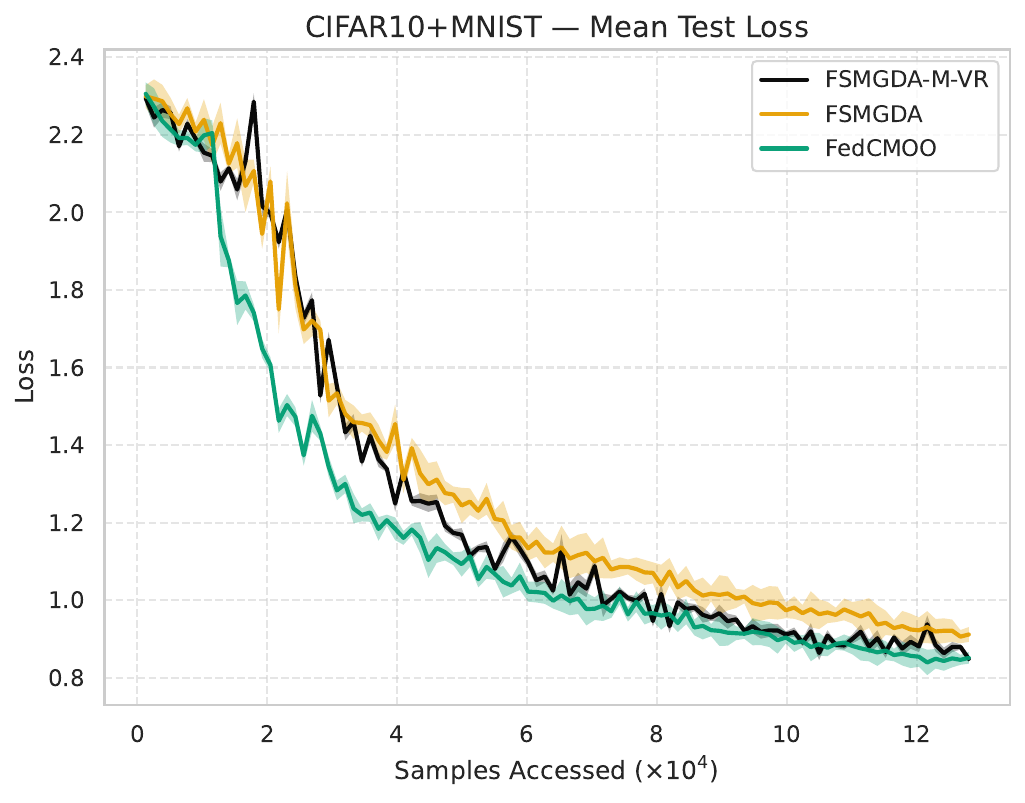}
	\end{minipage}
	\caption{Test loss versus cumulative samples accessed per client on the MultiMNIST, MNIST+FMNIST, and CIFAR10+MNIST datasets.}
	\label{fig:loss_subplots}
\end{figure}

\subsection{Comparing the local training performance of FSMGDA-M-VR, FSMGDA, and FedCMOO}

Figure \ref{fig:placeholder} compares the average local training progress across clients in terms of loss decrease and accuracy improvement. It can be observed that FSMGDA‑M‑VR demonstrates superior local optimization performance compared to FSMGDA, while maintaining results comparable to FedCMOO. Specifically, in terms of local loss reduction, FSMGDA‑M‑VR consistently achieves a steeper and larger decrease in loss, especially during the early rounds, indicating that it optimizes local objectives more effectively and rapidly. Regarding local accuracy improvement, FSMGDA‑M‑VR also exhibits sharper gains and faster convergence in the initial training phase, with its trajectory closely aligning with that of FedCMOO across both datasets.

\begin{figure}[!ht]
    \centering
    \includegraphics[width=\linewidth]{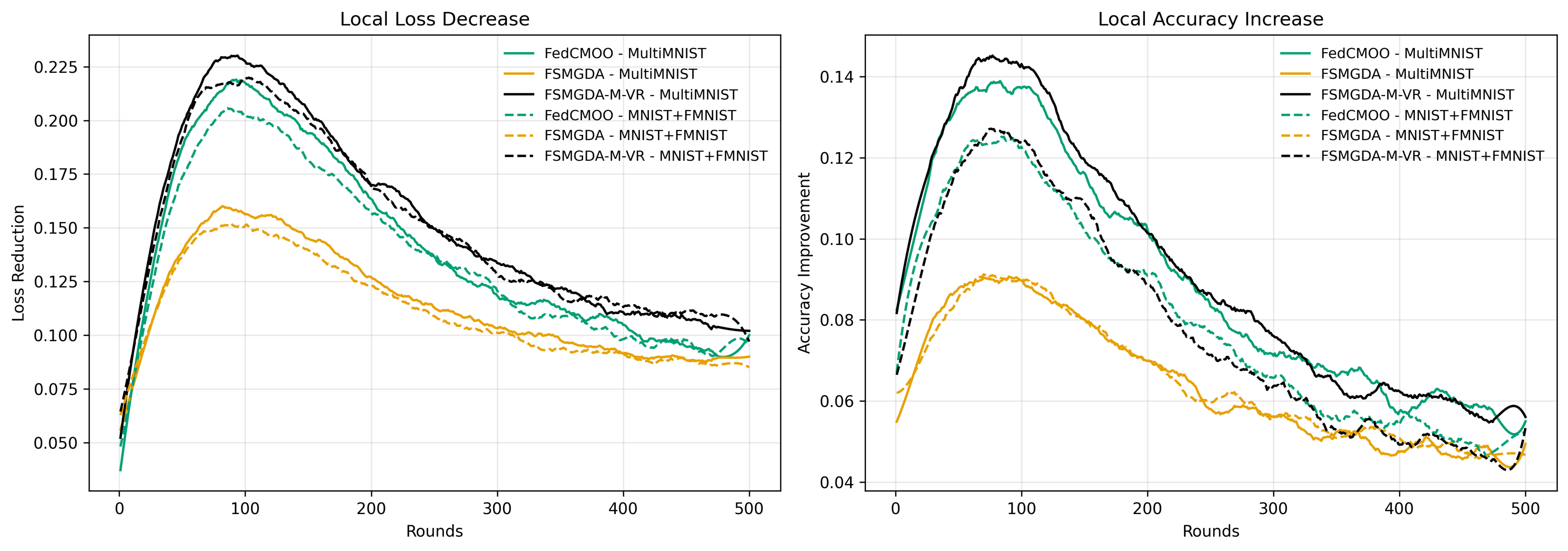}
    \caption{Comparison of average local training progress (loss ↓ and accuracy ↑) across clients among FSMGDA, FedCMOO, and FSMGDA-M-VR.}
    \label{fig:placeholder}
\end{figure}

\subsection{Analysis of hyperparameter effects on convergence}
\label{ss:hyperparameter effects}

\textbf{1) Impact of the batch size:}
Figures \ref{fig:c} and \ref{fig:d}  compare the performance of FSMGDA-M-VR with
different batch sizes on the MultiMNIST, MNIST+FMNIST, and CIFAR10+MNIST
datasets. The results indicate that increasing the batch size generally leads to faster convergence,
higher final test accuracy, and lower test loss, while also reducing the fluctuations
of the training curves. On MultiMNIST, the larger batch sizes, particularly 128 and
256, converge substantially faster than the batch size of 16 and attain higher final
accuracy and lower loss. A similar trend is observed on MNIST+FMNIST, where the
performance differences between batch sizes 128 and 256 become relatively small in
the later stages of training. On CIFAR10+MNIST, the effect of the batch size is more
pronounced: batch sizes 128 and 256 yield much faster convergence and noticeably
better final accuracy and loss than the smaller batch sizes, especially compared with
the batch size of 16.

\begin{figure}[!ht]
\centering
\begin{minipage}{0.33\textwidth}
\centering
\includegraphics[width=\linewidth]{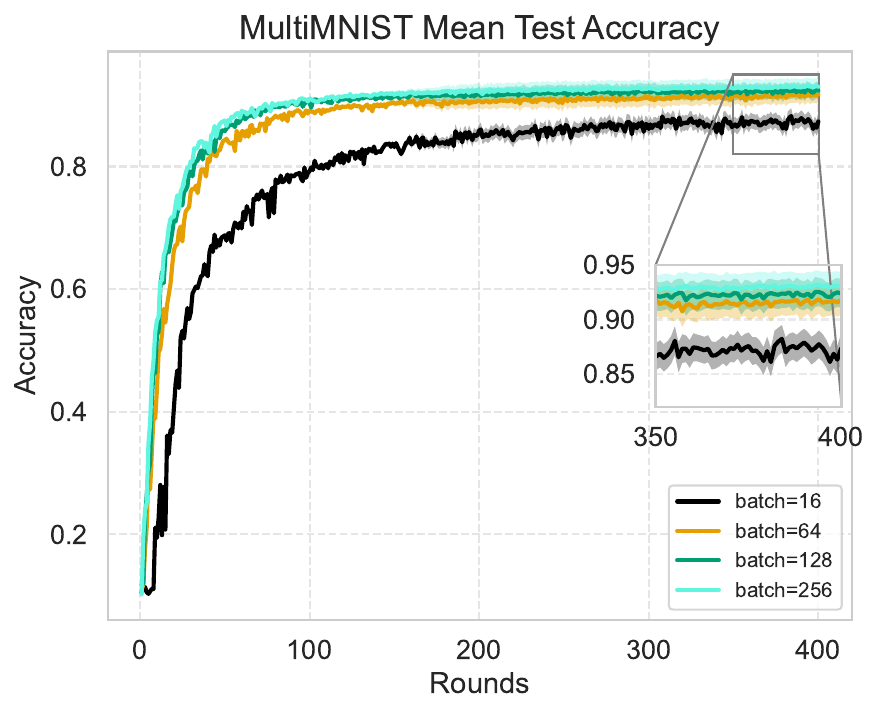}
\end{minipage}\hfill
\begin{minipage}{0.33\textwidth}
\centering
\includegraphics[width=\linewidth]{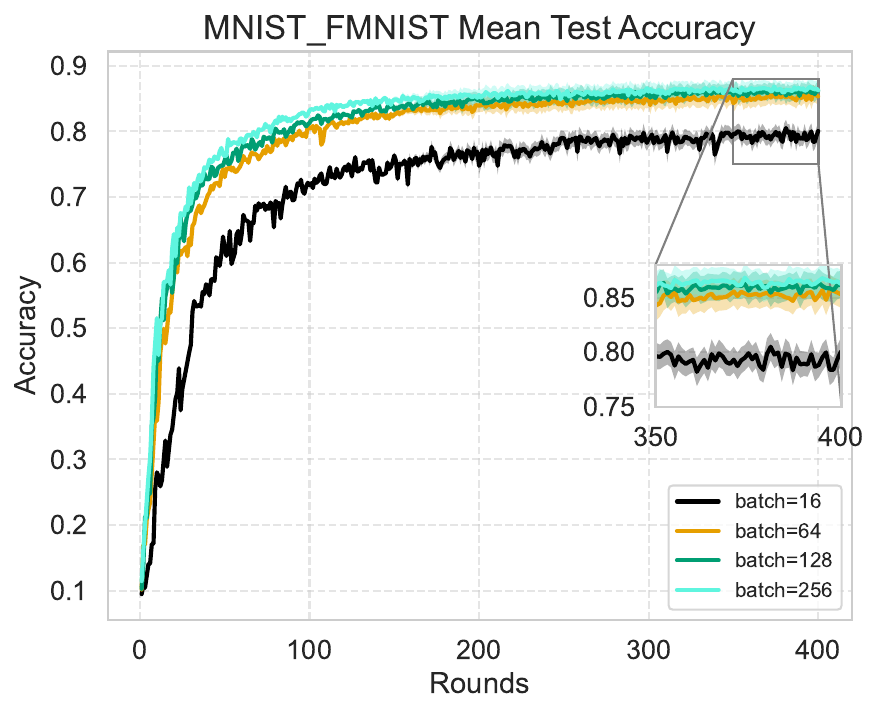}
\end{minipage}\hfill
\begin{minipage}{0.33\textwidth}
\centering
\includegraphics[width=\linewidth]{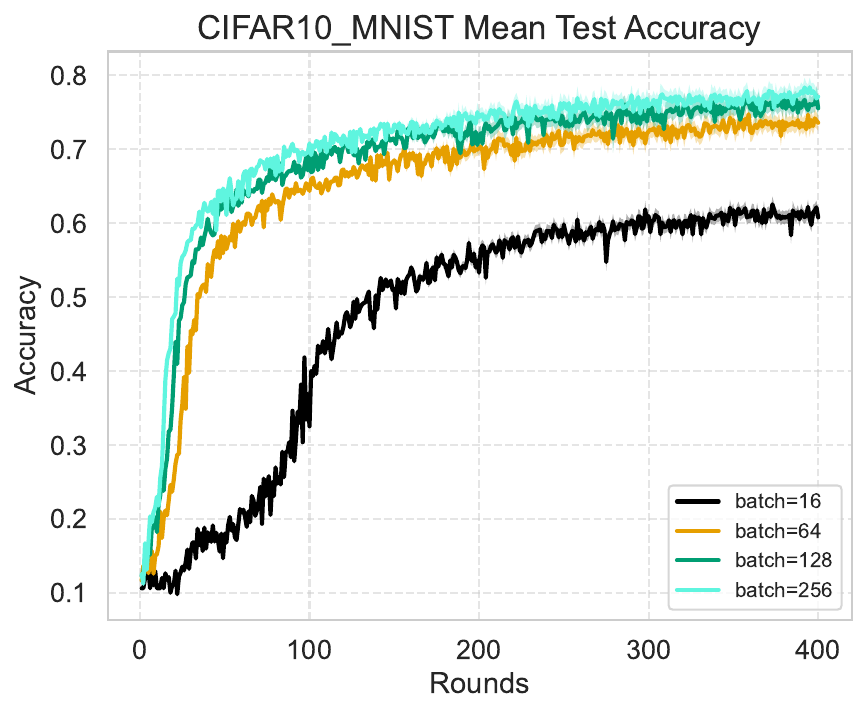}
\end{minipage}
\caption{Mean test accuracy of different batch sizes on MultiMNIST, MNIST+FMNIST and CIFAR10+MNIST datasets.}\label{fig:c}
\end{figure}

\begin{figure}[!ht]
	\centering
\begin{minipage}{0.33\textwidth}
\centering
\includegraphics[width=\linewidth]{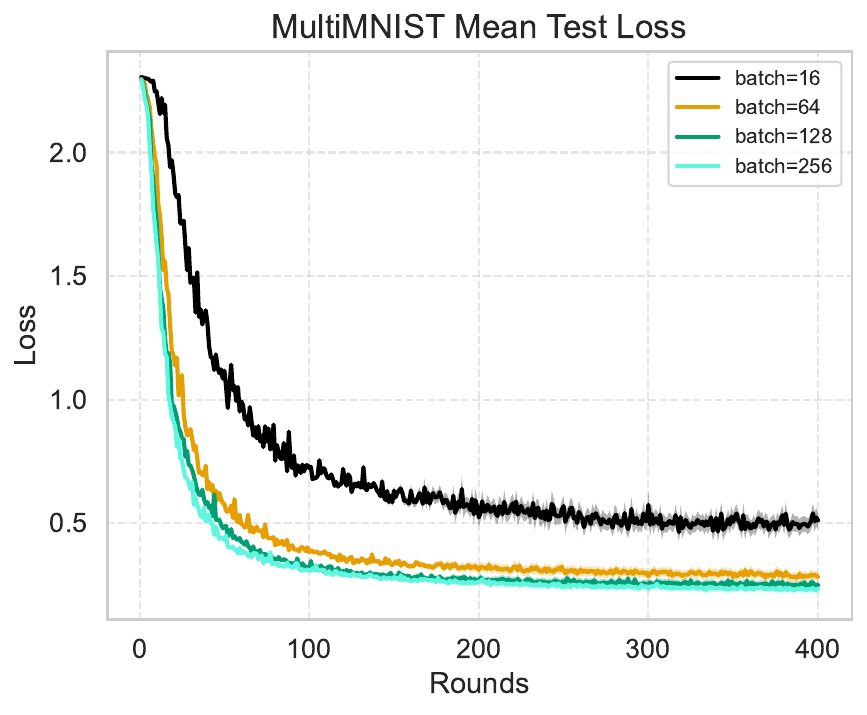}
\end{minipage}\hfill
\begin{minipage}{0.33\textwidth}
\centering
\includegraphics[width=\linewidth]{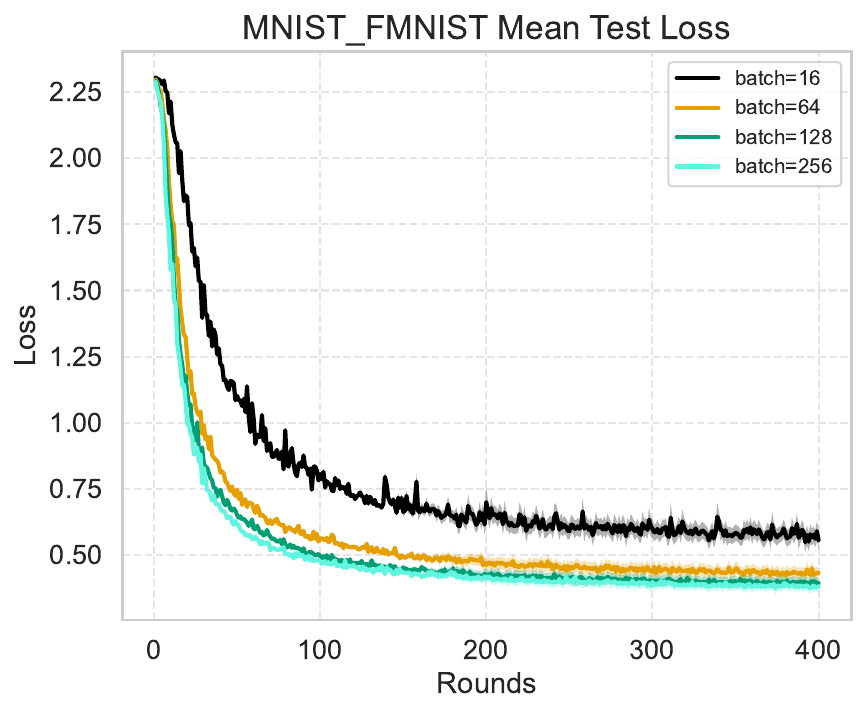}
\end{minipage}\hfill
\begin{minipage}{0.33\textwidth}
\centering
\includegraphics[width=\linewidth]{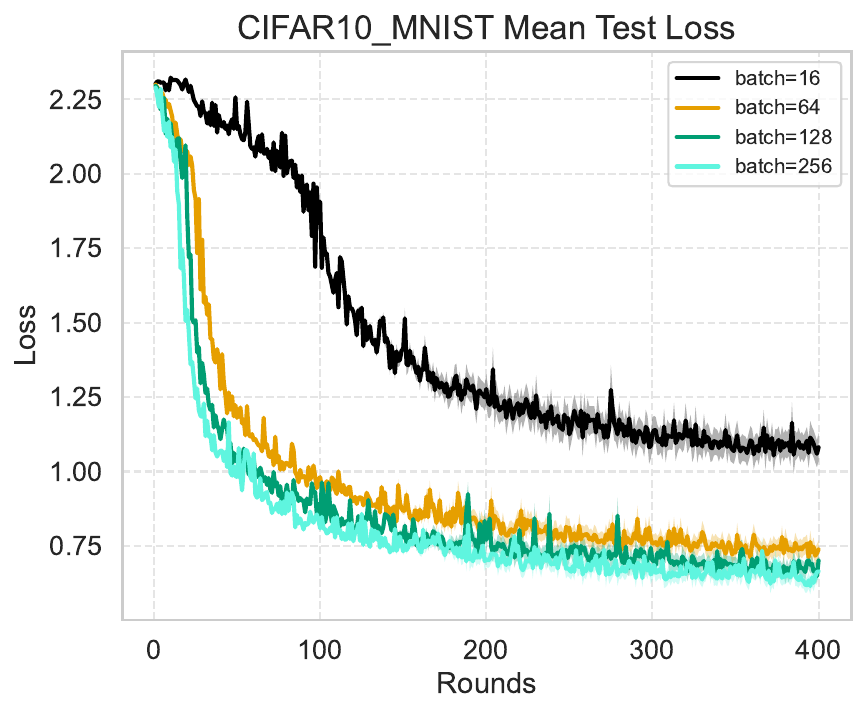}
\end{minipage}
\caption{Mean test loss of different batch sizes on MultiMNIST, MNIST+FMNIST and CIFAR10+MNIST datasets.}\label{fig:d}
\end{figure}

\noindent\textbf{2) Impact of the number of clients:}
Figures \ref{fig:e} and \ref{fig:f} show how the number of clients affects the performance of FSMGDA‑M‑VR across the MultiMNIST, MNIST+FMNIST, and CIFAR10+MNIST datasets. The results indicate that using more clients (e.g., 30) consistently leads to higher final test accuracy and lower final test loss. This improvement is most noticeable on the more heterogeneous CIFAR10+MNIST dataset, where a larger client pool better captures diverse data, and on MultiMNIST, where correlated objectives benefit from updates aggregated across more participants. On the relatively balanced MNIST+FMNIST tasks, increasing the number of clients has little impact on final test accuracy and final test loss.

\begin{figure}[!ht]
\centering
\begin{minipage}{0.33\textwidth}
\centering
\includegraphics[width=\linewidth]{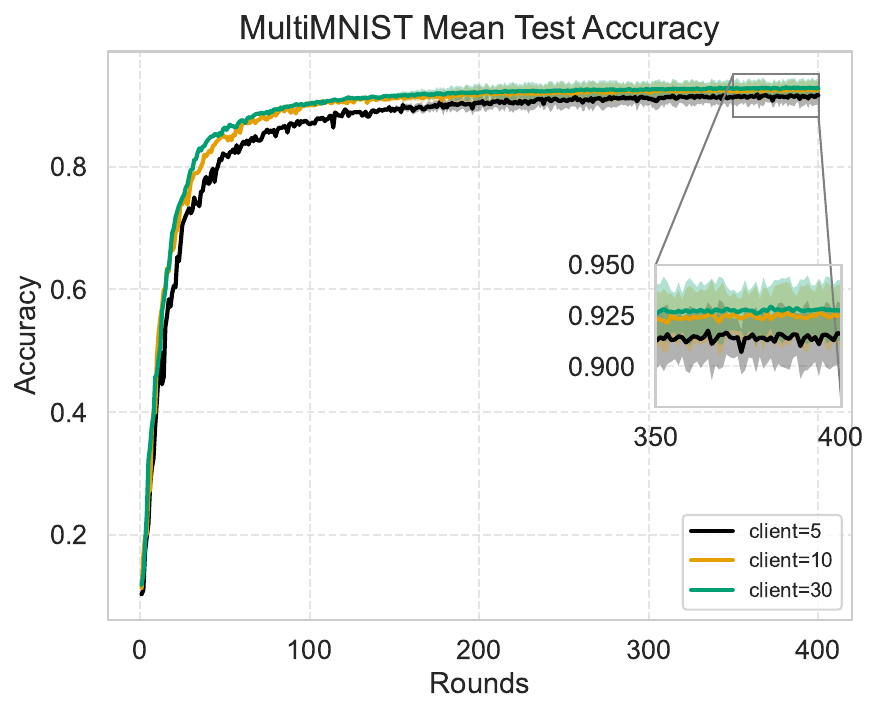}
\end{minipage}\hfill
\begin{minipage}{0.33\textwidth}
\centering
\includegraphics[width=\linewidth]{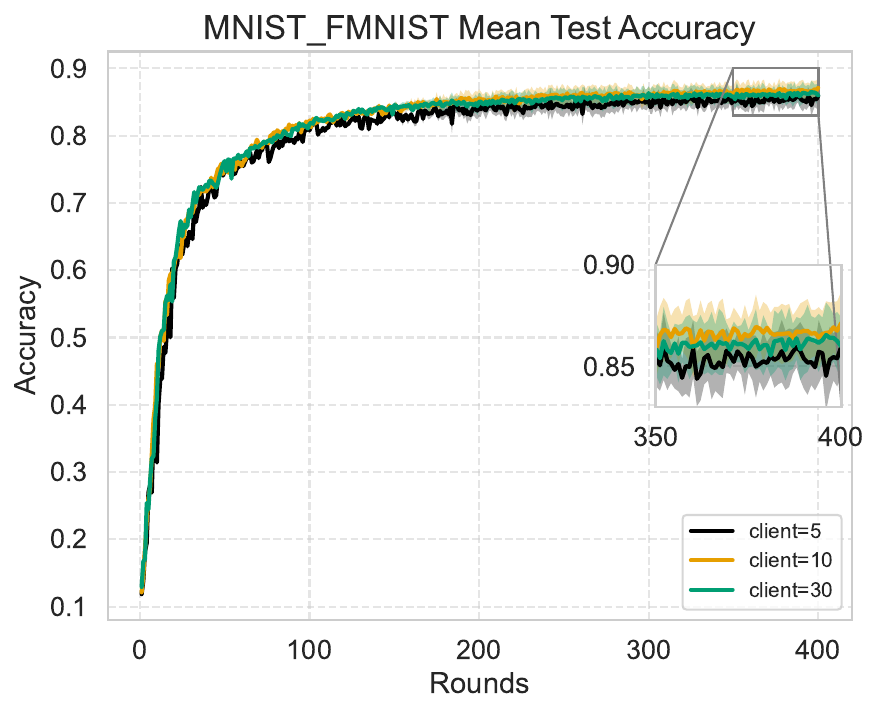}
\end{minipage}\hfill
\begin{minipage}{0.33\textwidth}
\centering
\includegraphics[width=\linewidth]{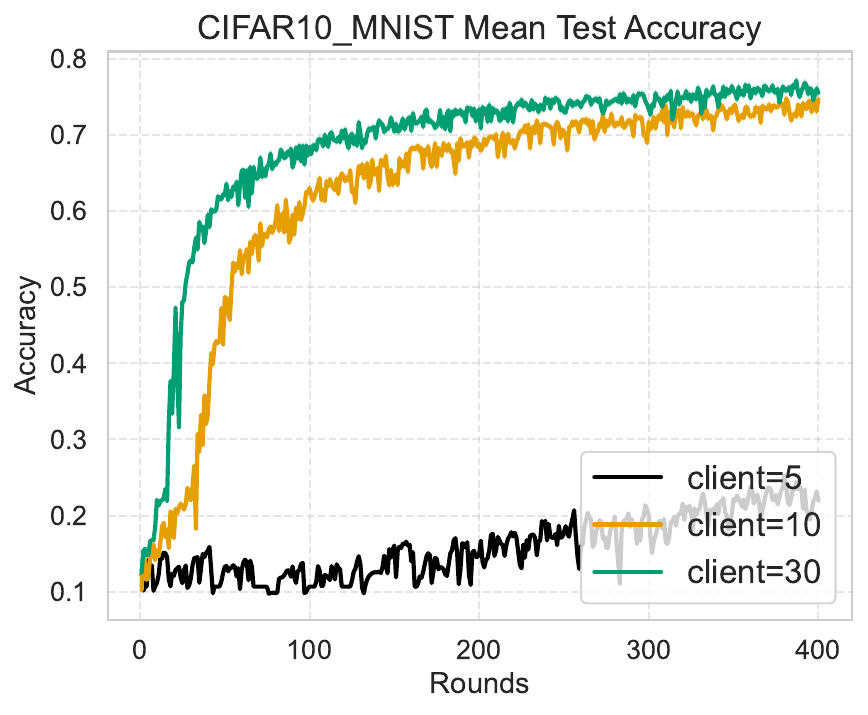}
\end{minipage}
\caption{Mean test accuracy of different client numbers on MultiMNIST, MNIST+FMNIST and CIFAR10+MNIST datasets.}\label{fig:e}
\end{figure}

\begin{figure}[!ht]
\begin{minipage}{0.33\textwidth}
\centering
\includegraphics[width=\linewidth]{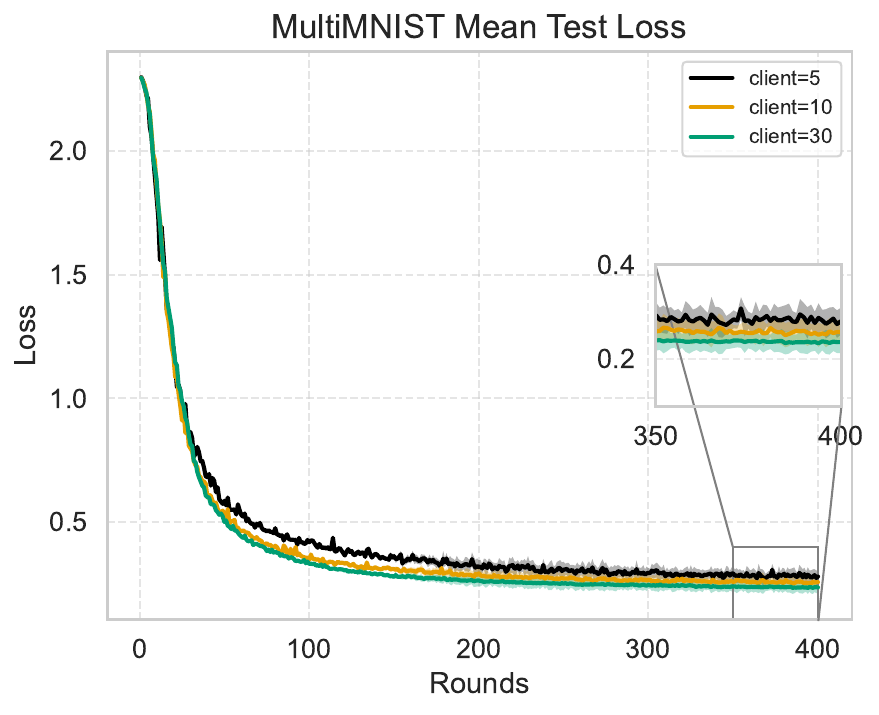}
\end{minipage}\hfill
\begin{minipage}{0.33\textwidth}
\centering
\includegraphics[width=\linewidth]{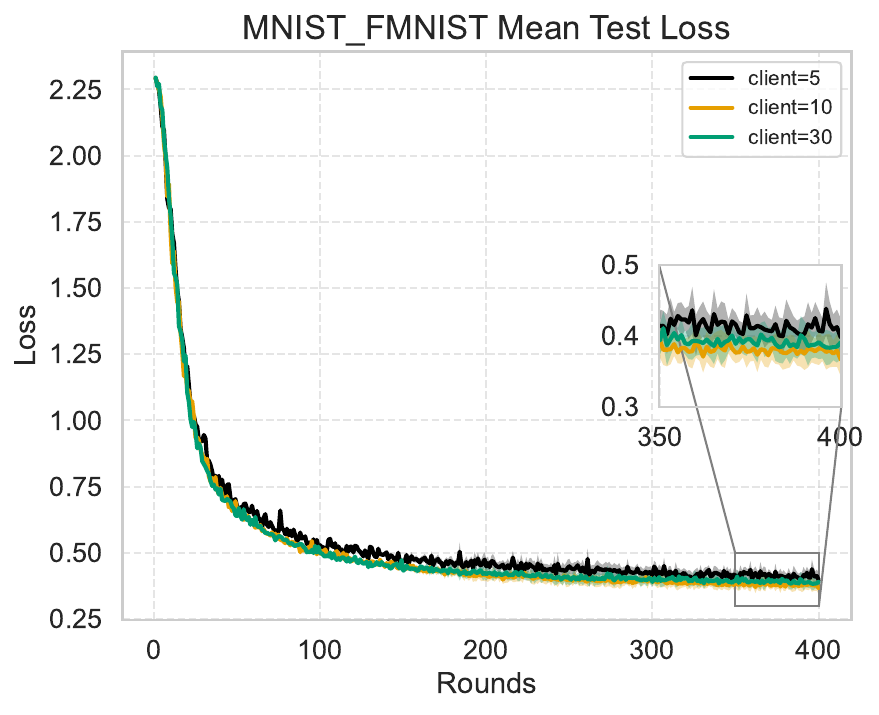}
\end{minipage}\hfill
\begin{minipage}{0.33\textwidth}
\centering
\includegraphics[width=\linewidth]{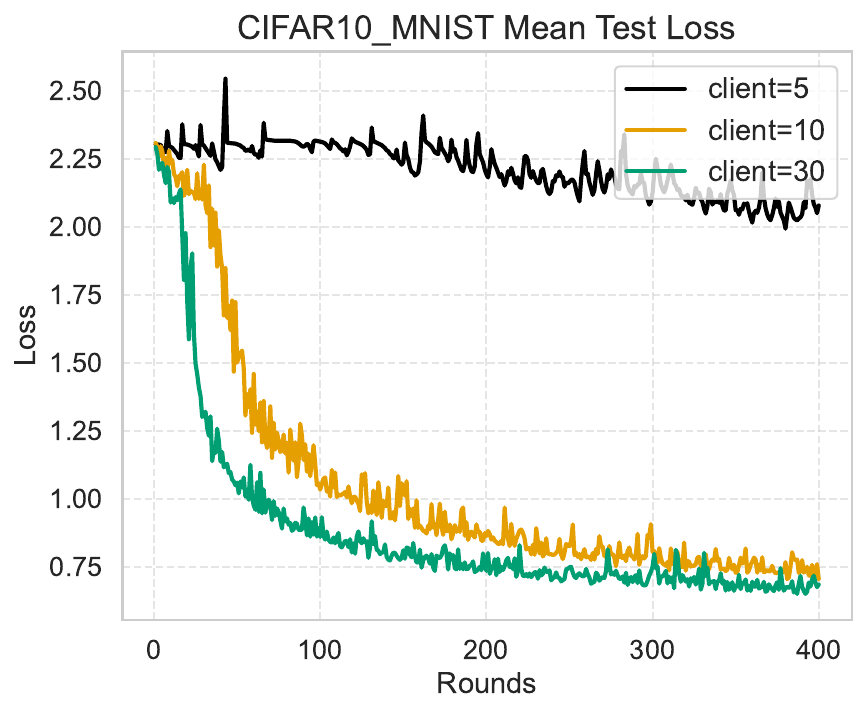}
\end{minipage}
\caption{Mean test loss of different client numbers on MultiMNIST, MNIST+FMNIST and CIFAR10+MNIST datasets. }\label{fig:f}
\end{figure}

\noindent\textbf{3) Impact of local steps $K$:}
As shown in Figures \ref{local:a} and \ref{local:b}, which depict the test accuracy and loss curves, moderately increasing the number of local steps $K$---typically within a low to medium range---effectively accelerates convergence, improves final accuracy, and reduces loss.
For instance, increasing the number of local steps $K$ from 1 to 5 yields clear and consistent improvements in both performance and training stability across all datasets. However, further increasing the number of local steps $K$, for example to 50, yields diminishing returns, with no significant additional improvement in accuracy or reduction in loss. 

\begin{figure}[!ht]
\centering
\begin{minipage}{0.33\textwidth}
\centering
\includegraphics[width=\linewidth]{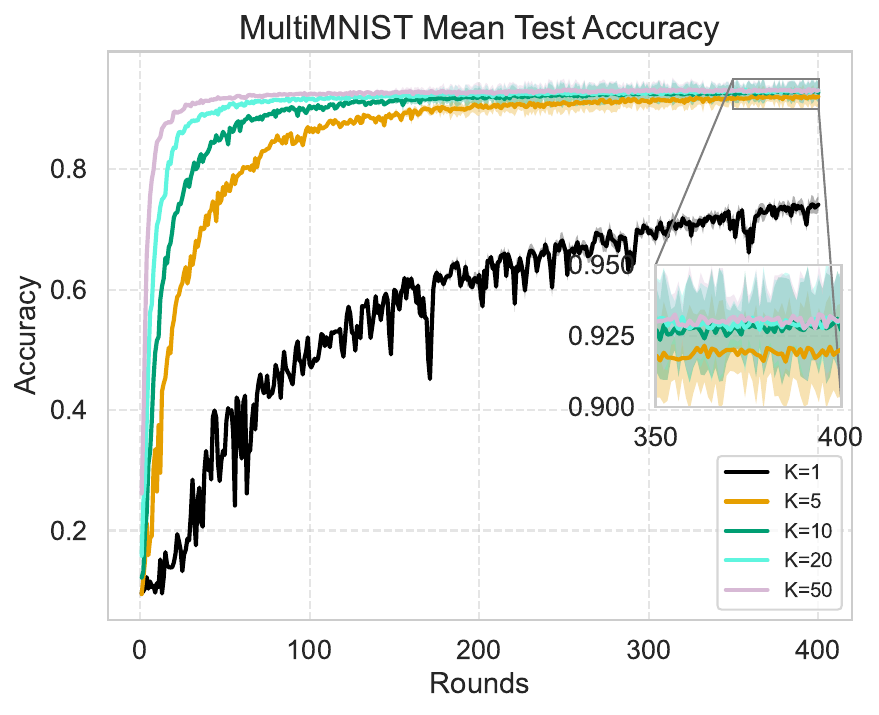}
\end{minipage}\hfill
\begin{minipage}{0.33\textwidth}
\centering
\includegraphics[width=\linewidth]{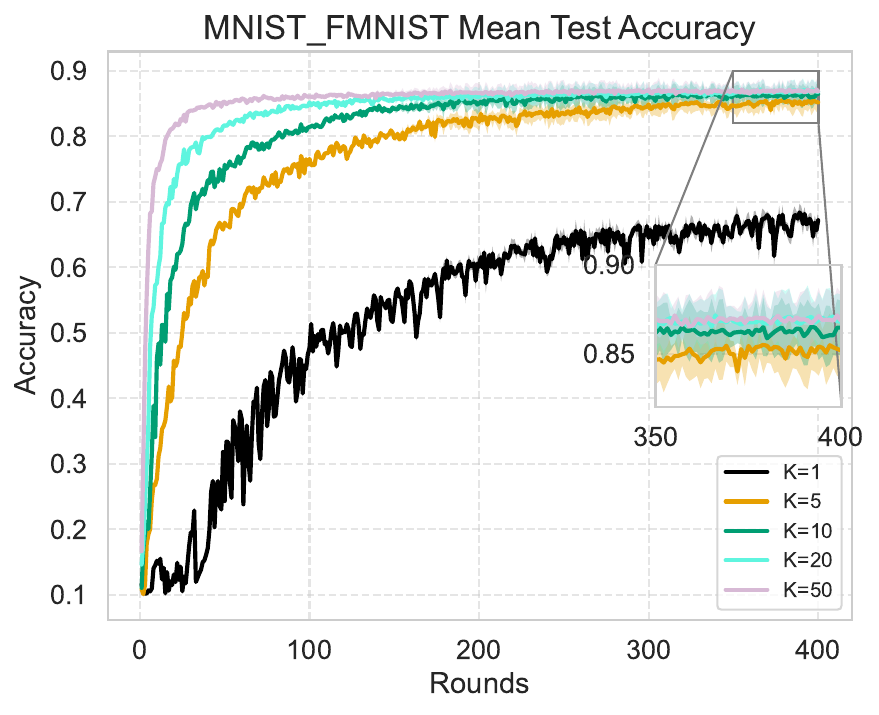}
\end{minipage}\hfill
\begin{minipage}{0.33\textwidth}
\centering
\includegraphics[width=\linewidth]{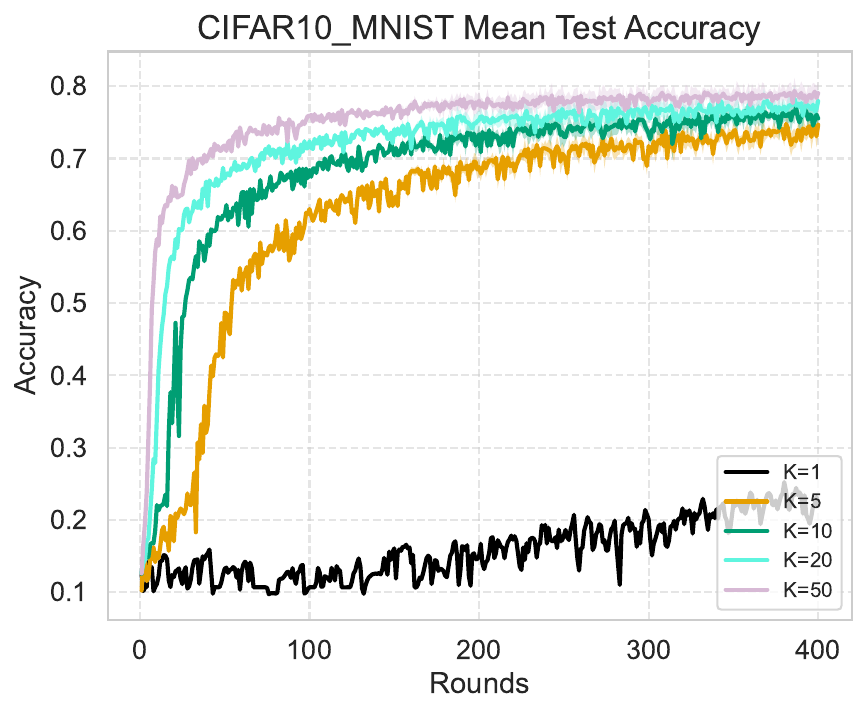}
\end{minipage}
\caption{Mean test accuracy for different numbers of local steps $K$.}\label{local:a}
\end{figure}

\begin{figure}[!ht]
	\centering
\begin{minipage}{0.33\textwidth}
\centering
\includegraphics[width=\linewidth]{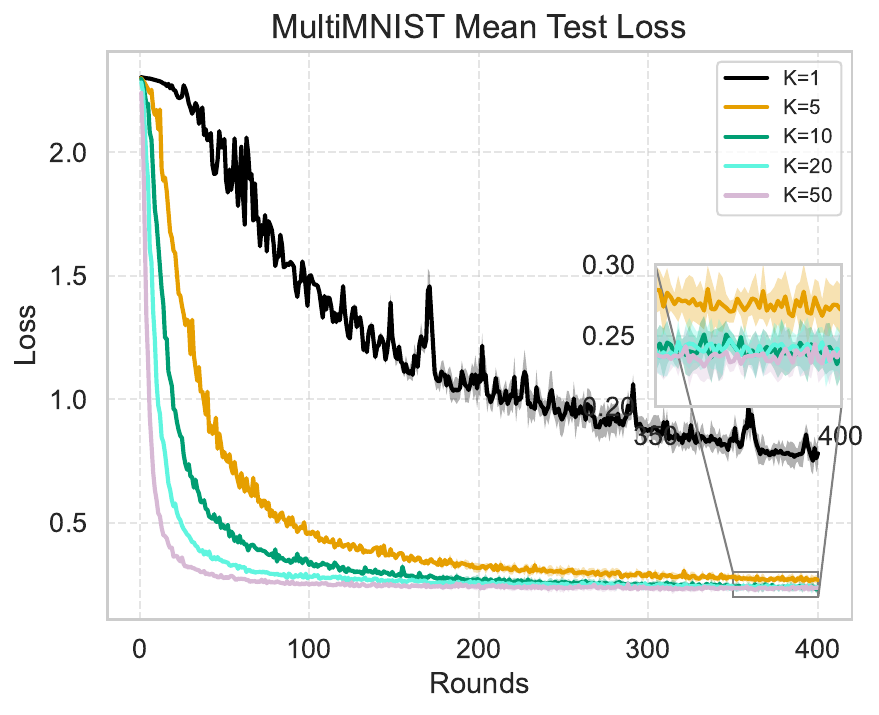}
\end{minipage}\hfill
\begin{minipage}{0.33\textwidth}
\centering
\includegraphics[width=\linewidth]{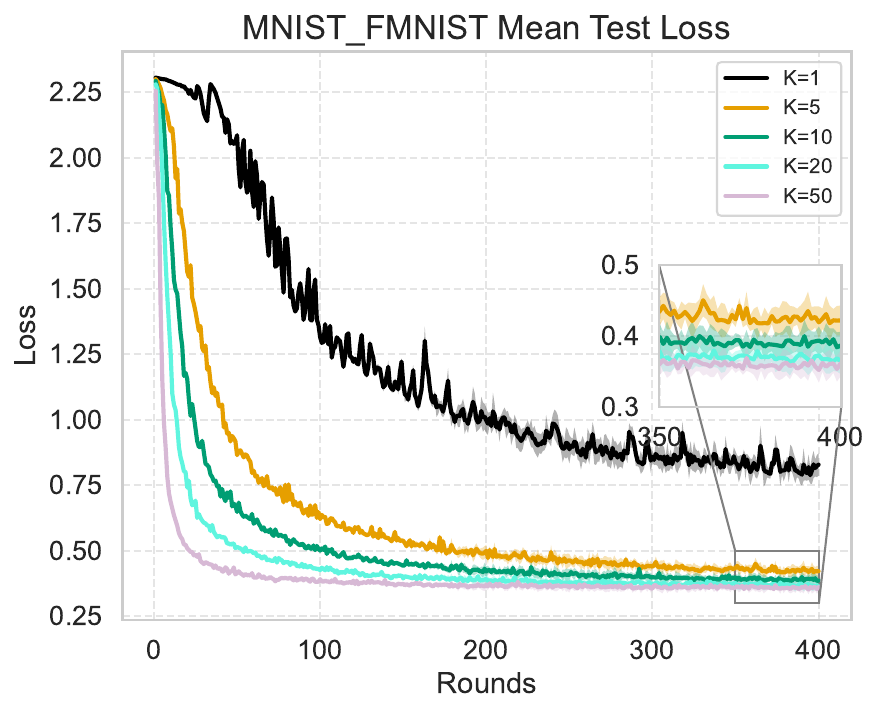}
\end{minipage}\hfill
\begin{minipage}{0.33\textwidth}
\centering
\includegraphics[width=\linewidth]{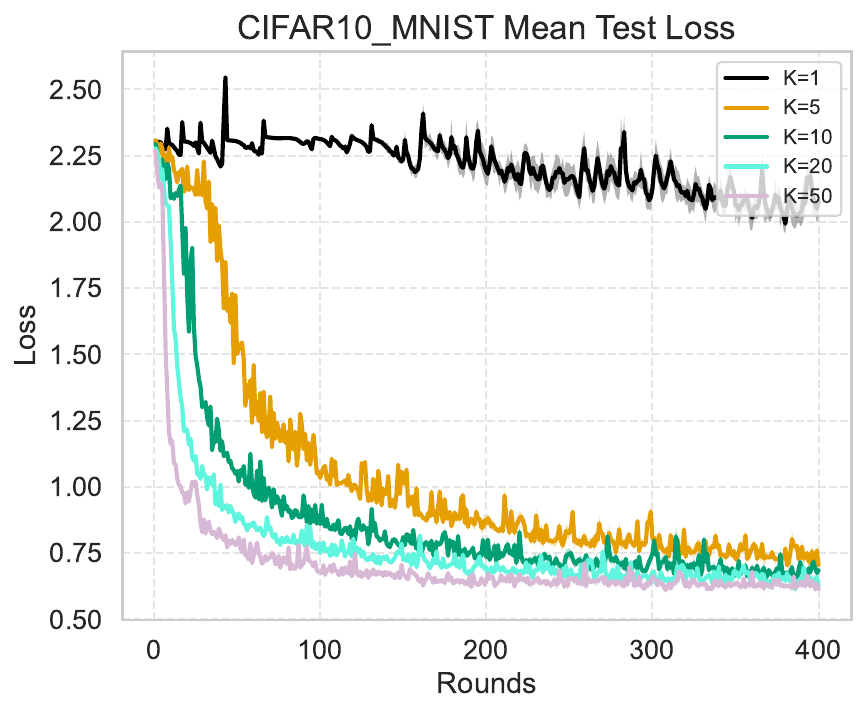}
\end{minipage}
\caption{Mean test loss for different numbers of local steps $K$.}\label{local:b}
\end{figure}

\section{Conclusion}

This paper addressed federated multiobjective optimization by proposing a momentum-based variance-reduced algorithm that incorporates a momentum-driven gradient estimator into the local updates to reduce the variance and bias of stochastic updates. Theoretically, under the stated assumptions, the proposed algorithm achieves a convergence rate of $\mathcal{O}(T^{-2/3})$, improving upon the $\mathcal{O}(T^{-1/2})$ rates of existing methods such as FSMGDA and FedCMOO. Extensive experiments on federated multiobjective optimization benchmarks further demonstrated the practical effectiveness and competitive performance of the proposed method.

\paragraph{Acknowledgements.} This work was supported in part by NSFC (No. 12271067), the Natural Science Foundation of Chongqing (No. CSTB2025NSCQ-LZX0060), the Science and Technology Research Program of Chongqing Education Commission (No. KJQN202400760), and the Australian Research Council (ARC DP230101749).

\end{document}